\documentclass{article} % For LaTeX2e
\usepackage{iclr2026_conference,times}
\usepackage[dvipsnames]{xcolor}
\usepackage{colortbl}

\usepackage{graphicx,wrapfig}
\usepackage{booktabs,nicematrix,makecell,multirow}
\usepackage{caption,subcaption}
\usepackage{float}
\usepackage[export]{adjustbox}

\usepackage{amsmath,amssymb,amsthm,mathtools}
\usepackage{nicefrac}

\theoremstyle{plain}
\newtheorem{theorem}{Theorem}
\newtheorem{proposition}[theorem]{Proposition}
\newtheorem{lemma}[theorem]{Lemma}
\newtheorem{corollary}[theorem]{Corollary}
\newtheorem{assumption}[theorem]{Assumption}

\usepackage{xstring}
\usepackage[latin1]{inputenc}
\usepackage[T1]{fontenc}
\usepackage[normalem]{ulem}

\usepackage{hyperref}
\usepackage{url}
\usepackage{cleveref}
\usepackage{etoolbox}

\crefname{assumption}{assumption}{assumptions}
\Crefname{assumption}{Assumption}{Assumptions}
\crefname{section}{\S}{\S}

\pretocmd{\appendix}{%
\crefalias{section}{appendix}
\crefalias{subsection}{appendix}
\crefalias{subsubsection}{appendix}
\counterwithin{equation}{section}

\counterwithin{figure}{section}

\counterwithin{table}{section}%

}{}{}

\usepackage{listings,verbatim}
\usepackage[most]{tcolorbox}
\usepackage{fancyvrb,fvextra}
\usepackage{tikz}

\tcbuselibrary{listings}
\fvset{fontsize=\scriptsize}
\newtcolorbox{rqbox}{
  colback=Gray!5!white,
  colframe=Gray!80!black,
  size=small,
  boxed title style={empty},
  title=,
}
\newtcolorbox{samplebox}[1]{
  breakable,
  colback=RoyalBlue!5!white,
  colframe=RoyalBlue!75!black,
  title=#1,
  size=small,
  before upper={\fvset{fontsize=\tiny}},
}
\newtcolorbox{tmplbox}[1]{
  colback=RedOrange!5!white,
  colframe=RedOrange!80!black,
  title=#1,
  size=small,
}
\newtcolorbox{promptbox}[1]{
  colback=LimeGreen!5!white,
  colframe=LimeGreen!80!black,
  title=#1,
  size=small,
}
\newtcolorbox[auto counter]{takeawaybox}{
  colback=Green!5!white,
  colframe=Green!80!black,
  title=Takeaways~\thetcbcounter,
  size=small,
}
\newtcolorbox{generationbox}[1]{
  breakable,
  colback=gray!5!white,
  colframe=gray!80!black,
  title=#1,
  size=small,
  before upper={\fvset{fontsize=\tiny}},
}

\definecolor{gold}{RGB}{255, 215, 0}
\definecolor{silver}{RGB}{192, 192, 192}
\definecolor{bronze}{RGB}{205, 127, 50}

\newcommand{\best}[1]{\textbf{#1}}

\newcommand{\gold}[1]{\cellcolor{RoyalBlue!8}\textbf{#1}}
\newcommand{\silver}[1]{\cellcolor{RoyalBlue!4}\underline{#1}}
\newcommand{\bronze}[1]{\cellcolor{RoyalBlue!2}\uwave{#1}}

\newcommand{\norm}[1]{\left\lVert#1\right\rVert}

\newcommand{\ups}[2]{#2}
\newcommand{\up}[2]{#2}
\newcommand{\upl}[2]{#2}
\newcommand{\uph}[2]{#2}
\newcommand{\dns}[2]{#2\makebox[0em][l]{\color{gray}\tiny $\downarrow$}}
\newcommand{\dn}[2]{\cellcolor{Red!0}#2\makebox[0em][l]{\color{Red}\tiny$\downarrow$}}
\newcommand{\dnl}[2]{\cellcolor{Red!0}#2\makebox[0em][l]{\color{Red}${\downarrow}$}}
\newcommand{\df}[2]{%
\IfBeginWith{#1}{-0.}{\dns{#1}{#2}}{%
\IfBeginWith{#1}{-1.}{\dn{#1}{#2}}{%
\IfBeginWith{#1}{-}{\dnl{#1}{#2}}{% anything that is negative
\IfBeginWith{#1}{0.}{\ups{#1}{#2}}{%
\IfBeginWith{#1}{1.}{\up{#1}{#2}}{%
\IfBeginWith{#1}{2.}{\up{#1}{#2}}{%
\IfBeginWith{#1}{3.}{\upl{#1}{#2}}{%
\IfBeginWith{#1}{4.}{\upl{#1}{#2}}{%
\uph{#1}{#2}%
}}}}}}}}}
\newcommand{\dfcell}[1]{
\IfBeginWith{#1}{-}{\cellcolor{Red!2}\color{Red}$\downarrow\StrGobbleLeft{#1}{1}$}{%
\cellcolor{RoyalBlue!2}\color{RoyalBlue}$\uparrow#1$
}}
\newcommand{\upshow}[2]{%
\IfStrEq{#1}{}{#2}{%
#2{\color{RoyalBlue}\tiny\makebox[2em][l]{$\uparrow$#1}}}
}
\newcommand{\pct}[2]{%
\IfBeginWith{#2}{-}{#1{\color{Red}\tiny\makebox[0em][l]{$\downarrow$\StrGobbleLeft{#2}{1}\%}}}{%
#1{\color{RoyalBlue}\tiny\makebox[0em][l]{$\uparrow$#2\%}}
}}

\DeclareMathOperator{\IoU}{IoU}
\newcommand{\kth}[2]{{#1}^{(#2)}}
\newcommand{\Wz}{\kth{\mathbf{W}}{0}}
\newcommand{\Uz}{\kth{\mathbf{U}}{0}}

\newcommand{\Wk}{\kth{\mathbf{W}}{k}}
\newcommand{\Uk}{\kth{\mathbf{U}}{k}}
\newcommand{\Sk}{\kth{\mathbf{\Sigma}}{k}}
\newcommand{\Vk}{\kth{\mathbf{V}}{k}}

\newcommand*{\sval}[1]{\sigma\!\left(#1\right)}
\newcommand{\PM}{\mathbf{M}_{\mathrm{principal}}}
\newcommand{\CM}{\mathbf{M}_{\mathrm{param}}}

\newcommand{\sdpof}{\ensuremath{\text{SDPO}_{5\%}}}
\newcommand{\sdpoz}{\ensuremath{\text{SDPO}_0}}
\newcommand{\sdpocot}{\ensuremath{\text{SDPO}^{\text{CoT}}}}

\newcommand{\sdpozcot}{\ensuremath{\text{SDPO}_0^{\text{CoT}}}}
\newcommand{\sdpomask}[1]{\ensuremath{\text{SDPO}_{5\%}^{\text{M}#1}}}

\newcommand{\dataset}[1]{\textsc{#1}}
\newcommand{\dmath}{\dataset{Math}}
\newcommand{\dchem}{\dataset{Science}}
\newcommand{\dtool}{\dataset{ToolUse}}
\newcommand{\dcode}{\dataset{Coding}}
\usepackage{amsmath,amsfonts,bm}

\def\eqref#1{equation~\ref{#1}}
\def\1{\bm{1}}

\DeclareMathAlphabet{\mathsfit}{\encodingdefault}{\sfdefault}{m}{sl}
\SetMathAlphabet{\mathsfit}{bold}{\encodingdefault}{\sfdefault}{bx}{n}

\DeclareMathOperator*{\argmin}{arg\,min}

\title{Denser $\neq$ Better: Limits of On-Policy Self-Distillation for Continual Post-Training}

\author{Meng Wang\textsuperscript{1},
~Haohan Zhao\textsuperscript{1},
~Wenzhuo Liu\textsuperscript{1},
~Lu Yang\textsuperscript{1},
~Geng Liu\textsuperscript{1},
~Haiyang Guo\textsuperscript{2,3},\\
\textbf{Guo-Sen Xie\textsuperscript{4},~Gaofeng Meng\textsuperscript{1,2,3}, 
~Hongbin Liu\textsuperscript{1,2,3}, 
~Fei Zhu\textsuperscript{1}\thanks{Corresponding author. Email: \texttt{fei.zhu@cair-cas.org.hk}}} \\\\
\textsuperscript{1}Centre for Artificial Intelligence and Robotics, HKISI, CAS~~
\textsuperscript{2}Institute of Automation, CAS\\
\textsuperscript{3}University of Chinese Academy of Sciences~~
\textsuperscript{4}Nanjing University of Science and Technology
}

\definecolor{citecolor}{HTML}{0071bc}
\hypersetup{
    colorlinks=true,%
    citecolor=citecolor,%
    filecolor=citecolor,% % Corrected typo: 'cite color' -> 'citecolor'
    linkcolor=citecolor,%
    urlcolor=citecolor
}

\iclrfinalcopy % Uncomment for camera-ready version, but NOT for submission.
\begin{document}

\maketitle

\begin{abstract}
Continual post-training enables foundation models to acquire new knowledge while preserving existing capabilities. Recent work suggests that on-policy learning can mitigate forgetting, with on-policy self-distillation emerging as a particularly attractive approach. In this work, we revisit this optimistic view through self-distillation policy optimization (SDPO). Our experiments show that SDPO can accelerate in-domain specialization when teacher signals are stable and well aligned, but it struggles to generalize to out-of-distribution scenarios. In continual post-training, SDPO exhibits stronger forgetting and can even collapse, whereas on-policy reinforcement learning methods such as GRPO adapt more conservatively and better preserve prior capabilities. Further analyses reveal that denser self-distillation induces larger drift in both parameter space and response space, and can amplify high-frequency formatting artifacts through a self-reinforcing teacher--student loop. These findings suggest that on-policy data alone is insufficient for continual learning. Dense self-distillation can accelerate specialization when teacher targets are stable and token-level supervision is reliable, but it should not be treated as a default stabilizer for continual post-training. Our code is available at \url{https://github.com/Moenupa/SDPO-CL}.
\end{abstract}

\section{Introduction}

Reinforcement learning with verifiable rewards has emerged as an effective post-training paradigm for large language models (LLMs) \citep{lambert2024tulu,guo2025deepseek}. However, practical systems often require continual post-training across domains and skills \citep{qwen3technicalreport,nvidia_nemotron_3_2025}, raising a significant challenge: models must acquire new capabilities without forgetting prior ones.
Existing work has suggested an optimistic picture for continual post-training: on-policy reinforcement fine-tuning (RFT) forgets less than supervised fine-tuning (SFT) \citep{lai2025reinforcement, chen2025retaining, shenfeld2025rl, zhang2025reinforcement}, and this benefit has often been attributed to the use of on-policy data \citep{chen2025retaining, shenfeld2025rl, zhu2025path}. 

Recently, on-policy distillation has emerged as an attractive alternative to
sparse reward-based reinforcement learning. By converting model-generated
rollouts into dense token-level supervision, these methods promise better credit
assignment, higher sample efficiency, and reduced dependence on external
teachers or explicit reward models
\citep{agarwal2024policy,gu2024minillm,thinkingmachines_on_policy_distillation}.
A particularly appealing variant is on-policy self-distillation, where the model
itself acts as a teacher by conditioning on demonstrations, feedback, solutions,
or other auxiliary context, and the resulting behavior is distilled back into the
same model. For example, self-distillation fine-tuning (SDFT) uses a
demonstration-conditioned model as its own teacher and demonstrates that such
on-policy distillation can acquire new skills while preserving prior capabilities
\citep{shenfeld2026self}. Self-distillation policy optimization (SDPO) similarly
uses a more-informed, feedback-conditioned teacher to provide dense token-level
supervision to the student policy \citep{hubotter2026reinforcement}.
Together, these works suggest that on-policy self-distillation may be a practical recipe for continual learning, combining the on-policy nature of RFT that mitigates forgetting with the efficiency of dense supervised updates for learning new knowledge.

\begin{figure}
\centering
\includegraphics[width=\textwidth]{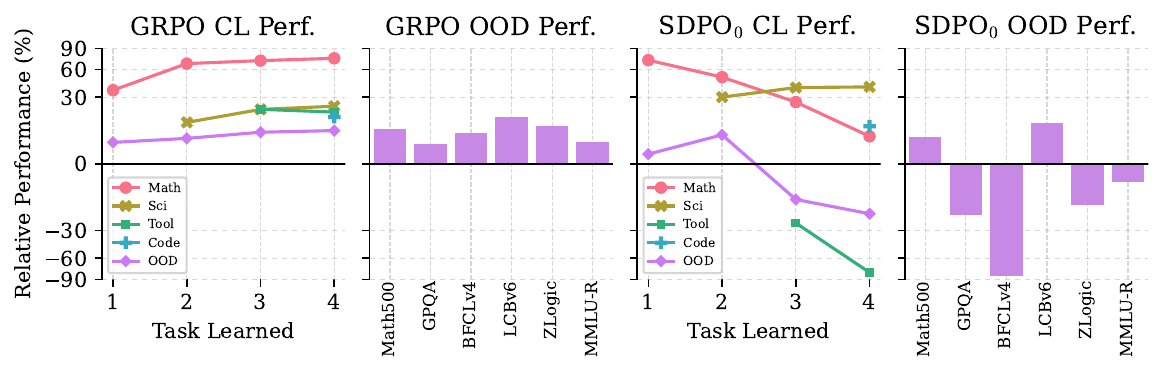}
\vskip -1em
\caption{GRPO versus SDPO relative performance to base model (\%) across sequentially learned tasks (1--4) and out-of-distribution (OOD) benchmarks. The line plots track performance retention across domains (\dmath{}, \dchem{}, \dtool{}, \dcode{}, and macro OOD average) as new tasks are acquired. The bar charts break down final relative performance across specific OOD benchmarks.}
\label{fig:1}
\vskip -1em
\end{figure}

However, this view conflates two distinct ingredients: where the data come from and which objective is used to update the model. On-policy sampling determines the histories on which the model is trained, but the training objective determines which continuation distribution is reinforced at each history. This distinction is especially important for dense self-distillation. When every token becomes a training target, teacher drift, noisy rationales, formatting conventions, and domain-specific artifacts can be reinforced as repeatedly as useful behavior \citep{li2026rethinking,kim2026does}. Thus, on-policy self-distillation may
avoid the off-policy mismatch of SFT, but it does not automatically inherit the
conservative update bias of sequence-level on-policy RFT.

% contribution & findings
This paper studies whether on-policy self-distillation enables reliable continual post-training. 
We focus on self-distillation policy optimization (SDPO, \citealp{hubotter2026reinforcement}) and compare it with sequence-level reward optimization (GRPO, \citealp{shao2024deepseekmath}), from two perspectives:

\begin{rqbox}
\begin{itemize}
\item {What are the key factors that make on-policy self-distillation beneficial? (\Cref{sec:sdpo})}
\item {Does SDPO outperform GRPO in generalization or continual learning? (\Cref{sec:sdpo_vs_grpo})}
\end{itemize}
\end{rqbox}

To answer these questions, we conduct post-training experiments in both single-domain and multi-domain continual settings. 
We vary the supervision density by comparing GRPO, standard SDPO, and SDPO variants with additional teacher strategies (\Cref{sec:sdpo:ema}) or supervision tweaks (\Cref{sec:sdpo:think}), and evaluate on in-distribution and general benchmarks to assess specialization, retention, and transfer (\Cref{sec:sdpo_vs_grpo:single,sec:sdpo_vs_grpo:cl}). 
Finally, we connect behavior to diagnostics---parameter and response drift (\Cref{sec:sdpo_vs_grpo:analysis}), collapse modes (\Cref{sec:sdpo_vs_grpo:collapse}), and theoretical analysis (\Cref{sec:sdpo_vs_grpo:theory}) to interpret our observations.

Our results show that on-policy self-distillation offers a powerful yet fragile training signal (\Cref{fig:1}). 
SDPO can substantially improve performance on the current training domain, but it also increases the risk of drift, interference, and even collapse. 
Our observations suggest that supervision density introduces a trade-off: it strengthens the local learning signal, but also increases sensitivity, domain mismatch, and accumulated artifacts. Our main contributions are:
\begin{itemize}
\item
We separate two factors often conflated in continual post-training: on-policy data and the objective used to turn on-policy rollouts into updates. This lets us ask whether on-policy self-distillation inherits the retention benefits attributed to on-policy RL.

\item 
We show that SDPO is a strong specialization method but a fragile continual-learning signal. Its benefits depend on teacher stability and token reliability; CoT and fast teacher updates can amplify noise and artifacts.

\item
Across single-domain and staged continual post-training, SDPO exhibits weaker retention than GRPO, with forgetting concentrated on neighboring-yet-misaligned tasks and accumulating across domains. 

\item Parameter drift, response drift, collapse analysis, and an excess-KL view diagnose how and why SDPO yields such forgetting behavior.
\end{itemize}

\section{Preliminaries}

\subsection{Post-Training of Large Language Models}

\paragraph{Reinforcement Learning for Post-Training.}
Reinforcement learning is widely used to post-train large language models with preference signals or reward functions derived from human feedback, automated evaluators, or verifiable correctness criteria \citep{stiennon2020learning,ouyang2022training,ziegler2019fine,bai2022constitutional}.
Recent methods, such as direct preference optimization \citep{rafailov2023dpo}, recast preference optimization as a supervised objective, while on-policy algorithms like group relative policy optimization \citep{shao2024deepseekmath} optimize relative advantages over sampled response groups, often improving stability and scalability in large-scale settings.

\vspace{-5pt}\paragraph{On-Policy Distillation.}
On-policy distillation (OPD) methods construct training targets from samples generated by the current policy or its improved variants, thereby converting reinforcement signals into supervised learning updates \citep{agarwal2024policy,gu2024minillm}. 
Recent self-distillation methods remove the need for a separate teacher by running
the same model under different information conditions: the student observes only
the original prompt, while the teacher is augmented with privileged context \(c\).
The privileged signal can come from demonstrations or examples
\citep{snell2022learning,shenfeld2026self}, correct solutions or textual feedback
\citep{hubotter2026reinforcement}, or revision contexts, hints, and task-level
feedback transformations \citep{zhao2026self,ye2026policy}.

\vspace{-5pt}\paragraph{Chain-of-Thought Reasoning.}
Chain-of-thought (CoT) prompting and training encourage LLMs to generate intermediate reasoning before final answers, often improving performance on arithmetic, symbolic, and commonsense tasks \citep{wei2022chain,kojima2022large}. 
Under final-answer supervision, this lengthens the credit-assignment horizon; with step-level or process rewards, it offers denser feedback and improves interpretability \citep{uesato2022solving,lightman2023lets}. CoT also has drawbacks: it increases inference cost, may propagate errors across steps, and the generated rationales need not faithfully reflect the model's actual decision process \citep{turpin2023language}.

\vspace{-5pt}\paragraph{Continual Post-Training.}
Continual learning \citep{guo2025comprehensive} studies how models incorporate new knowledge without catastrophic forgetting of acquired ones.
Common approaches use regularization, replay, or reference-model constraints to mitigate catastrophic forgetting across training stages \citep{kirkpatrick2017overcoming, zhu2021prototype, liu2025c}.
A growing line of work shows that RFT can substantially mitigate forgetting in continual post-training
\citep{lai2025reinforcement,shenfeld2025rl,chen2025retaining}.
For example, \citet{lai2025reinforcement} find that this benefit is not mainly due to explicit KL penalties or chain-of-thought prompting, but rather to the implicit regularization induced by reward-guided policy updates.
Complementarily, \citet{shenfeld2025rl} argue that online RL is biased toward KL-minimal solutions among the policies that solve a new task, thereby reducing the distributional shift from the base model that is predictive of forgetting.
Similarly, \citet{chen2025retaining} identify the use of on-policy data as a key factor behind RL's robustness.

\subsection{Group Relative Policy Optimization}

Group relative policy optimization (GRPO, \citealp{shao2024deepseekmath}) assigns rewards at the sequence level.
Given a prompt $x \sim \mathcal{D}$, GRPO samples a group of $G$ responses $\{y_i\}_{i=1}^G$ from the current policy, i.e., $y_i \sim \pi_{\theta_{\mathrm{old}}}(\cdot \mid x)$, evaluates each response with a reward model to obtain rewards $\{r_i\}_{i=1}^G$, and computes group-relative advantages by normalizing rewards within the group, e.g., $A_i = (r_i-\mathrm{mean}(\{r_j\}_{j=1}^G))/(\mathrm{std}(\{r_j\}_{j=1}^G)+\epsilon)$. Training optimizes a PPO-style clipped objective:
\begin{equation*}
\mathcal{J}_{\mathrm{GRPO}}(\theta)
=
\frac{1}{G}\sum_{i=1}^G
\frac{1}{|y_i|}\sum_{t=1}^{|y_i|}
\min\bigl(
\rho_{i,t} A_i,
\mathrm{clip}(\rho_{i,t},1-\varepsilon,1+\varepsilon)A_i
\bigr)
-
\lambda D_{\mathrm{KL}}\bigl(\pi_\theta(\cdot\mid x)\parallel\pi_{\mathrm{ref}}(\cdot\mid x)\bigr),
\label{eq:grpo}
\end{equation*}
where $\rho_{i,t}=\nicefrac{\pi_\theta(y_{i,t}\mid x,y_{i,<t})}{\pi_{\theta_{\mathrm{old}}}(y_{i,t}\mid x,y_{i,<t})}$ and $\pi_{\mathrm{ref}}$ is a reference policy used for KL regularization. In essence, GRPO is a critic-free policy optimization method that stabilizes reinforcement learning by using relative rewards among multiple sampled responses to the same prompt.

\subsection{On-Policy Self-Distillation}

Self-distillation policy optimization (SDPO, \citealp{hubotter2026reinforcement}) instead provides dense token-level rewards using the same model as both teacher and student. 
For each student-generated response $y\sim\pi_{\theta}(\cdot\mid x)$, the student is trained to align with the more-informed teacher $\pi_{\theta}(\cdot\mid x,c)$ having richer context $c$ (including solutions or other auxiliary signals).
Training minimizes the generalized Jensen--Shannon divergence (JSD, \Cref{eq:jsd}) between student and teacher distributions:
\begin{equation*}
\mathcal{L}_{\mathrm{SDPO}}(\theta) = 
\sum_{t} D_\mathrm{JSD}(\beta) 
\left(
\pi_{\theta}(\cdot \mid x, y_{<t})
\parallel 
\mathrm{stopgrad}\left(\pi_{\theta}(\cdot \mid x, c, y_{<t})\right)
\right).
\end{equation*}
This yields dense supervision: instead of a single reward for the whole sequence, SDPO encourages student distribution at each generated token to align with more informed teacher distribution.

\begin{figure}[t]
\centering
\begin{subfigure}[b]{\textwidth}
\centering
\includegraphics[width=\textwidth]{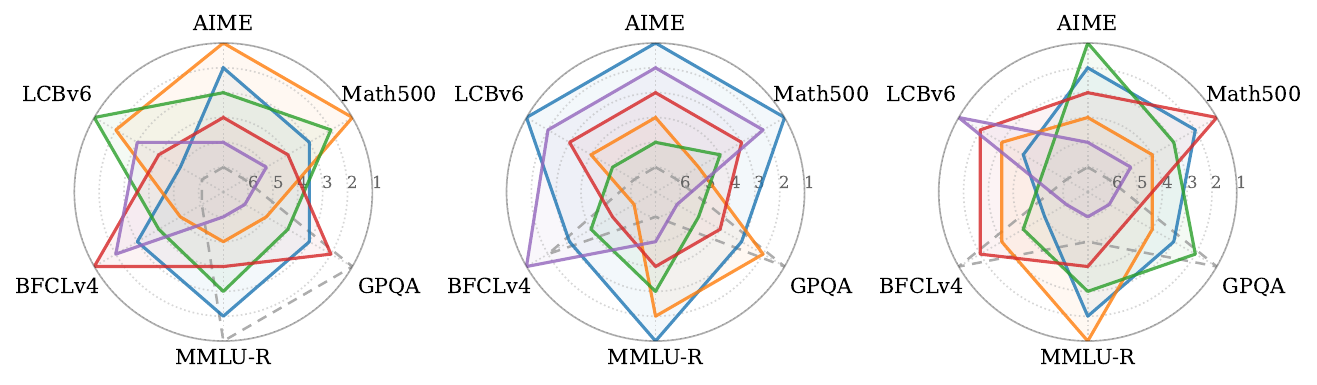}
\vskip -.5em
\caption{Performance ranking across six benchmarks after \dmath{}, \dtool{}, and \dchem{} training.}
\label{fig:sdpoema:result}
\end{subfigure}\hfill
\begin{subfigure}[b]{\textwidth}
\includegraphics[width=\textwidth]{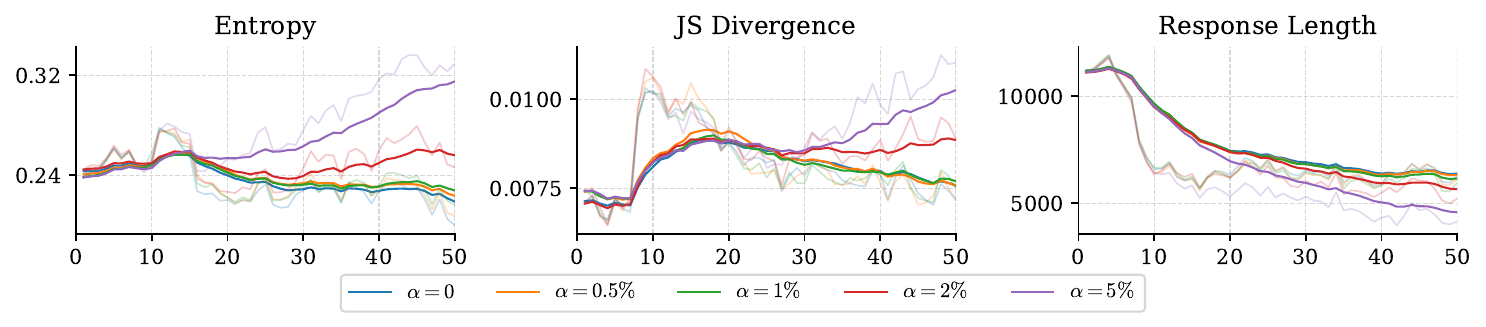}
\vskip -.5em
\caption{Training dynamics on \dmath{}-SDPO with different teacher EMAs
$\alpha\in\{0,0.5\%,1\%,2\%,5\%\}$.}
\label{fig:sdpoema:training}
\end{subfigure}\\
\caption{Effect of SDPO teacher EMA update rate $\alpha$. No single $\alpha$ is globally optimal.
\textbf{(a)} Ranking-based radar plots. A larger area is better. 
\textbf{(b)} Training dynamics during \dmath{}-SDPO. }
\label{fig:sdpoema}
\end{figure}
\section{Token-Level Dense Supervision is Powerful but Sensitive}
\label{sec:sdpo}

First, we claim that SDPO's benefits depend on the quality of the dense signal. 
SDPO distills token logits from the teacher model, improving sample efficiency because every generated token contributes supervision. 
However, it also means that teacher noise, temporal mismatch, and formatting artifacts are reinforced many times within a single response. We therefore begin by asking: \emph{how do token-level reward stability (\Cref{sec:sdpo:ema}) and density (\Cref{sec:sdpo:think}) affect self-distillation?}

\subsection{Token-Level Supervision Requires Fresh but Temporally Stable Teachers}
\label{sec:sdpo:ema}

A natural intuition in SDPO is that the teacher should be updated frequently:
a fresher teacher should better reflect the student's latest improvements and
therefore provide stronger token-level supervision. To test this intuition, we
sweep the EMA update rate
$\alpha \in \{0,0.5\%,1\%,2\%,5\%\}$:
\[
\Theta_{\text{teacher}}^{(k)}
=
(1-\alpha)\Theta_{\text{teacher}}^{(k-1)}
+
\alpha\Theta_{\text{student}}^{(k)} .
\]
Surprisingly, freshness alone does not predict performance. As shown in
\Cref{fig:sdpoema:result}, no single $\alpha$ is uniformly optimal: in
\dmath{}-SDPO, $\alpha=1\%$ gives the best overall profile, while $\alpha=5\%$
performs worst; in \dtool{} training, the frozen teacher is best, although
$\alpha=5\%$ is competitive. This raises a question: \emph{if a larger EMA rate makes
the teacher fresher, why can it hurt SDPO so severely?}

We argue that EMA entangles two different properties of the teacher:
\emph{freshness} and \emph{temporal stability}. A larger $\alpha$ makes the
teacher weights closer to the current student, but it also makes the teacher a
more rapidly moving target. This is particularly risky for dense distillation:
because SDPO applies supervision at every generated token, small teacher
fluctuations can be reinforced repeatedly within a single response. The training
curves in \Cref{fig:sdpoema:training} support this view. Large EMA rates,
especially $\alpha=2\%$ and $\alpha=5\%$, observe late-stage increases in entropy
and JS divergence, indicating growing teacher--student mismatch and less
reliable token targets. Smaller or frozen teachers avoid this instability, but
they introduce the opposite problem: the teacher can become stale and provide
a weakening optimization signal as the student evolves.

To separate freshness from per-step teacher dynamics, we introduce a simple
\emph{restart-and-freeze} strategy named StableSDPO. Instead
of updating the teacher continuously, we periodically refresh it from the current
student and then keep it frozen within the next interval:
\[
\Theta_{\text{teacher}}
\leftarrow
(1-\gamma)\Theta_{\text{teacher}}^{\text{old}}
+
\gamma\Theta_{\text{student}}^{(k)},
\quad 0 < \gamma \leq 1 .
\]
This design preserves teacher freshness at refresh points while removing the
step-to-step volatility of EMA updates. For comparison with EMA, we match the
effective $k$-step refresh amount by
$\gamma \approx 1-(1-\alpha)^k$; for example, $\alpha=5\%$ over $k=10$ steps
corresponds to $\gamma \approx 0.40$.

\begin{table}[t]
\centering\small \setlength{\tabcolsep}{9pt}
\caption{\dmath{}-SDPO performance (Acc@8). 
Restart-and-freeze offers a middle ground: it periodically re-initializes the teacher but prevents per-step noise from being distilled repeatedly.}
\vskip -.5em
\label{tab:sdpoema:restart}
\begin{tabular}{lcccccl}
\toprule
Teacher Strategy & AIME & Math500 & GPQA  & ZLogic & MMLU-R& ~Avg \\
\midrule
--           & 32.71 & 73.85   & 62.94 & 81.62  & 86.02 & 67.43 \\
\midrule
SDPO (EMA $\alpha=0$) & {56.42} & 84.29   & 58.72 & {74.50}  & 85.47 & {71.88} \\
StableSDPO ($\gamma=1$)& \best{57.50} & \best{85.05} & \best{60.86} & \best{74.68}  & \best{86.60} & \upshow{1.06}{\best{72.94}} \\
% \midrule
% EMA $\alpha=1\%$ & {55.90} & 84.29   & 58.44 & {74.03}  & 85.51 & {71.63}\\
% Restart@10, $\gamma=10\%$&53.34&{85.15}&{58.71}&73.18&{86.28}&71.33 \\
\midrule
SDPO (EMA $\alpha=5\%$) & 34.38 & 78.97 & 55.87&\best{73.68}&83.82&65.34\\
StableSDPO ($\gamma=40\%$)&\best{55.00}&\best{84.62}&\best{59.03}&73.50&\best{87.05}&\upshow{6.50}{\best{71.84}} \\
\bottomrule
\end{tabular}
\vskip -.5em
\end{table}

\begin{figure}[t]
\centering
\includegraphics[width=\textwidth]{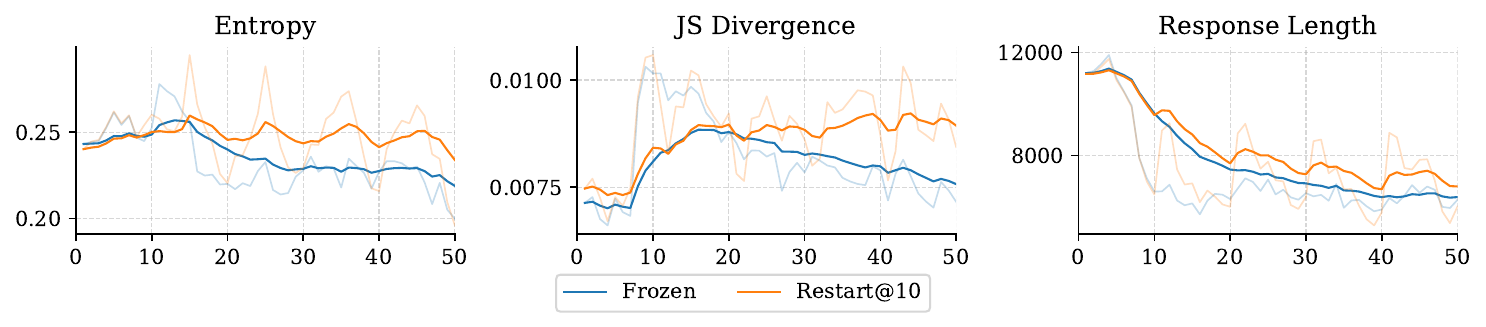}
\vskip -.5em
\caption{Training dynamics during \dmath{}-SDPO with teacher \textit{frozen} versus 10-step-hard-restart.}
\label{fig:sdpoema:restart}
\end{figure}

The results in \Cref{tab:sdpoema:restart} strongly support this decomposition.
Replacing continuous EMA with matched restart-and-freeze substantially repairs
the destructive behavior of $\alpha=5\%$: on AIME, performance improves from
34.38\% to 55.00\%, and the average score improves from 65.34\% to 71.84\%, a gain of
6.50 points. Importantly, StableSDPO also improves over the frozen
teacher: hard restart with $\gamma=1$ outperforms the frozen teacher on all
five benchmarks, improving the average score from 71.88\% to 72.94\%. Thus, the
problem is not that freshness is harmful. Rather, freshness is useful only when
it is delivered through a stable teacher signal.
The training dynamics in \Cref{fig:sdpoema:restart} further illustrate
this point. The frozen teacher produces smooth curves, but its entropy and JS
divergence gradually decline, suggesting that the teacher becomes increasingly
outdated. In contrast, restart-and-freeze produces bounded oscillations aligned
with refresh events: each restart renews the teacher signal, and avoids noisy per-step target changes. The JS divergence remains bounded, with neither late-stage blow-up nor collapse.

\begin{takeawaybox}
Teacher design is central to on-policy self-distillation, with a \textit{stability--freshness} dilemma: a frozen teacher is stable but stale in the long term; a fast EMA teacher is fresh but can amplify online noise. Periodic refresh-and-freeze provides a more reliable dense target.
\end{takeawaybox}

\subsection{CoT Distillation: More Supervision Is Not Always Better}
\label{sec:sdpo:think}
We next consider the supervision density of SDPO, by additionally distilling Chain-of-Thought tokens.  This is a stronger test of the dense-supervision hypothesis: if
more token-level targets were uniformly beneficial, CoT distillation should
consistently improve performance. \Cref{fig:sdpothink:results} shows the
opposite: CoT helps \dtool{} training but hurts \dmath{} and \dchem{} training.

We attribute this contrast to the reliability of intermediate tokens. CoT
distillation does not merely add more supervision; it changes the target from
answer-producing behavior to a particular reasoning trajectory. In long-form
\dmath{} and \dchem{} tasks, such trajectories are underdetermined and often contain
redundant self-checks, uncertain interpretations, formatting habits, or incorrect
intermediate claims. These tokens are weakly tied to final correctness, so
distilling them can dilute credit assignment and encourage imitation of reasoning
artifacts. This is reflected in \dmath{} training by longer responses, higher entropy,
and larger JS divergence (\Cref{fig:sdpothink:math}).

By contrast, \dtool{} traces are short and schema-constrained. CoT therefore acts as
useful process supervision in \dtool{} training, yielding stable dynamics and
improved BFCLv4 performance (\Cref{fig:sdpothink:tool}). Additional EMA
ablations further show that long-CoT supervision becomes especially harmful
when the teacher is updated rapidly (\Cref{appdx:tab:sdpothink}), consistent with a confirmation-bias loop in which noisy reasoning patterns are copied into the teacher and reinforced again.

\begin{takeawaybox}
More supervision is useful only when reliable.
CoT distillation benefits short and structured traces, but can harm long, noisy, or artifact-prone rationales.
\end{takeawaybox}

\begin{figure}[t]
\centering
\begin{subfigure}[b]{\textwidth}
\includegraphics[width=\textwidth]{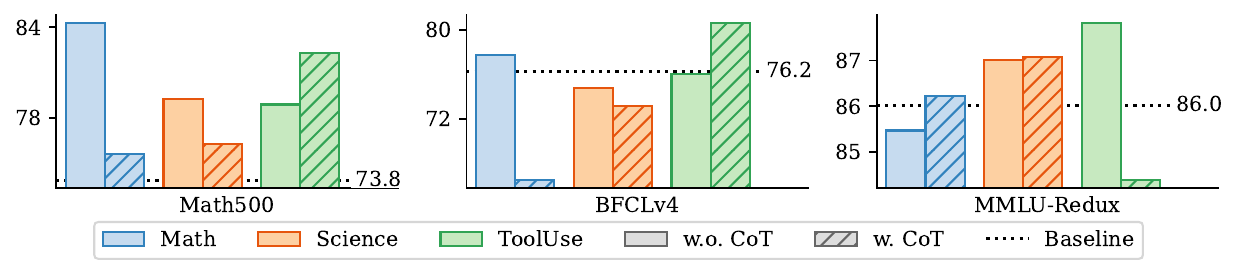}
\vskip -.5em
\caption{SDPO Performance (Acc@8) trained on \dmath{}, \dchem{}, and \dtool{} use against baseline (before training).}
\label{fig:sdpothink:results}
\end{subfigure}\\
\begin{subfigure}[b]{\textwidth}
\includegraphics[width=\textwidth]{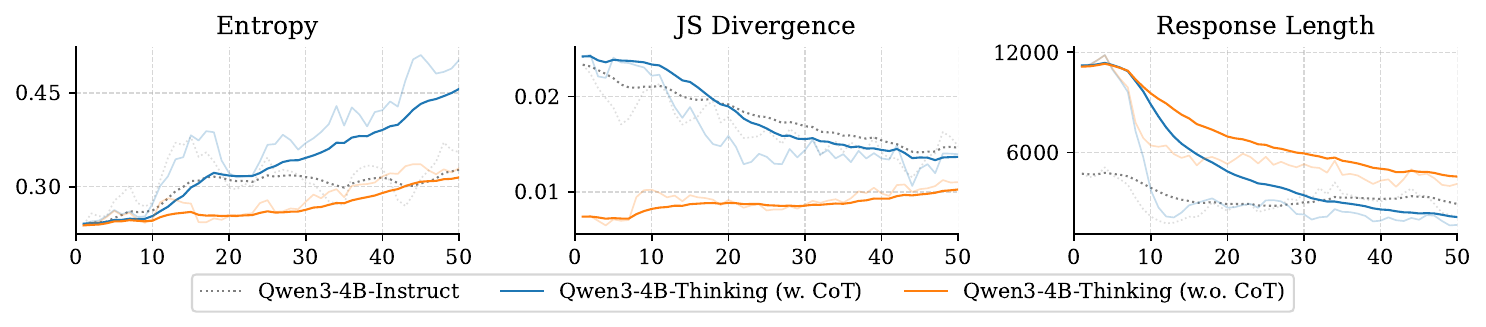}
\vskip -.5em
\caption{SDPO training dynamics on \dmath{} (which requires long CoT).}
\label{fig:sdpothink:math}
\end{subfigure}\\
\begin{subfigure}[b]{\textwidth}
\includegraphics[width=\textwidth]{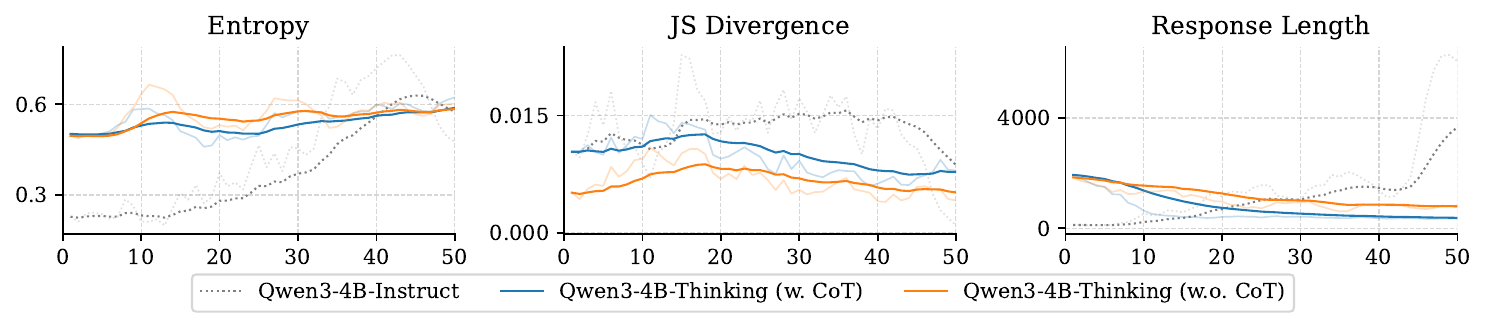}
\vskip -.7em
\caption{SDPO training dynamics on \dtool{} (which requires short CoT).}
\label{fig:sdpothink:tool}
\end{subfigure}\\
\caption{With and without CoT SDPO Performance (Acc@8) and training dynamics.}
\label{fig:sdpothink}
\vskip -.7em
\end{figure}

\section{SDPO Trades Retention for Specialization}
\label{sec:sdpo_vs_grpo}

The previous section reveals that SDPO's dense signal is conditional: it helps only when the teacher is stable and the distilled tokens are reliable. 
We now ask whether these local benefits translate into broader training advantages. \emph{Does token supervision improves out-of-distribution (OOD) generalization (\Cref{sec:sdpo_vs_grpo:single}), and mitigates forgetting in continual post-training (\Cref{sec:sdpo_vs_grpo:cl})?}

\subsection{Rethinking OOD Generalization: Intermediate-Distance Interference}
\label{sec:sdpo_vs_grpo:single}

\begin{table}[t]
\centering\small
\setlength{\tabcolsep}{4pt}
\caption{Performance (Acc@8) trained on a single domain. 
Rows indicate training domain and method; columns indicate evaluation benchmarks.
\sdpoz{} and \sdpof{} denote SDPO with teacher EMA $0$ and $5\%$, respectively.
\sdpozcot{} denotes \sdpoz{} with CoT distilled.}
\vskip -.5em
\label{tab:singledomain:result}
% \documentclass[]{article}
% \input{section/preamble}

% \begin{document}

% \begin{table}[ht]
% \small\setlength{\tabcolsep}{3pt}

\begin{NiceTabular}{llcccccccccc}
\CodeBefore
  % \cellcolor{gray!15}{2-2} % Colors cell in row 2, column 2
\Body
\toprule
&&\multicolumn{2}{c}{\dmath}&\multicolumn{2}{c}{\dchem{}}&\multicolumn{2}{c}{\dtool{}}&\multicolumn{2}{c}{\dcode{}}&\dataset{Logic}&\dataset{Know.}\\
\cmidrule(lr){3-4}\cmidrule(lr){5-6}\cmidrule(lr){7-8}\cmidrule(lr){9-10}\cmidrule(lr){11-11}\cmidrule(lr){12-12}
&Method&{\scriptsize AIME}&{\scriptsize Math500}&{\scriptsize \dchem{}}&{\scriptsize  GPQA  }&{\scriptsize  \dtool{} }&{\scriptsize  BFCLv4}&{\scriptsize  \dcode{} }&{\scriptsize  LCBv6 }&{\scriptsize  ZLogic }&{\scriptsize  MMLU-R} \\
\midrule
&--&32.71&73.85&50.54&62.94&48.90&76.25&56.39&46.73&81.62&86.02  \\
\midrule\multirow{3}{*}{\rotatebox[origin=c]{90}{{\scriptsize\dmath}}}
&GRPO&\df{11.96}{44.67}&\df{2.99}{76.84}&\df{0.79}{51.33}&\df{-0.11}{62.83}&\df{1.58}{50.48}&\df{2.08}{78.33}&\df{4.34}{60.73}&\df{6.20}{52.93}&\df{-0.80}{80.82}&\df{-0.28}{85.74} \\
&\sdpof&\df{1.67}{34.38}&\df{5.12}{78.97}&\df{-5.54}{45.00}&\df{-7.07}{55.87}&\df{4.41}{53.31}&\df{1.87}{78.12}&\df{-4.02}{52.37}&\df{4.13}{50.86}&\df{-7.94}{73.68}&\df{-2.20}{83.82} \\
&\sdpoz&\df{23.71}{56.42}&\df{10.44}{84.29}&\df{-3.18}{47.36}&\df{-4.22}{58.72}&\df{3.16}{52.06}&\df{1.46}{77.71}&\df{1.19}{57.58}&\df{1.79}{48.52}&\df{-7.12}{74.50}&\df{-0.55}{85.47} \\
\midrule\multirow{3}{*}{\rotatebox[origin=c]{90}{{\scriptsize\dchem{}}}}
&GRPO&\df{2.29}{35.00}&\df{-0.38}{73.47}&\df{8.51}{59.05}&\df{-0.44}{62.50}&\df{-0.19}{48.71}&\df{-1.25}{75.00}&\df{-0.86}{55.53}&\df{-0.83}{45.90}&\df{5.78}{87.40}&\df{1.48}{87.50} \\
&\sdpof&\df{20.63}{53.34}&\df{4.57}{78.42}&\df{21.72}{72.26}&\df{-5.55}{57.39}&\df{2.39}{51.29}&\df{-3.96}{72.29}&\df{4.27}{60.66}&\df{9.38}{56.11}&\df{9.06}{90.68}&\df{-1.92}{84.10} \\
&\sdpoz&\df{22.92}{55.63}&\df{5.40}{79.25}&\df{18.92}{69.46}&\df{-1.32}{61.62}&\df{1.65}{50.55}&\df{-1.46}{74.79}&\df{3.62}{60.01}&\df{5.21}{51.94}&\df{8.18}{89.80}&\df{0.98}{87.00} \\
\midrule\multirow{3}{*}{\rotatebox[origin=c]{90}{{\scriptsize\dtool{}}}}
&GRPO&\df{5.84}{38.55}&\df{2.75}{76.60}&\df{-1.43}{49.11}&\df{-0.69}{62.25}&\df{9.00}{57.90}&\df{3.33}{79.58}&\df{1.49}{57.88}&\df{0.96}{47.69}&\df{-4.27}{77.35}&\df{-1.07}{84.95} \\
&\sdpof&\df{15.00}{47.71}&\df{4.72}{78.57}&\df{-3.99}{46.55}&\df{-5.55}{57.39}&\df{3.86}{52.76}&\df{1.46}{77.71}&\df{-1.45}{54.94}&\df{9.17}{55.90}&\df{2.40}{84.02}&\df{0.23}{86.25} \\
&\sdpoz&\df{15.00}{47.71}&\df{5.03}{78.88}&\df{0.11}{50.65}&\df{-2.21}{60.73}&\df{6.61}{55.51}&\df{-0.21}{76.04}&\df{4.61}{61.00}&\df{11.75}{58.48}&\df{4.86}{86.48}&\df{1.80}{87.82} \\
\midrule\multirow{3}{*}{\rotatebox[origin=c]{90}{{\scriptsize\dcode{}}}}
&GRPO&\df{0.83}{33.54}&\df{-0.15}{73.70}&\df{1.42}{51.96}&\df{-0.31}{62.63}&\df{1.47}{50.37}&\df{-0.83}{75.42}&\df{1.45}{57.84}&\df{0.08}{46.81}&\df{0.28}{81.90}&\df{-0.50}{85.52} \\
&\sdpof&\df{16.46}{49.17}&\df{4.13}{77.98}&\df{-2.62}{47.92}&\df{-4.61}{58.33}&\df{0.73}{49.63}&\df{-2.29}{73.96}&\df{5.71}{62.10}&\df{11.74}{58.47}&\df{-1.09}{80.53}&\df{1.23}{87.25} \\
&\sdpoz&\df{16.88}{49.59}&\df{3.65}{77.50}&\df{-0.54}{50.00}&\df{-2.65}{60.29}&\df{2.57}{51.47}&\df{2.92}{79.17}&\df{7.79}{64.18}&\df{13.39}{60.12}&\df{0.96}{82.58}&\df{0.63}{86.65} \\
\bottomrule
\end{NiceTabular}

% \caption{Caption}
% \label{tab:placeholder}
% \end{table}

% \end{document}
\vskip -.5em
\end{table}

\begin{figure}[t]
\centering
\includegraphics[width=\textwidth]{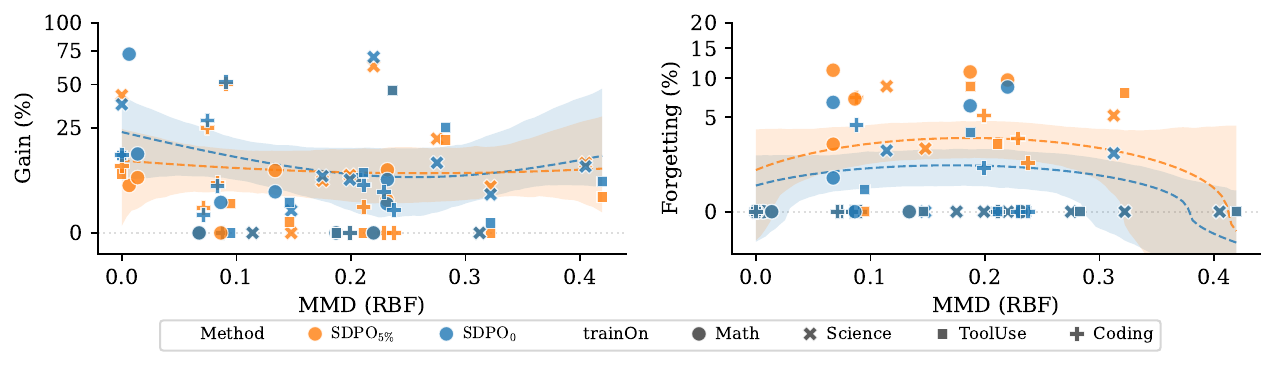}
\vskip -.5em
\caption{Relative Performance change as gain $G$ and forgetting $F$ with respect to Maximum Mean Discrepancy with a Radial Basis Function kernel.}
\label{fig:mmd}
\vskip -.5em
\end{figure}

SDPO's main advantage is specialization, not uniformly better generalization (\Cref{tab:singledomain:result}).
Under \dmath{} training, \sdpoz{} is the strongest math specialist, raising AIME from 32.71\% to 56.42\% and Math500 from 73.85\% to 84.29\%. Similarly, \dcode{}-\sdpoz{} improves LCBv6 by 13.39\%. These results confirm that token-level supervision can strongly exploit source-like evaluation distributions.

However, this specialization does not translate into uniform retention. 
We decompose relative performance change into positive \textit{gain} and negative \textit{forgetting} (\Cref{eq:gain,eq:forgetting}), and demonstrate that the SDPO-induced forgetting pattern is non-monotonic, as \Cref{fig:mmd} plots gain and forgetting against the distributional distance measured by Maximum Mean Discrepancy (MMD, \citealp{gretton2012kernel}). 
Source-like tasks (e.g., $\textrm{MMD}\lesssim0.05$) and very distant tasks (e.g., $\textrm{MMD}\gtrsim0.4$) mostly improve, while the largest drops concentrate in intermediate-distance tasks.

GPQA is the clearest intermediate-forget example: it is not the farthest benchmark, but consistently degrades after SDPO training from multiple source domains. 
We interpret this pattern as evidence that GPQA may lie in an interference region: it is close enough to be affected by source-domain updates, yet insufficiently aligned to benefit from them. 
More broadly, we observe SDPO degradation on \dmath{}-to-\dchem{}, \dmath{}-to-ZLogic, and \dchem{}-to-BFCLv4 (more cases are in \Cref{tab:intermediate_cases}).

These results suggest that SDPO's OOD failures are consistent with an intermediate-distance interference pattern, rather than a simple monotonic function of distributional distance.
Distillation specializes the model toward the source-domain response manifold.
When an OOD task is very close, this specialization transfers.
When it is very far, the perturbation may not strongly affect it.
The most vulnerable tasks are partially aligned OOD tasks: close enough to be changed, but misaligned enough not to benefit. This non-monotonic pattern is consistent with prior theory of continual learning with linear models \citep{lin2023theory,goldfarbjoint}, and our results show that a similar interference regime emerges in LLM continual post-training,

\begin{takeawaybox}
For OOD generalization, dense supervision improves source-aligned tasks, leaves distant tasks relatively unaffected or even improved, but most strongly interferes with partially aligned tasks that are close enough to be shifted and misaligned enough not to benefit.
\end{takeawaybox}

\subsection{Continual Learning Exposes the Accumulated Cost of Dense Supervision}
\label{sec:sdpo_vs_grpo:cl}

\begin{figure}[t]
\centering
\includegraphics[width=\textwidth]{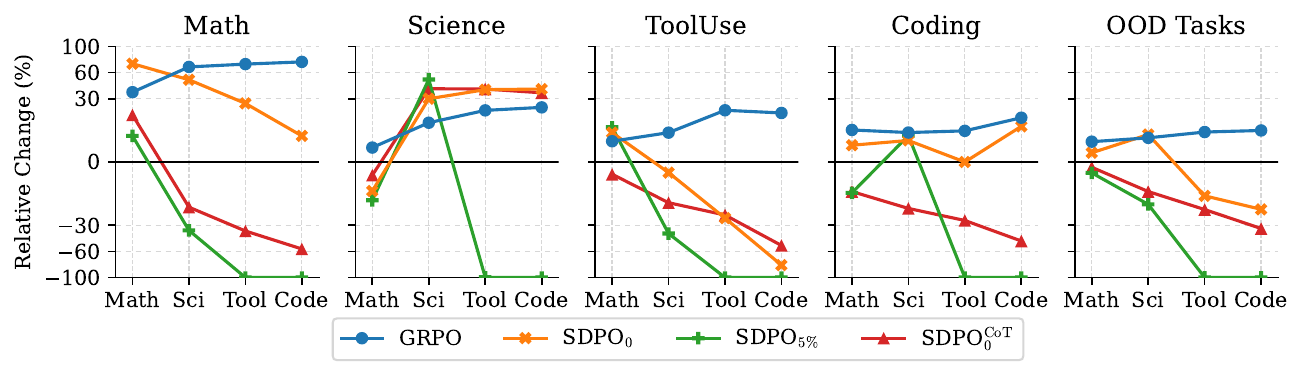}
\vskip -1em
\caption{Normalized performance percentage change ($y$-axis) across continual training stages ($x$-axis).
GRPO almost monotonically improves, while SDPO downshifts more often.
Each subplot denotes one evaluation benchmark: \dmath{}, \dchem{}, \dtool{}, \dcode{}, and out-of-domain tasks.}
\label{fig:cl}
\end{figure}

\begin{table}[t]
\centering\small \setlength{\tabcolsep}{6pt}
\caption{Continual learning performance (Acc@8) of SDPO versus GRPO. SDPO often wins early (better current-stage), but GRPO wins over time (better last-stage).}
\vskip -.5em
\label{tab:cl}
\begin{tabular}{lc|cccccc}
\toprule
Method&Stage&{\dmath{}}&{\dchem{}}&{\dtool{}}&{\dcode{}}&\dataset{Logic}&\dataset{Knowledge}\\
\midrule
Baseline&--&{32.71}&{50.54}&{48.90}&{56.39}&{81.62}&{86.02}\\
%\multirow{2}{*}{GRPO}
%&Current&44.67&56.39&58.71&--&--&--\\
%&Last&57.29&61.85&57.72&64.68&89.52&88.70\\
%\multirow{2}{*}{{\sdpoz}}
%&Current&56.42&65.70&37.35&--&--&--\\
%&Last&34.38&70.71&9.93&61.77&72.15&84.20\\
%\multirow{2}{*}{{\sdpozcot}}
%&Current&38.13&70.89&38.60&--&--&--\\
%&Last&14.17&68.69&23.35&30.01&59.95&81.62\\
%\multirow{2}{*}{{\sdpof}}
%&Current&34.38&76.37&0.00&--&--&--\\
%&Last&0.00&0.00&0.00&0.00&0.02&0.18\\
\midrule\multirow{2}{*}{GRPO}
&Current&\pct{44.67}{37}&\pct{56.39}{12}&\pct{58.71}{20}&{--}&{--}&{--}\\
&Last   &\pct{57.29}{75}&\pct{61.85}{22}&\pct{57.72}{18}&\pct{64.68}{15}&\pct{89.52}{10}&\pct{88.70}{3}\\
\midrule\multirow{2}{*}{{\sdpoz}}
&Current&\pct{56.42}{72}&\pct{65.70}{30}&\pct{37.35}{-24}&{--}&{--}&{--}\\
&Last   &\pct{34.38}{5}&\pct{70.71}{40}&\pct{9.93}{-80}&\pct{61.77}{10}&\pct{72.15}{-12}&\pct{84.20}{-2}\\
\midrule\multirow{2}{*}{{\sdpozcot}}
&Current&\pct{38.13}{17}&\pct{70.89}{40}&\pct{38.60}{-21}&{--}&{--}&{--}\\
&Last   &\pct{14.17}{-57}&\pct{68.69}{36}&\pct{23.35}{-52}&\pct{30.01}{-47}&\pct{59.95}{-27}&\pct{81.62}{-5}\\
\midrule\multirow{2}{*}{{\sdpof}}
&Current&\pct{34.38}{5}&\pct{76.37}{51}&\pct{0.00}{-100}&{--}&{--}&{--}\\
&Last   &\pct{0.00}{-100}&\pct{0.00}{-100}&\pct{0.00}{-100}&\pct{0.00}{-100}&\pct{0.02}{-100}&\pct{0.18}{-100}\\
\bottomrule
\end{tabular}
\end{table}

Single-domain interference becomes more severe when updates are composed across
domains. We therefore train models sequentially on \dmath{} $\rightarrow$ \dchem{}
$\rightarrow$ \dtool{} $\rightarrow$ \dcode{} and evaluate after each stage. We report
both \emph{Current}, i.e., performance on the current domain immediately
after training it, and \emph{Last}, i.e., performance after the full
training sequence.

The continual setting reverses the single-domain picture. \sdpoz{} is a strong early
specializer: it reaches 56.42\% on \dmath{} and 65.70\% on \dchem{}, exceeding
GRPO's 44.67\% and 56.39\%. However, these gains do not survive later domain shifts:
after the full sequence, \sdpoz{} drops to 34.38\% on \dmath{} and 9.93\% on \dtool{}, and
also falls below the base model on \dataset{Logic} and \dataset{Knowledge}. \sdpozcot{} degrades even
earlier, consistent with the instability of long-CoT token targets. The most severe
case is \sdpof{}, which collapses after the third stage and remains near zero. By
contrast, GRPO adapts more conservatively but retains capabilities more reliably,
achieving final improvements over the base model across all evaluated categories
(\Cref{fig:cl,tab:cl}).

These results argue that single-domain specialization is not evidence of continual
learning. SDPO's dense token-level updates can accelerate local adaptation, but the
same updates also accumulate drift, interference, and artifacts across domain shifts.
Sequence-level GRPO is less aggressive, but its more selective reward signal better
supports long-horizon retention.

\begin{takeawaybox}
For continual post-training, dense token-level supervision improves local adaptation but is not inherently retention-preserving; when composed across tasks, it can accumulate drift and interference that lead to forgetting.
\end{takeawaybox}

\section{Understanding SDPO in Continual Post-training}
\subsection{Dense Supervision Causes Larger Parameter and Response Drift}
\label{sec:sdpo_vs_grpo:analysis}

The benchmarks report that SDPO adapts more aggressively than GRPO. 
To understand why, we look inside the model and ask whether SDPO's greater behavioral changes correspond to larger parameter shifts.
For each LLM weight matrix (e.g., self-attention $V$) at training step $k$: $\Wk \in \mathbb{R}^{m \times n}$, we compute the compact singular value decomposition:
\begin{equation*}
\Wk = \Uk \Sk {\Vk}^{\top},
\end{equation*}
where $\Uk \in \mathbb{R}^{m \times d}$, $\Vk \in \mathbb{R}^{n \times d}$, and
$d=\min(m,n)$. The SVD decomposes each matrix into input-output directions and their associated spectral strengths, and we use it to quantify how training changes the geometry, spectral magnitude, subspace orientation, and dominant parameter regions of the model.
Specifically, we calculate the following metrics between $\Wz$ and $\Wk$:

\begin{itemize}
\item \emph{Normalized spectral shift}:
$\ell_2$-normalized drift of the singular-value profile in \Cref{eq:nss}.
Larger values indicate substantial strength changes in dominant directions.

\item \emph{Principal-angle rotation}:
maximum rotation within top-$k$ left singular subspaces, as shown in \Cref{eq:maxpa}.
A small value implies better preservation of dominant subspaces.

\item \emph{Principal-mask overlap}:
intersection-over-union of principal masks before and after training per \Cref{eq:pmoverlap}.
High values mean dominant entries remain important.

\item \emph{Principal-mask update sparsity}:
intersection-over-union of the initial principal mask and the changed-parameter mask in \Cref{eq:cmoverlap}.
A high value demonstrates that training directly modifies structurally important parameters.
\end{itemize}

\begin{figure}[htb]
\centering
\hfill\begin{subfigure}[t]{0.49\textwidth}
\centering
\includegraphics[width=\textwidth,valign=t]{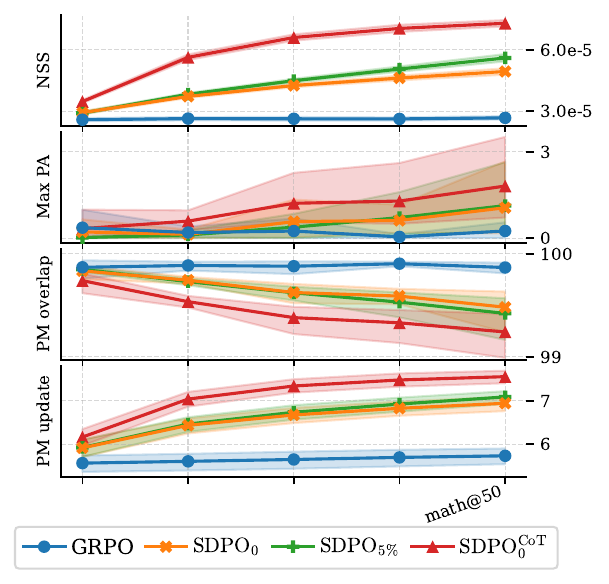}
\vskip -0.5em
\caption{Self-Attention $V$ shift during \dmath{} training.}
\label{fig:svd:math}
\end{subfigure}
\begin{subfigure}[t]{0.49\textwidth}
\includegraphics[width=\textwidth,valign=t]{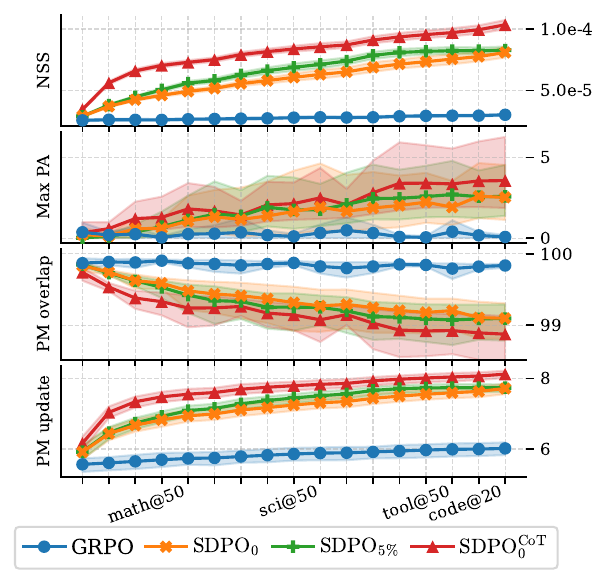}
\vskip -0.5em
\caption{Self-Attention $V$ shift during continual learning.}
\label{fig:svd:cl}
\end{subfigure}\\
\begin{subfigure}[t]{0.48\textwidth}
\includegraphics[width=0.9\textwidth]{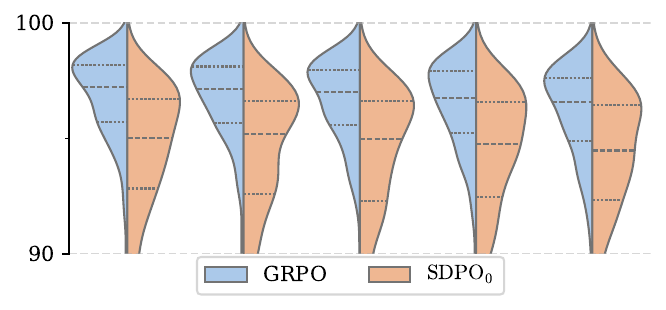}
\vskip -0.5em
\caption{AIME embedding shift during \dmath{} training, $x$-axis denote checkpoints at steps $k=10,20,\dots,50$ from left to right.}
\label{fig:rollout:math}
\end{subfigure}\hfill
\begin{subfigure}[t]{0.48\textwidth}
\includegraphics[width=0.9\textwidth]{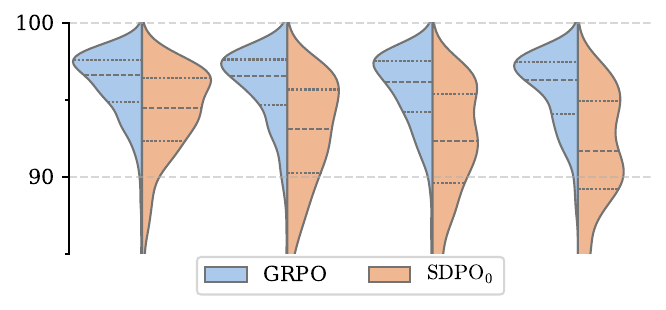}
\vskip -0.5em
\caption{AIME embedding shift during continual learning, $x$-axis denote checkpoints after tasks \dmath{}, \dchem{}, \dtool{}, and \dcode{} from left to right.}
\label{fig:rollout:cl}
\end{subfigure}\\
\caption{\textbf{(a, b)} 
Model parameter drift ($y$-axis) against training steps ($x$-axis).
Metrics are averaged across all layers with 95\% confidence interval.
NSS, PA, PM denote normalized spectral shift, principal-angle, and principal-mask, respectively.
\textbf{(c, d)} Cosine similarity (\%) of rollout(@8) embeddings before and after
{\color{RoyalBlue}GRPO (left half)} versus {\color{Orange}\sdpoz{} (right half)} training.}
\label{fig:analysis}
\vskip -1em
\end{figure}

\Cref{fig:svd:math,fig:svd:cl} highlight a consistent drift hierarchy. 
GRPO changes the model least: its spectral shift and subspace rotation remain small, and its principal-mask overlap stays near $99.9\%$.
SDPO variants move farther, with \sdpozcot{} producing the largest subspace change.
This pattern supports an accumulated-drift explanation of continual-learning failure: 
SDPO produces notably larger spectral shifts and more principal-parameter modifications than GRPO.
The ordering $\sdpozcot>\sdpof>\sdpoz>\text{GRPO}$ matches the aggressiveness of the supervision signal: CoT increases token coverage, while fast EMA increases target non-stationarity. This supports the view that denser or less stable token-level targets drive stronger internal updates.

Parameter drift is mirrored in the model's outputs. 
Response-embedding similarities remain tightly concentrated under GRPO, whereas SDPO shifts the distribution downward and develops heavier lower tails (\Cref{fig:rollout:math,fig:rollout:cl}, embedding computed by Qwen3-Embedding-4B, \citealp{qwen3embedding}).
This suggests that SDPO does not merely improve selected answers; it reshapes the response manifold more strongly than sequence-level GRPO.

\begin{takeawaybox}
Faster adaptation comes with larger drift.
SDPO reshapes parameters and responses more than GRPO, which helps specialization but hurts continual stability.
\end{takeawaybox}

\subsection{Collapse as Artifact Amplification Under Dense Self-Distillation}
\label{sec:sdpo_vs_grpo:collapse}

Finally, we diagnose one extreme failure mode as a case of artifact amplification in self-distillation.
\sdpof{} collapses with a visible generation artifact: endlessly repeated ``{\textbackslash{}boxed}'' tokens.
This behavior coincides with a change in final-answer formatting requirement at the third stage, where \dtool{} differs from earlier \dmath{} and \dchem{} stages (data samples in \Cref{appdx:sec:data}). The collapsed model is over-amplifying the high-frequency pattern from earlier on-policy training stages, as JS divergence (\Cref{fig:cl:training}) plummets in the third stage, even to zero within the first few steps of \dtool{} training. More experiments show that reordering data or masking tokens only partially mitigates this collapse (\Cref{appdx:tab:cl,appdx:tab:cl_collapse}).

\begin{wrapfigure}{r}{.51\textwidth}
\centering
\vskip -1em
\includegraphics[width=.51\textwidth]{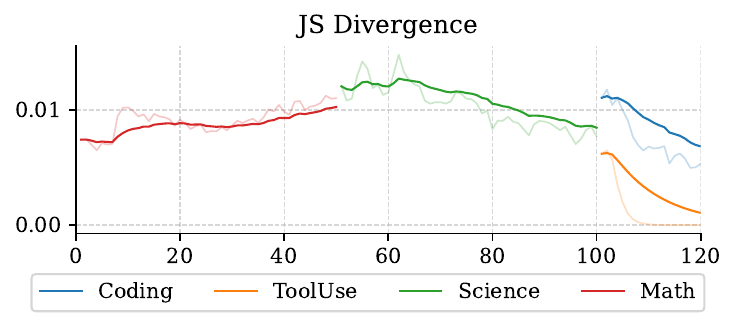}
\vskip -1em
\caption{JS divergence during continual-\sdpof{} with \dtool{} or \dcode{} as third stage.}
\label{fig:cl:training}
\vskip -1em
\end{wrapfigure}
This failure is consistent with the central diagnostic observed throughout our analyses.
Because supervision is token-level, frequent tokens (e.g., formatting patterns) can receive repeated reinforcement. 
As the teacher is derived from the student, such artifacts can enter a feedback loop we call \emph{confirmation bias}: the student emits the artifact, the teacher assigns it token-level support, and the next student update makes the artifact more likely.
A frequently refreshed teacher or denser supervision can worsen this process and, in severe cases, lead to near-complete collapse.

\subsection{Theoretical View on Forgetting: Excess KL Drift}
\label{sec:sdpo_vs_grpo:theory}

Existing studies have revealed that on-policy RL is biased toward solutions that solve the
new task while staying close to the base policy in KL \citep{shenfeld2025rl}. However, SDPO does
not inherit this bias: its rollouts are on-policy, but its local target is a
teacher distribution conditioned on augmented context, rather than the
KL-minimal distribution induced by a sequence-level reward.

Let $x\sim\mathcal{D}$ be a prompt, $y$ a response, and
$S_x\subseteq\mathcal{Y}$ the set of successful responses under a binary
sequence reward. Let $P_0(\cdot\mid x)$ be the sequence distribution induced by
the base policy $\pi_0$. For any success level $\alpha\in(0,1)$, define the
matched-success Razor policy
\begin{equation}
Q_{\alpha,x}^{\dagger}
:=
\argmin_{Q:\,Q(S_x)=\alpha}
D_{\mathrm{KL}}\bigl(Q\parallel P_0(\cdot\mid x)\bigr).
\label{eq:razor_policy_def}
\end{equation}
Let $q_{\alpha,x}^{\dagger}(\cdot\mid h_t)$ be its autoregressive token kernel
at history $h_t=(x,y_{<t})$. This is the token distribution induced by the
sequence-level reward: it preserves the base policy except for the reweighting
needed to change the probability of eventual success. In the exact-success
limit $\alpha\to1$, $Q_{\alpha,x}^{\dagger}$ becomes the base-conditioned
solution $P_0(\cdot\mid x,y\in S_x)$.

In contrast, SDPO distills a teacher that observes augmented context $c$, such
as a solution, rationale, or auxiliary signal. Its local target is
$T_c(\cdot\mid h_t)=\pi_T(\cdot\mid x,c,y_{<t})$.
For a strict distillation divergence $D$, e.g., generalized JSD, the population token-level target is the teacher projection
\begin{equation}
B(\cdot\mid h_t)
\in
\argmin_{u\in\Delta(\mathcal{V})}
\mathbb{E}_{c\mid h_t}
D\bigl(u,T_c(\cdot\mid h_t)\bigr).
\label{eq:teacher_projection}
\end{equation}
When $c$ is deterministic, this reduces to $B(\cdot\mid h_t)=T_c(\cdot\mid h_t)$.
Let $P_B(\cdot\mid x)$ be the sequence distribution induced by this SDPO token
target and set $\alpha_x=P_B(S_x\mid x)$.

\begin{theorem}[Teacher projection induces excess KL drift]
\label{thm:sdpo_excess_kl}
Assume $0<\alpha_x<1$. Then
\begin{equation}
\begin{aligned}
&
D_{\mathrm{KL}}\bigl(P_B(\cdot\mid x)\parallel P_0(\cdot\mid x)\bigr)
-
D_{\mathrm{KL}}\bigl(Q_{\alpha_x,x}^{\dagger}\parallel P_0(\cdot\mid x)\bigr)\\
&\qquad =
\mathbb{E}_{y\sim P_B(\cdot\mid x)}
\sum_{t=1}^{|y|}
D_{\mathrm{KL}}
\bigl(
B(\cdot\mid h_t)
\parallel
q_{\alpha_x,x}^{\dagger}(\cdot\mid h_t)
\bigr)
\ge 0 .
\end{aligned}
\label{eq:sdpo_excess_kl}
\end{equation}
The inequality is strict whenever the SDPO teacher target differs from the
sequence-reward Razor kernel on a set of histories with positive
$P_B$-probability.
\end{theorem}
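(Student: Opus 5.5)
The plan is to derive the identity from the closed form of the Razor policy, which is an exponential tilt of $P_0$, followed by a chain-rule decomposition of KL. First I would solve \Cref{eq:razor_policy_def}. Minimizing $D_{\mathrm{KL}}(Q\parallel P_0)$ under the single linear constraint $Q(S_x)=\alpha$ gives, by a Lagrangian or a direct log-sum argument,
\[
Q^{\dagger}_{\alpha,x}(y)=P_0(y\mid x)\Bigl(\tfrac{\alpha}{p_0}\mathbf{1}[y\in S_x]+\tfrac{1-\alpha}{1-p_0}\mathbf{1}[y\notin S_x]\Bigr),
\qquad p_0=P_0(S_x\mid x).
\]
This requires $0<p_0<1$. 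I will note that if $p_0\in\{0,1\}$, then the feasible set is empty unless $P_B$ itself is singular. I expect this degenerate case to be dismissed implicitly, because $\alpha_x\in(0,1)$ and $D_{\mathrm{KL}}(P_B\parallel P_0)<\infty$ force $0<p_0<1$. The important consequence is that $\log(Q^{\dagger}/P_0)$ takes only two values, $a=\log(\alpha/p_0)$ on $S_x$ and $b=\log((1-\alpha)/(1-p_0))$ off $S_x$.

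Second, I would use a Pythagorean identity. For any $P$ with $P(S_x)=\alpha$,
\[
D_{\mathrm{KL}}(P\parallel P_0)=D_{\mathrm{KL}}(P\parallel Q^{\dagger})+\mathbb{E}_{P}\log\frac{Q^{\dagger}}{P_0}.
\]
The last term equals $\alpha a+(1-\alpha)b$, which depends on $P$ only through $P(S_x)$. In particular it equals $D_{\mathrm{KL}}(Q^{\dagger}\parallel P_0)$, obtained by taking $P=Q^{\dagger}$. Applying this with $P=P_B$ and $\alpha=\alpha_x$ shows that the left side of \Cref{eq:sdpo_excess_kl} equals $D_{\mathrm{KL}}(P_B\parallel Q^{\dagger}_{\alpha_x,x})$.

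Third, I would apply the chain rule for KL between autoregressive sequence laws. Both $P_B$ and $Q^{\dagger}$ factor into token kernels, $B(\cdot\mid h_t)$ and $q^{\dagger}(\cdot\mid h_t)$ respectively, where $q^{\dagger}$ is defined as the conditional of $Q^{\dagger}$ on histories it charges. The log-ratio then telescopes into a sum of per-token log-ratios. Taking the expectation under $P_B$ and conditioning on each $h_t$ yields $\mathbb{E}_{P_B}\sum_t D_{\mathrm{KL}}(B(\cdot\mid h_t)\parallel q^{\dagger}(\cdot\mid h_t))$, with the end-of-sequence token included as part of the vocabulary so that $|y|$ is a stopping time. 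Nonnegativity is immediate. For strictness, each summand is positive exactly where $B\neq q^{\dagger}$, and such histories have positive $P_B$-probability by assumption.

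I expect the main obstacle to be the rigor of the third step: responses of unbounded length, and histories where $q^{\dagger}$ is undefined or $B$ puts mass where $q^{\dagger}$ does not. My first attempt would be to assume a finite maximum length, or finite KL so that monotone convergence justifies the exchange of sum and expectation. I would also observe that $Q^{\dagger}\ll P_0$ with the same support as $P_0$, because both tilt weights are positive. Hence any support violation makes both sides $+\infty$, and the identity still holds in the extended sense.
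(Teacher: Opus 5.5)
Your proposal is correct and follows the paper's proof essentially step for step: the closed-form tilt $Q^{\dagger}=P_0\,w_\alpha$, the observation that $\mathbb{E}_{P}\log w_\alpha$ depends on $P$ only through $P(S_x)$ so the excess KL collapses to $D_{\mathrm{KL}}(P_B\parallel Q^{\dagger})$, and then the autoregressive KL chain rule. Your remarks on variable length and support handling add care that the paper avoids by assuming finite spaces, a fixed horizon $H$, and full support of $P_0$.
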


The no-excess-drift case is therefore restrictive: SDPO must satisfy
$B(\cdot\mid h_t)=q_{\alpha_x,x}^{\dagger}(\cdot\mid h_t)$ at every visited
history. This is not implied by teacher--student parameter sharing. Because the
teacher conditions on augmented context $c$, it may select a particular
reasoning trace, formatting pattern, or artifact even when many continuations
are equally successful. Thus, SDPO's on-policy sampling controls where
supervision is applied, but the distribution selected at each token is the
teacher projection, not the KL-minimal sequence-reward projection. Combined
with the KL-forgetting law, the excess drift in
\Cref{eq:sdpo_excess_kl} predicts additional forgetting. Full proofs are in \Cref{appdx:proofs}.

\begin{takeawaybox}
On-policy data alone does not guarantee forgetting mitigation. SDPO trains on on-policy histories, but each token is projected toward a privileged teacher rather than a KL-minimal successful policy; this can amplify artifacts and induce excess drift across continual post-training.
\end{takeawaybox}

% \section{SDPO vs OPD}
% \section{SDPO vs SFT (Instruct)}
\section{Conclusion}

This paper studied whether on-policy self-distillation improves continual post-training. 
Our results indicate that SDPO can be an effective specialization method when teacher signals are stable and aligned with the target domain. 
However, it is not uniformly beneficial for continual learning.
SDPO's token supervision amplifies both useful supervision and harmful artifacts, making it sensitive to teacher quality, update schedules, and token structure. Compared with GRPO, SDPO variants exhibit greater forgetting, instability, parameter drift, and response-manifold shifts.

The main lesson is not that on-policy self-distillation methods are ineffective, but they are not automatically safe. Token supervision can be a rapid specialization signal, but also a potentially dangerous signal: each token can reinforce useful signals and harmful artifacts. Sequence-level methods such as GRPO learn more slowly, but in our experiments, they preserve capabilities more reliably. Thus, the future of on-policy self-distillation for continual learning likely depends not on making the signal denser, but on safeguarding it through token weighting, teacher controls, or data balancing strategies to be more selective, stable, and artifact-aware.

%\clearpage
\bibliography{iclr2026_conference}
\bibliographystyle{iclr2026_conference}

\clearpage
\appendix
\section{Math Notations}

\paragraph{Kullback--Leibler (KL) divergence.}
For two discrete distributions $P(\mathcal{X})$ and $Q(\mathcal{X})$,
\begin{equation}
D_{\mathrm{KL}}(P \parallel Q)
=
\sum_{x\in \mathcal{X}} P(x)\log\!\left(\frac{P(x)}{Q(x)}\right).
\label{eq:kl}
\end{equation}

\paragraph{Generalized Jensen--Shannon (JS) divergence.} A weighted average of two KL divergences from $P$ and $Q$ to their mixture with $\beta\in(0,1)$ ($\beta=0.5$ recovers the standard symmetric JS divergence):
\begin{equation}
D_\mathrm{JSD}(\beta)(P\parallel Q) =
\beta {D_\mathrm{KL}(P \parallel\beta P + (1 - \beta)Q)}+
(1-\beta) D_\mathrm{KL}(Q \parallel\beta P + (1 - \beta)Q).
\label{eq:jsd}
\end{equation}

\paragraph{Performance Change.}
We measure absolute ($\Delta$) and relative ($\delta$) change against the base model:
\begin{align}
\Delta_{m,\mathrm{train},\mathrm{eval}}={P_{m,\mathrm{train},\mathrm{eval}}}-{P_{\mathrm{base},\mathrm{eval}}},\label{eq:performanceDelta}\\
\delta_{m,\mathrm{train},\mathrm{eval}}=\nicefrac{\Delta_{m,\mathrm{train},\mathrm{eval}}}{P_{\mathrm{base},\mathrm{eval}}}\times 100\%,\label{eq:performancedelta}
\end{align}
where $m$ is the training method, $\mathrm{train}$ is the training domain, 
and $\mathrm{eval}$ is the evaluation benchmark. We define \emph{gain} $G$ and \emph{forgetting} $F$ as the positive and negative part of the relative change:
\begin{align}
G_{m,\mathrm{train},\mathrm{eval}}=\max(0,\delta_{m,\mathrm{train},\mathrm{eval}}),\label{eq:gain}\\
F_{m,\mathrm{train},\mathrm{eval}}=\max(0,-\delta_{m,\mathrm{train},\mathrm{eval}}).\label{eq:forgetting}
\end{align}

\paragraph{Intersection-over-Union.} For two masks $A$ and $B$ (e.g., predicted mask and ground-truth mask),
\begin{equation}
\label{eq:iou}
\IoU(A,B) = \nicefrac{|A\cap B|}{|A\cup B|},
\end{equation}
where $|\cdot|$ denotes cardinality. The quantity lies in $[0,1]$, with $1$ for a perfect overlap.

\paragraph{Normalized spectral shift.}
It is defined as the $\ell_2$-normalized drift of the singular-value profile:
\begin{equation}
\label{eq:nss}
\mathrm{NSS}\left(\Wz,\Wk\right) = 
\tfrac{\norm{\sval{\Wz}-\sval{\Wk}}_2}
{\norm{\sval{\Wz}}_2}.
\end{equation}

\paragraph{Principal-mask (PM) and Principal-angles (PAs).}
Given $\Wk_r = \Uk_r \Sk_r {\Vk_r}^{\top}$ as the rank-$r$ approximation, PM is
\begin{equation}
\label{eq:pm}
\kth{\PM}{k}[i,j]=
\mathbb{I}
\left(
\left| \Wk_r[i,j] \right| \geq \kth{\tau}{k}_{\eta}
\right),
\end{equation}
where $\kth{\tau}{k}_{\eta}$ is a threshold controlled by the hyperparameter $\eta$.
Principal angles are
\begin{equation}
\label{eq:pa}
\theta_i(\Wz,\Wk)
=
\arccos \sigma_i\left({\Uz_r}^{\top}\Uk_r\right),
\qquad i=1,\dots,k,
\end{equation}
where $\sigma_i(\cdot)$ denotes the $i$-th singular value.

\paragraph{Maximum principal-angle rotation.}
Max PA of the top-$r$ left singular vectors:
\begin{equation}
\label{eq:maxpa}
\mathrm{maxPA}\left(\Wz,\Wk\right)
=
\max_{i \leq r} \theta_i(\Wz,\Wk).
\end{equation}

\paragraph{Principal-mask overlap.}
Intersection-over-Union between the initial and current PM masks:
\begin{equation}
\label{eq:pmoverlap}
\mathrm{PMOverlap}\left(\Wz,\Wk\right)
=\IoU\left(\kth{\PM}{0},\kth{\PM}{k}\right).
\end{equation}

\paragraph{Changed-parameter-mask.} 
Per-parameter mask with change threshold $\eta>0$:
\begin{equation}
\label{eq:cm}
\kth{\CM}{k}[i,j]
=
\mathbb{I}
\left(
\left| \left(\Wk-\Wz\right)[i,j] \right| \geq \eta
\right).
\end{equation}

\paragraph{Principal-mask update sparsity.}
Intersection-over-Union between the initial
principal mask and the changed-parameter mask:
\begin{equation}
\label{eq:cmoverlap}
\mathrm{PMUpdateSparsity}\left(\Wz,\Wk\right)
=
\IoU{
\left(\kth{\PM}{0}, \kth{\CM}{k}\right)
}.
\end{equation}

\section{Proofs}
\label{appdx:proofs}

We provide the proofs for results in \Cref{sec:sdpo_vs_grpo:theory}. All distributions are over
finite spaces. We assume that the base sequence distribution
$P_0(\cdot\mid x)$ has full support. When the prompt $x$ is fixed, 
we suppress it from notation.

\paragraph{Matched-success Razor policy.}
Let $S=S_x$ be the set of successful responses and let
$z=P_0(S\mid x)\in(0,1)$. For a target success level
$\alpha\in(0,1)$, define
\[
\mathcal{Q}_{\alpha}
=
\{Q\in\Delta(\mathcal{Y}): Q(S)=\alpha\}.
\]

\begin{lemma}[Closed form of the matched-success Razor policy]
\label{lem:matched_razor_closed_form}
The solution of
\[
Q_{\alpha,x}^{\dagger}
=
\arg\min_{Q\in\mathcal{Q}_{\alpha}}
D_{\mathrm{KL}}\bigl(Q\parallel P_0(\cdot\mid x)\bigr)
\]
is
\begin{equation}
Q_{\alpha,x}^{\dagger}(y)
=
P_0(y\mid x)
\left[
\frac{\alpha}{z}\mathbf{1}\{y\in S_x\}
+
\frac{1-\alpha}{1-z}\mathbf{1}\{y\notin S_x\}
\right].
\label{eq:app_matched_razor}
\end{equation}
Equivalently,
\[
Q_{\alpha,x}^{\dagger}
=
\alpha P_0(\cdot\mid x,S_x)
+
(1-\alpha)P_0(\cdot\mid x,S_x^c).
\]
\end{lemma}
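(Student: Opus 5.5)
The plan is to split the minimization according to the partition $\{S_x, S_x^c\}$ and solve each piece separately. This is the classical I-projection onto a linear family, and it reduces to a chain-rule decomposition of the KL divergence.

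\emph{Step 1 (decomposition).} For any $Q\in\mathcal{Q}_\alpha$ we have $Q(S)=\alpha\in(0,1)$, so the conditionals $Q(\cdot\mid S)$ and $Q(\cdot\mid S^c)$ are well defined. We also have $P_0(S)=z\in(0,1)$, because $P_0$ has full support and $\alpha_x\in(0,1)$ forces both $S$ and $S^c$ to be nonempty. Write $Q(y)=\alpha\,Q(y\mid S)$ for $y\in S$ and $Q(y)=(1-\alpha)Q(y\mid S^c)$ for $y\notin S$, with the analogous factorization for $P_0$ using $z$. Expanding the logarithm in \Cref{eq:kl} term by term then gives the chain rule
\[
D_{\mathrm{KL}}(Q\parallel P_0)=d_{\mathrm{bin}}(\alpha\parallel z)+\alpha\,D_{\mathrm{KL}}\bigl(Q(\cdot\mid S)\parallel P_0(\cdot\mid S)\bigr)+(1-\alpha)\,D_{\mathrm{KL}}\bigl(Q(\cdot\mid S^c)\parallel P_0(\cdot\mid S^c)\bigr),
\]
where $d_{\mathrm{bin}}(\alpha\parallel z)=\alpha\log\frac{\alpha}{z}+(1-\alpha)\log\frac{1-\alpha}{1-z}$. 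Rather than introduce a new macro, I would write this first term out explicitly.

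\emph{Step 2 (minimization).} The first term depends only on $\alpha$, which is fixed on $\mathcal{Q}_\alpha$. The constraint places no restriction on the two conditionals, so they can be chosen freely and independently. By Gibbs' inequality each conditional KL term is nonnegative and vanishes exactly when $Q(\cdot\mid S)=P_0(\cdot\mid S)$ and $Q(\cdot\mid S^c)=P_0(\cdot\mid S^c)$. The minimum therefore equals $d_{\mathrm{bin}}(\alpha\parallel z)$, and it is attained exactly at
\[
Q=\alpha P_0(\cdot\mid x,S_x)+(1-\alpha)P_0(\cdot\mid x,S_x^c).
\]
Substituting $P_0(y\mid S)=P_0(y)/z$ on $S$ and $P_0(y\mid S^c)=P_0(y)/(1-z)$ on $S^c$ yields \Cref{eq:app_matched_razor}.

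\emph{Step 3 (uniqueness and bookkeeping).} Uniqueness of the minimizer follows from the equality case of Gibbs' inequality. It also follows from the strict convexity of $Q\mapsto D_{\mathrm{KL}}(Q\parallel P_0)$ on the convex set $\mathcal{Q}_\alpha$, so the $\arg\min$ is a single distribution. This justifies the notation $Q^\dagger_{\alpha,x}$ in \Cref{eq:razor_policy_def}.

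There is no real obstacle here. The only point needing care is the degenerate boundary of the conditioning: we need $z\in(0,1)$ and $\alpha\in(0,1)$ so that all conditionals exist and no $0\log 0$ convention is required. Both conditions hold by the full-support assumption together with the hypothesis $0<\alpha_x<1$ of \Cref{thm:sdpo_excess_kl}.
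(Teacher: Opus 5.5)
Your proposal is correct and matches the paper's proof: both decompose $Q=\alpha Q(\cdot\mid S)+(1-\alpha)Q(\cdot\mid S^c)$, write $D_{\mathrm{KL}}(Q\parallel P_0)$ as the constant binary term plus the two weighted conditional KL terms, and conclude that the minimizer is unique and is attained exactly when both conditionals equal those of $P_0$. Your extra remarks, $z\in(0,1)$ and uniqueness via strict convexity, are harmless additions; the only caveat is that the $0\log 0$ convention can still be needed wherever $Q$ itself has zeros.
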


\begin{proof}
Fix $x$ and write $p=P_0(\cdot\mid x)$. Let
$p^+=p(\cdot\mid S)$ and $p^-=p(\cdot\mid S^c)$. Any
$Q\in\mathcal{Q}_{\alpha}$ can be decomposed as
\[
Q=\alpha Q^+ +(1-\alpha)Q^-,
\]
where $Q^+=Q(\cdot\mid S)$ and $Q^-=Q(\cdot\mid S^c)$. Then
\[
\begin{aligned}
D_{\mathrm{KL}}(Q\parallel p)
&=
\alpha\log\frac{\alpha}{z}
+
(1-\alpha)\log\frac{1-\alpha}{1-z}+
\alpha D_{\mathrm{KL}}(Q^+\parallel p^+)
+
(1-\alpha)D_{\mathrm{KL}}(Q^-\parallel p^-).
\end{aligned}
\]
The first two terms are constant over $\mathcal{Q}_{\alpha}$. The remaining
terms are nonnegative and vanish if and only if $Q^+=p^+$ and $Q^-=p^-$.
Thus the unique minimizer is
\[
Q_{\alpha,x}^{\dagger}
=
\alpha p^+ +(1-\alpha)p^-,
\]
which is exactly \Cref{eq:app_matched_razor}.
\end{proof}

\begin{lemma}[Token kernel induced by the Razor policy]
\label{lem:razor_token_kernel}
Let $q_{\alpha,x}^{\dagger}(\cdot\mid h_t)$ be the autoregressive token kernel
induced by $Q_{\alpha,x}^{\dagger}$ at history $h_t=(x,y_{<t})$. Define
\begin{equation}
C_\alpha(h)
=
\mathbb{E}_{y\sim P_0(\cdot\mid h)}
\left[
\frac{\alpha}{z}\mathbf{1}\{y\in S_x\}
+
\frac{1-\alpha}{1-z}\mathbf{1}\{y\notin S_x\}
\right].
\label{eq:app_C_alpha}
\end{equation}
Then
\begin{equation}
q_{\alpha,x}^{\dagger}(a\mid h_t)
=
\pi_0(a\mid h_t)
\frac{C_\alpha(h_ta)}{C_\alpha(h_t)}.
\label{eq:app_razor_token_kernel}
\end{equation}
In the exact-success limit $\alpha\to1$,
\begin{equation}
q_{1,x}^{\dagger}(a\mid h_t)
=
\pi_0(a\mid h_t)
\frac{
\Pr_{P_0}(y\in S_x\mid h_ta)
}{
\Pr_{P_0}(y\in S_x\mid h_t)
}.
\label{eq:app_exact_success_kernel}
\end{equation}
\end{lemma}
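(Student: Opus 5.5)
The plan is to write $Q_{\alpha,x}^{\dagger}$ as a tilt of $P_0$ by a weight on complete sequences, and then use the chain rule to turn the sequence tilt into token-level ratios. Fix $x$ and write the sequence-level weight as
\[
w(y)=\frac{\alpha}{z}\mathbf{1}\{y\in S_x\}+\frac{1-\alpha}{1-z}\mathbf{1}\{y\notin S_x\}.
\]
By \Cref{lem:matched_razor_closed_form}, $Q_{\alpha,x}^{\dagger}(y)=P_0(y\mid x)\,w(y)$.

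The autoregressive kernel of any sequence distribution $Q$ at a history $h_t$ is $Q(\text{prefix } h_ta)/Q(\text{prefix } h_t)$. Here $Q(\text{prefix } h)$ denotes the total $Q$-mass of complete responses extending $h$. So the first step is to compute the prefix marginals of $Q_{\alpha,x}^{\dagger}$. Summing over all completions $y$ of a prefix $h$ and factoring out $P_0(\text{prefix } h)$ gives
\[
Q_{\alpha,x}^{\dagger}(\text{prefix } h)=P_0(\text{prefix } h)\,\mathbb{E}_{y\sim P_0(\cdot\mid h)}[w(y)]=P_0(\text{prefix } h)\,C_\alpha(h).
\]
This is exactly \Cref{eq:app_C_alpha}.

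Next, take the ratio at $h_ta$ and $h_t$. The autoregressive factorization of $P_0$ gives $P_0(\text{prefix } h_ta)/P_0(\text{prefix } h_t)=\pi_0(a\mid h_t)$, which yields \Cref{eq:app_razor_token_kernel}. Full support of $P_0$ makes all denominators positive: since $w>0$ when $\alpha\in(0,1)$, we get $C_\alpha(h_t)>0$. An end-of-sequence token is handled the same way, with the completion being the terminated sequence itself. Since the kernels then reproduce $Q_{\alpha,x}^{\dagger}$ by telescoping, this identifies the kernel on every history.

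For the limit $\alpha\to1$, I would observe that
\[
C_\alpha(h)\to\frac{1}{z}\Pr_{P_0}(y\in S_x\mid h).
\]
The $1/z$ factor cancels in the ratio. The main subtlety is histories with $\Pr_{P_0}(S_x\mid h_t)=0$, where the limiting kernel is undefined. I would restrict to histories with positive success probability, which are the only ones charged by $Q_{1,x}^{\dagger}=P_0(\cdot\mid x,S_x)$. Alternatively, the limit can be taken pointwise with the convention that it is inherited from the $\alpha<1$ formula. Finiteness of the space makes the interchange of limit and expectation trivial.
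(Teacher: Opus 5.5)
Your proposal is correct and follows the paper's proof: write $Q_{\alpha,x}^{\dagger}=P_0\,w_\alpha$, express the kernel as a ratio of $w_\alpha$-weighted sums over continuations, which are your prefix marginals divided by $P_0(h_t)$, and then let $\alpha\to1$. Your added care about the positivity of $C_\alpha(h_t)$, end-of-sequence handling, and histories with $\Pr_{P_0}(y\in S_x\mid h_t)=0$ in the limit fills in points the paper leaves implicit.
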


\begin{proof}
By \Cref{lem:matched_razor_closed_form},
\[
Q_{\alpha,x}^{\dagger}(y)=P_0(y\mid x)w_\alpha(y),
\]
where
\[
w_\alpha(y)
=
\frac{\alpha}{z}\mathbf{1}\{y\in S_x\}
+
\frac{1-\alpha}{1-z}\mathbf{1}\{y\notin S_x\}.
\]
For a history $h_t=(x,y_{<t})$,
\[
\begin{aligned}
q_{\alpha,x}^{\dagger}(a\mid h_t)
&=
\Pr_{Q_{\alpha,x}^{\dagger}}(y_t=a\mid h_t) =
\frac{
\sum_{y_{>t}}P_0(a,y_{>t}\mid h_t)w_\alpha(y)
}{
\sum_{\tilde y_{\ge t}}P_0(\tilde y_{\ge t}\mid h_t)w_\alpha(\tilde y)
}.
\end{aligned}
\]
The numerator equals
\[
\pi_0(a\mid h_t)
\mathbb{E}_{y\sim P_0(\cdot\mid h_ta)}[w_\alpha(y)]
=
\pi_0(a\mid h_t)C_\alpha(h_ta),
\]
and the denominator equals $C_\alpha(h_t)$. This proves
\Cref{eq:app_razor_token_kernel}. Letting $\alpha\to1$ gives
\Cref{eq:app_exact_success_kernel}.
\end{proof}

\paragraph{Teacher projection and excess reverse KL.}

For a strict distillation divergence $D$, the population SDPO token target is
\[
B(\cdot\mid h_t)
\in
\arg\min_{u\in\Delta(\mathcal{V})}
\mathbb{E}_{c\mid h_t}
D\bigl(u,T_c(\cdot\mid h_t)\bigr).
\]
When $c$ is deterministic and $D$ is generalized JSD, the minimizer is
$B(\cdot\mid h_t)=T_c(\cdot\mid h_t)$ because generalized JSD is nonnegative
and equals zero if and only if its two arguments are identical.

Let $P_B(\cdot\mid x)$ be the sequence distribution induced by
$B(\cdot\mid h_t)$, and let
\[
\alpha=P_B(S_x\mid x).
\]
The corresponding matched-success Razor policy is
$Q_{\alpha,x}^{\dagger}$.

\begin{proof}[Proof of \Cref{thm:sdpo_excess_kl}]
Fix $x$ and suppress $x$ from notation. Let $p=P_0(\cdot\mid x)$,
$S=S_x$, $z=p(S)$, and $Q^\dagger=Q_{\alpha,x}^{\dagger}$. 
By \Cref{lem:matched_razor_closed_form},
\[
Q^\dagger(y)=p(y)w_\alpha(y),
\]
where $w_\alpha(y)=\alpha/z$ on $S$ and
$w_\alpha(y)=(1-\alpha)/(1-z)$ on $S^c$.

Since $P_B(S)=Q^\dagger(S)=\alpha$,
\begin{equation}
\sum_y P_B(y)\log w_\alpha(y)
=
\sum_y Q^\dagger(y)\log w_\alpha(y).
\label{eq:app_same_weight_expectation}
\end{equation}
Therefore,
\begin{equation}
\begin{aligned}
D_{\mathrm{KL}}(P_B\parallel p)
-
D_{\mathrm{KL}}(Q^\dagger\parallel p)
&=
\sum_y P_B(y)\log\frac{P_B(y)}{p(y)}
-
\sum_y Q^\dagger(y)\log\frac{Q^\dagger(y)}{p(y)} \\
&=
\sum_y P_B(y)\log\frac{P_B(y)}{p(y)}
-
\sum_y Q^\dagger(y)\log w_\alpha(y) \\
&=
\sum_y P_B(y)\log\frac{P_B(y)}{p(y)w_\alpha(y)} \\
&=
D_{\mathrm{KL}}(P_B\parallel Q^\dagger),
\end{aligned}
\label{eq:app_excess_reverse}
\end{equation}
where we used \Cref{eq:app_same_weight_expectation} in the third line.

Now write both sequence distributions autoregressively:
\[
P_B(y\mid x)=\prod_{t=1}^H B(y_t\mid h_t),
\qquad
Q^\dagger(y\mid x)=\prod_{t=1}^H q_{\alpha,x}^{\dagger}(y_t\mid h_t).
\]
Then the autoregressive KL chain rule gives
\begin{equation}
\begin{aligned}
D_{\mathrm{KL}}(P_B\parallel Q^\dagger)
&=
\mathbb{E}_{y\sim P_B}
\log
\frac{P_B(y\mid x)}{Q^\dagger(y\mid x)} \\
&=
\mathbb{E}_{y\sim P_B}
\sum_{t=1}^H
\log
\frac{B(y_t\mid h_t)}
{q_{\alpha,x}^{\dagger}(y_t\mid h_t)} \\
&=
\mathbb{E}_{y\sim P_B}
\sum_{t=1}^H
D_{\mathrm{KL}}
\bigl(
B(\cdot\mid h_t)
\parallel
q_{\alpha,x}^{\dagger}(\cdot\mid h_t)
\bigr).
\end{aligned}
\label{eq:app_token_chain}
\end{equation}
Combining \Cref{eq:app_excess_reverse} and \Cref{eq:app_token_chain} proves
the theorem. The right-hand side is nonnegative, and it is strictly positive
whenever the two token kernels differ on a set of histories with positive
$P_B$-probability.
\end{proof}

\paragraph{Forward KL counterpart.}

The previous theorem uses reverse KL because RL's Razor is naturally stated as
a reverse-KL projection. The empirical forgetting law is often measured by
forward KL from the base policy to the fine-tuned policy, so we also give the
matched-success forward-KL analogue.

\begin{proposition}[Forward-KL excess at matched success]
\label{prop:forward_excess}
Fix $x$ and let $p=P_0(\cdot\mid x)$. Let $S=S_x$, $z=p(S)$, and let
$P_B(S)=\alpha\in(0,1)$. Define
\[
p^+=p(\cdot\mid S),
\qquad
p^-=p(\cdot\mid S^c),
\]
and
\[
B^+=P_B(\cdot\mid S),
\qquad
B^-=P_B(\cdot\mid S^c).
\]
Then
\begin{equation}
\begin{aligned}
&
D_{\mathrm{KL}}(p\parallel P_B)
-
D_{\mathrm{KL}}(p\parallel Q_{\alpha,x}^{\dagger})
 =
zD_{\mathrm{KL}}(p^+\parallel B^+)
+
(1-z)D_{\mathrm{KL}}(p^-\parallel B^-)
\ge0 .
\end{aligned}
\label{eq:app_forward_excess}
\end{equation}
\end{proposition}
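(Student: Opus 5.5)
The plan is to reduce the forward-KL statement to a decomposition that is parallel to the one used for \Cref{lem:matched_razor_closed_form}, this time with the roles of the two arguments swapped. Both $P_B$ and $Q_{\alpha,x}^{\dagger}$ put mass $\alpha$ on $S$. By \Cref{lem:matched_razor_closed_form},
\[
Q_{\alpha,x}^{\dagger}=\alpha p^+ +(1-\alpha)p^-,
\qquad
P_B=\alpha B^+ +(1-\alpha)B^-.
\]
For any $R=\alpha R^+ +(1-\alpha)R^-$ with $R^\pm$ supported on $S$ and $S^c$, I would expand $D_{\mathrm{KL}}(p\parallel R)$ by splitting the sum over $y$ into $y\in S$ and $y\notin S$. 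On $S$ we have $p(y)=zp^+(y)$ and $R(y)=\alpha R^+(y)$, and on $S^c$ we have $p(y)=(1-z)p^-(y)$ and $R(y)=(1-\alpha)R^-(y)$. This gives
\[
D_{\mathrm{KL}}(p\parallel R)
=
z\log\frac{z}{\alpha}+(1-z)\log\frac{1-z}{1-\alpha}
+zD_{\mathrm{KL}}(p^+\parallel R^+)+(1-z)D_{\mathrm{KL}}(p^-\parallel R^-),
\]
which is the forward-KL chain rule for the binary partition $\{S,S^c\}$.

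Next, I apply this identity twice: once with $R=P_B$ (so $R^\pm=B^\pm$) and once with $R=Q_{\alpha,x}^{\dagger}$ (so $R^\pm=p^\pm$, and both conditional divergences vanish). The binary term $z\log(z/\alpha)+(1-z)\log((1-z)/(1-\alpha))$ depends only on $z$ and $\alpha$, so it is identical in both and cancels in the difference. What remains is exactly $zD_{\mathrm{KL}}(p^+\parallel B^+)+(1-z)D_{\mathrm{KL}}(p^-\parallel B^-)$. Nonnegativity then follows from $z\in(0,1)$, which holds by full support of $p$ together with $S\neq\emptyset,\mathcal{Y}$, and from Gibbs' inequality.

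The one delicate point is well-definedness: $D_{\mathrm{KL}}(p\parallel P_B)$ may be $+\infty$ if $P_B$ misses part of the support of $p$. I would treat this by convention. The identity still holds in $[0,\infty]$, because the infinite parts appear only in $D_{\mathrm{KL}}(p^\pm\parallel B^\pm)$, while $D_{\mathrm{KL}}(p\parallel Q^\dagger)$ is finite since $\alpha\in(0,1)$ and $p$ has full support. The conditionals $B^\pm$ are well defined because $\alpha\in(0,1)$, and the computation is otherwise routine.
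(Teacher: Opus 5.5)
Your proposal is correct and follows essentially the same route as the paper. Both write $P_B=\alpha B^+ +(1-\alpha)B^-$ and $Q_{\alpha,x}^{\dagger}=\alpha p^+ +(1-\alpha)p^-$, expand the forward KL over $\{S,S^c\}$ into the common binary term plus the conditional divergences, and use that the conditional terms vanish for $Q_{\alpha,x}^{\dagger}$. Your remark on the case $D_{\mathrm{KL}}(p\parallel P_B)=+\infty$ is a harmless refinement that the paper leaves implicit.
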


\begin{proof}
Decompose $P_B$ as
\[
P_B=\alpha B^+ +(1-\alpha)B^-.
\]
Then
\[
\begin{aligned}
D_{\mathrm{KL}}(p\parallel P_B)
&=
z\log\frac{z}{\alpha}
+
(1-z)\log\frac{1-z}{1-\alpha} +
zD_{\mathrm{KL}}(p^+\parallel B^+)
+
(1-z)D_{\mathrm{KL}}(p^-\parallel B^-).
\end{aligned}
\]
For $Q_{\alpha,x}^{\dagger}=\alpha p^+ +(1-\alpha)p^-$, the two conditional KL
terms vanish:
\[
D_{\mathrm{KL}}(p\parallel Q_{\alpha,x}^{\dagger})
=
z\log\frac{z}{\alpha}
+
(1-z)\log\frac{1-z}{1-\alpha}.
\]
Subtracting gives \Cref{eq:app_forward_excess}.
\end{proof}

\paragraph{Connection to forgetting.}

We use the KL-forgetting law as an empirical bridge between distributional
drift and prior-task forgetting.

\begin{assumption}[KL-forgetting law]
\label{assump:kl_forgetting}
For policies matched in new-task success, let $\mathsf{F}(\pi)$ denote the
drop in prior-task performance. Suppose there exists a strictly increasing
function $\omega$ with $\omega(0)=0$ such that
\[
\mathcal{K}_{\mathrm{fwd}}(\pi_1)
-
\mathcal{K}_{\mathrm{fwd}}(\pi_2)
\ge\varepsilon
\quad\Longrightarrow\quad
\mathsf{F}(\pi_1)-\mathsf{F}(\pi_2)
\ge
\omega(\varepsilon),
\]
where
\[
\mathcal{K}_{\mathrm{fwd}}(\pi)
=
\mathbb{E}_{x\sim\mathcal{D}}
D_{\mathrm{KL}}
\bigl(
P_0(\cdot\mid x)
\parallel
P_\pi(\cdot\mid x)
\bigr).
\]
\end{assumption}

\begin{corollary}[Teacher mismatch implies additional forgetting]
\label{cor:teacher_mismatch_forgetting}
Let $P_B$ be the SDPO teacher-projected sequence distribution and let
$Q_{\alpha,x}^{\dagger}$ be the matched-success Razor policy with
$\alpha=P_B(S_x\mid x)$. Define
\[
\begin{aligned}
\Delta_{\mathrm{fwd}}
=
\mathbb{E}_{x\sim\mathcal{D}}
\Bigl[
&z_xD_{\mathrm{KL}}
\bigl(
P_0(\cdot\mid x,S_x)
\parallel
P_B(\cdot\mid x,S_x)
\bigr)\\
&+(1-z_x)D_{\mathrm{KL}}
\bigl(
P_0(\cdot\mid x,S_x^c)
\parallel
P_B(\cdot\mid x,S_x^c)
\bigr)
\Bigr].
\end{aligned}
\]
If $\Delta_{\mathrm{fwd}}>0$, then under \Cref{assump:kl_forgetting},
\[
\mathsf{F}(P_B)
-
\mathsf{F}(Q_{\alpha}^{\dagger})
\ge
\omega(\Delta_{\mathrm{fwd}})
>
0.
\]
\end{corollary}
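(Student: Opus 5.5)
The plan is to reduce the corollary to Proposition~\ref{prop:forward_excess} plus Assumption~\ref{assump:kl_forgetting}, applied prompt by prompt and then averaged over $x\sim\mathcal{D}$. First, for each fixed $x$ with $\alpha_x=P_B(S_x\mid x)\in(0,1)$, invoke Proposition~\ref{prop:forward_excess} to write
\[
D_{\mathrm{KL}}\bigl(P_0(\cdot\mid x)\parallel P_B(\cdot\mid x)\bigr)-D_{\mathrm{KL}}\bigl(P_0(\cdot\mid x)\parallel Q_{\alpha_x,x}^{\dagger}\bigr)
=z_xD_{\mathrm{KL}}(p_x^+\parallel B_x^+)+(1-z_x)D_{\mathrm{KL}}(p_x^-\parallel B_x^-),
\]
where $p_x^{\pm}$ are exactly $P_0(\cdot\mid x,S_x)$ and $P_0(\cdot\mid x,S_x^c)$ and $B_x^{\pm}$ are $P_B(\cdot\mid x,S_x)$ and $P_B(\cdot\mid x,S_x^c)$. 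Taking expectation over $x$ and using linearity, the left side becomes $\mathcal{K}_{\mathrm{fwd}}(P_B)-\mathcal{K}_{\mathrm{fwd}}(Q_\alpha^{\dagger})$, where $Q_\alpha^{\dagger}$ denotes the prompt-indexed family $x\mapsto Q_{\alpha_x,x}^{\dagger}$, and the right side is exactly $\Delta_{\mathrm{fwd}}$. So the key identity $\mathcal{K}_{\mathrm{fwd}}(P_B)-\mathcal{K}_{\mathrm{fwd}}(Q_\alpha^{\dagger})=\Delta_{\mathrm{fwd}}$ holds with equality.

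Second, check the hypothesis of Assumption~\ref{assump:kl_forgetting}: the two policies must be matched in new-task success. By construction $Q_{\alpha_x,x}^{\dagger}(S_x)=\alpha_x=P_B(S_x\mid x)$ for every $x$, so success is matched pointwise and hence in expectation. Then apply the assumption with $\pi_1=P_B$, $\pi_2=Q_\alpha^{\dagger}$ and $\varepsilon=\Delta_{\mathrm{fwd}}$ (the difference is $\ge\varepsilon$ since it equals it), yielding $\mathsf{F}(P_B)-\mathsf{F}(Q_\alpha^{\dagger})\ge\omega(\Delta_{\mathrm{fwd}})$. Strict positivity follows because $\omega$ is strictly increasing with $\omega(0)=0$ and $\Delta_{\mathrm{fwd}}>0$.

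The main obstacle is bookkeeping rather than analysis: one must make sure the forward KLs are finite and the averaging is legitimate. Since $P_0$ has full support and the sets are finite, $D_{\mathrm{KL}}(P_0\parallel P_B)$ could be $+\infty$ if $P_B$ misses some response; I would handle this by noting that in that case $\Delta_{\mathrm{fwd}}=+\infty$ as well, and interpret the assumption in the extended sense, or else simply assume $P_B$ has full support (as it does for a softmax policy). I would also note the need for $\alpha_x\in(0,1)$ for $\mathcal{D}$-almost every $x$ so that Proposition~\ref{prop:forward_excess} and the conditional distributions $B_x^{\pm}$ are well defined; prompts with $\alpha_x\in\{0,1\}$ would be treated by the degenerate version where one conditional term is dropped.
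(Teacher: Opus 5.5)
Your proof is correct and follows the paper's argument: average the per-prompt identity of \Cref{prop:forward_excess} to get $\mathcal{K}_{\mathrm{fwd}}(P_B)-\mathcal{K}_{\mathrm{fwd}}(Q_\alpha^{\dagger})=\Delta_{\mathrm{fwd}}$, then apply \Cref{assump:kl_forgetting} with $\varepsilon=\Delta_{\mathrm{fwd}}$; your checks of matched success and of $\omega(\Delta_{\mathrm{fwd}})>0$ spell out steps the paper leaves implicit. The one slip is in your bookkeeping aside: prompts with $\alpha_x\in\{0,1\}$ cannot be handled by dropping a conditional term, since with $P_0$ of full support both $D_{\mathrm{KL}}(P_0(\cdot\mid x)\parallel P_B(\cdot\mid x))$ and the endpoint Razor term $D_{\mathrm{KL}}(P_0(\cdot\mid x)\parallel Q^{\dagger}_{\alpha_x,x})$ are then $+\infty$, leaving $\infty-\infty$; instead simply require $\alpha_x\in(0,1)$ almost surely, which the definition of $\Delta_{\mathrm{fwd}}$ already implicitly assumes.
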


\begin{proof}
Taking expectation over $x\sim\mathcal{D}$ in \Cref{prop:forward_excess} gives
\[
\mathcal{K}_{\mathrm{fwd}}(P_B)
-
\mathcal{K}_{\mathrm{fwd}}(Q_{\alpha}^{\dagger})
=
\Delta_{\mathrm{fwd}}.
\]
The result follows directly from \Cref{assump:kl_forgetting}.
\end{proof}

\section{Training Details}

\subsection{Models and Datasets}

We adopt Qwen3 series (Qwen3-4B-Instruct-2507 and Qwen3-4B-Thinking-2507, \citealp{qwen3technicalreport}), Olmo-3 series (Olmo-3-7B-Instruct-DPO and Olmo-3-7B-Think-DPO, \citealp{olmo2025olmo3}) as base models\footnote{For Instruct models, SDPO and \sdpocot{} are equivalent due to absence of think tags.} and  train\footnote{Training framework from SDPO official implementation \url{https://github.com/lasgroup/SDPO}.}
on 4 diverse tasks---\dmath, \dchem, \dtool, and \dcode, with 
in-distribution and out-of-distribution\footnote{In-distribution and out-of-distribution are pragmatic labels, not strict measures of inter-dataset distance.}
test sets to evaluate model performance
measured in averaged accuracy (i.e., Acc@k). 
Data sources and training hyperparameters (learning rates according to \citealp{hubotter2026reinforcement}) are disclosed in 
\Cref{appdx:tab:datasets,appdx:tab:hparams}, respectively.
We visualize the embeddings of each benchmark using Qwen3-Embedding-4B
\citep{qwen3embedding} in \Cref{appdx:fig:data}.

\begin{table}[htb]
\centering \setlength{\tabcolsep}{10pt}
\caption{Data sources for training or evaluation purposes across domains. 
\textit{AIME} denotes AIME24 and AIME25 combined.
Abbreviations \textit{ZLogic}, \textit{MMLU-R} denote ZebraLogicBench, MMLU-Redux 2.0, respectively.}
\label{appdx:tab:datasets}
\begin{tabular}{l|c|c|c}
\toprule
\multicolumn{1}{c}{} & \multicolumn{1}{c}{} & \multicolumn{2}{c}{\textbf{Evaluation}} \\
       % & Training & \multicolumn{2}{c}{Evaluation} \\
\cmidrule(lr){3-4}
\multicolumn{1}{l}{\textbf{Domain}} & \multicolumn{1}{c}{\textbf{Training}} & \multicolumn{1}{c}{In-Distribution} & Out-Of-Distribution \\
\midrule
\dmath{} & \makecell{DAPO-Math-17k\\\citep{yu2026dapo}}        & \makecell{AIME\\\citep{maa2025aime}} & 
\makecell{Math500\\\citep{lightman2023lets}} \\
\midrule
\dchem{}  & \multicolumn{2}{c|}{SciKnowEval~\citep{feng2024sciknoweval}} & \makecell{GPQA\\\citep{rein2023gpqa}} \\
\midrule
\dtool{}  & \multicolumn{2}{c|}{ToolAlpaca~\citep{tang2023toolalpaca}}   & \makecell{BFCLv4\\\citep{patil2025bfcl}}\\
\midrule
\dcode{}   & \multicolumn{2}{c|}{Dolci-Think-RL-7B~\citep{olmo2025olmo3}} & \makecell{LCBv6\\\citep{jain2025livecodebench}} \\
\midrule
\dataset{Logic}  & \multicolumn{2}{c|}{--} & \makecell{ZLogic\\\citep{lin2025zebralogic}} \\
\midrule
\dataset{Knowledge}  & \multicolumn{2}{c|}{--} & \makecell{MMLU-R\\\citep{gema2024mmlu}} \\
\bottomrule
\end{tabular}
\end{table}

\begin{figure}[htb]
\centering
\begin{subfigure}[b]{.54\textwidth}
\includegraphics[width=\textwidth]{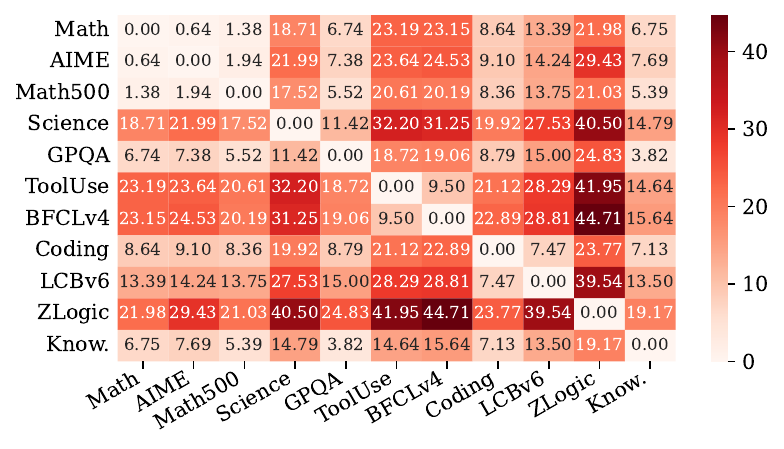}
\caption{Pairwise MMD (multiplier $10^{-2}$) between benchmarks.}
\label{appdx:fig:data:mmd}
\end{subfigure}
\begin{subfigure}[b]{.45\textwidth}
\includegraphics[width=\textwidth]{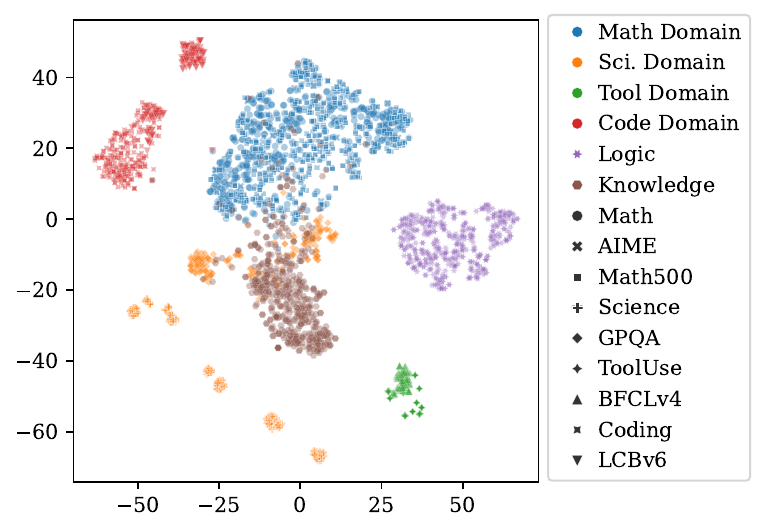}
\caption{Visualization using t-SNE.}
\label{appdx:fig:data:tsne}
\end{subfigure}
\caption{Embedding visualization of benchmarks by domain.}
\label{appdx:fig:data}
\end{figure}

\begin{table}[htb]
\centering \renewcommand{\arraystretch}{1.1}
\caption{Hyperparameters for SDPO and GRPO training with Qwen3-4B-Thinking-2507.}
\label{appdx:tab:hparams}
\begin{tabular}{cl|l|l}
\toprule
&\textbf{Parameters} & \textbf{SDPO} & \textbf{GRPO} \\
\midrule
\multirow{2}{*}{\rotatebox[origin=c]{90}{\textbf{\small Data}}}
&Max.\ prompt length   & $2048$ & $2048$ \\
&Max.\ response length & $14336$ & $14336$ \\
\midrule
\multirow{3}{*}{\rotatebox[origin=c]{90}{\textbf{\small Batching}}}
&Question batch size & $32$ & $8$ \\
&Mini batch size     & $32$ & $8$ \\
&Number of rollouts  & $8 $ & $8$ \\
\midrule
\multirow{4}{*}{\rotatebox[origin=c]{90}{\textbf{\small Rollout}}}
&Inference engine   & vLLM & vLLM \\
&Temperature        & $1.0$  & $1.0$ \\
&Top-$p$            & $-1 $  & $-1 $ \\
&Top-$k$            & $-1 $  & $-1 $ \\
% \midrule
% \multirow{4}{*}{\rotatebox[origin=c]{90}{\textbf{\small Validation}}}
% &Number of rollouts & $8   $ & $8   $ \\
% &Temperature        & $0.6 $ & $0.6 $ \\
% &Top-$p$            & $0.95$ & $0.95$ \\
% &Top-$k$            & $20  $ & $20  $ \\
\midrule
\multirow{4}{*}{\rotatebox[origin=c]{90}{\textbf{SDPO}}}
&Distillation Top-$K$             & $100$             & -- \\
&Distillation divergence          & Jensen--Shannon & -- \\
&Teacher-EMA update rate (\%)     & $\alpha\in\{0,0.5,1,2,5\}$& -- \\
&Rollout importance-sampling clip & $2$             & --  \\
\midrule
\multirow{3}{*}{\rotatebox[origin=c]{90}{\textbf{PPO}}}
&KL coefficient ($\lambda$)       & --              & 0.0 \\
&$\epsilon$-low                   & $0.20$ & $0.20$ \\
&$\epsilon$-high                  & $0.28$ & $0.28$ \\
\midrule
\multirow{6}{*}{\rotatebox[origin=c]{90}{\textbf{\small Training}}}
&Optimizer          & AdamW & AdamW \\
&Learning rate      & $10^{-5}$ & $10^{-6}$ \\
&Training steps     & $50$ ($20$ for \dcode{})  & $50$ ($20$ for \dcode{})   \\
&Warmup steps       & $10  $ & $10  $ \\
&Weight decay       & $0.01$ & $0.01$ \\
&Gradient Clip Norm & $1.0 $ & $1.0 $ \\
\bottomrule
\end{tabular}
\end{table}

\subsection{Data Samples}
\label{appdx:sec:data}
\begin{samplebox}{AIME}
\begin{Verbatim}[breaklines=true, breakanywhere=true]
Every morning Aya goes for a $9$-kilometer-long walk and stops at a coffee shop afterwards. When she walks at a constant speed of $s$ kilometers per hour, the walk takes her 4 hours, including $t$ minutes spent in the coffee shop. When she walks $s+2$ kilometers per hour, the walk takes her 2 hours and 24 minutes, including $t$ minutes spent in the coffee shop. Suppose Aya walks at $s+\frac{1}{2}$ kilometers per hour. Find the number of minutes the walk takes her, including the $t$ minutes spent in the coffee shop.
Please reason step by step, and put your final answer within \boxed{}.
\end{Verbatim}
\end{samplebox}

\begin{samplebox}{BFCLv4}
\begin{Verbatim}[breaklines=true, breakanywhere=true]
Your task is to answer the user's question using available tools.
You have access to the following tools:
Name: Tools
Description: A set of functions/tools that you can use to answer the user's question.
Documentation:
math.triangle_area_heron: Calculates the area of a triangle using Heron's formula, given the lengths of its three sides.
Parameters: {"side1": "integer. Length of the first side of the triangle.", "side2": "integer. Length of the second side of the triangle.", "side3": "integer. Length of the third side of the triangle."}
Output: Successful response.
 - Format: application/json
math.circle_area: Calculates the area of a circle given its radius.
Parameters: {"radius": "float. The radius of the circle."}
Output: Successful response.
 - Format: application/json
math.triangle_area_base_height: Calculates the area of a triangle using the formula (1/2)base*height.
Parameters: {"base": "float. The base length of the triangle.", "height": "float. The height of the triangle."}
Output: Successful response.
 - Format: application/json

Use the following format:
Thought: you should always think about what to do
Action: the action to take, should be one of the tool names.
Action Input: the input to the action, must be in JSON format. All of the action input must be realistic and from the user.

Begin!
Question: Calculate the area of a triangle, given the lengths of its three sides: 3, 4, and 5.
\end{Verbatim}
\end{samplebox}

\begin{samplebox}{\dcode{}}
\begin{Verbatim}[breaklines=true, breakanywhere=true]
You are tasked with creating a function that retrieves the details of a specific assignment based on its ID. The function should take a string representing the assignment ID as input and return a dictionary containing the assignment details. For the purpose of this question, you can assume that the assignment details will be represented as a fixed dictionary for demonstration. Implement a function named `get_assignment_details` that takes an assignment ID and returns the corresponding assignment details as a dictionary. If the assignment ID does not exist, return an empty dictionary. The assignment details for valid IDs are predefined as follows: '101': {'title': 'Math Homework', 'due_date': '2023-11-01'}, '102': {'title': 'Science Project', 'due_date': '2023-12-15'}, '103': {'title': 'History Essay', 'due_date': '2023-10-30'}.
\end{Verbatim}
\end{samplebox}

\begin{samplebox}{GPQA}
\begin{Verbatim}[breaklines=true, breakanywhere=true]
Two quantum states with energies E1 and E2 have a lifetime of 10^-9 sec and 10^-8 sec, respectively. We want to clearly distinguish these two energy levels. Which one of the following options could be their energy difference so that they can be clearly resolved?

A. 10^-8 eV
B. 10^-4 eV
C. 10^-11 eV
D. 10^-9 eV
Please reason step by step, and put your final answer within \boxed{}.
\end{Verbatim}
\end{samplebox}

\begin{samplebox}{\dmath{}}
\begin{Verbatim}[breaklines=true, breakanywhere=true]
In triangle $ABC$, $\sin \angle A = \frac{4}{5}$ and $\angle A < 90^\circ$. Let $D$ be a point outside triangle $ABC$ such that $\angle BAD = \angle DAC$ and $\angle BDC = 90^\circ$. Suppose that $AD = 1$ and that $\frac{BD}{CD} = \frac{3}{2}$. If $AB + AC$ can be expressed in the form $\frac{a\sqrt{b}}{c}$ where $a, b, c$ are pairwise relatively prime integers, find $a + b + c$.
Please reason step by step, and put your final answer within \boxed{}.
\end{Verbatim}
\end{samplebox}

\begin{samplebox}{Math500}
\begin{Verbatim}[breaklines=true, breakanywhere=true]
Convert the point $(0,3)$ in rectangular coordinates to polar coordinates.  Enter your answer in the form $(r,\theta),$ where $r > 0$ and $0 \le \theta < 2 \pi.$
Please reason step by step, and put your final answer within \boxed{}.
\end{Verbatim}
\end{samplebox}

\begin{samplebox}{MMLU-R}
\begin{Verbatim}[breaklines=true, breakanywhere=true]
Determine whether the polynomial in Z[x] satisfies an Eisenstein criterion for irreducibility over Q. x^2 - 12
A. Yes, with p=2.
B. Yes, with p=3.
C. Yes, with p=5.
D. No.
Please reason step by step, and put your final answer within \boxed{}.
\end{Verbatim}
\end{samplebox}

\begin{samplebox}{\dchem}
\begin{Verbatim}[breaklines=true, breakanywhere=true]
Which of the following molecules is the only correct reactant used in the synthesis of the product "Nc1ccn(Cc2ccc(Cl)cc2C(F)(F)F)n1" ?

A: O=C1c2ccccc2C(=O)N1c1ccn(Cc2ccc(Cl)cc2C(F)(F)F)n1
B: O=C1c2ccccc2C(Cl)N1c1ccc2ccc(Cl)nc2n1
C: O=C1c2ccccc2C(=O)N1Cc1ccc(Oc2ccc(Cl)c(C(F)(F)F)c2)cc1
D: O=C1c2ccccc2C(=O)N1CC=C(Cc1ccccc1)C(F)(F)F
Please reason step by step, and put your final answer within \boxed{}.
\end{Verbatim}
\end{samplebox}

\begin{samplebox}{\dtool}
\begin{Verbatim}[breaklines=true, breakanywhere=true]
Your task is to answer the user's question using available tools. 
You have access to the following tools:
Name: Axolotl
Description: Collection of axolotl pictures and facts
Documentation:
getRandomAxolotlImage: Retrieve a random axolotl image with information on the image source.
Parameters: {}
Output: Successful response.
 - Format: application/json
 - Structure: Object{url, source, description}
searchAxolotlImages: Search for axolotl images based on specific criteria such as color, gender, and size.
Parameters: {"color": "string. One of: [wild, leucistic, albino]. The color of the axolotl (e.g., 'wild', 'leucistic', 'albino', etc.).", "gender": "string. One of: [male, female]. The gender of the axolotl ('male', 'female').", "size": "string. One of: [small, medium, large]. The size of the axolotl ('small', 'medium', 'large').", "page": "integer. The page number for pagination purposes."}
Output: Successful response.
 - Format: application/json
 - Structure: Object{results: Array[Object{url, source, description}], pagination: Object{current_page, total_pages, total_results}}
getAxolotlFacts: Retrieve interesting facts about axolotls such as their habits, habitats, and physical characteristics.
Parameters: {"category": "string. One of: [habits, habitat, physical characteristics]. The category of facts to retrieve (e.g., 'habits', 'habitat', 'physical characteristics').", "limit": "integer. The maximum number of facts to return."}
Output: Successful response.
 - Format: application/json
 - Structure: Array[Object{fact, source}]

Use the following format:
Thought: you should always think about what to do
Action: the action to take, should be one of the tool names.
Action Input: the input to the action, must be in JSON format. All of the action input must be realistic and from the user.

Begin!
Question: Hey, can you show me a random picture of an axolotl?
\end{Verbatim}
\end{samplebox}

\begin{samplebox}{ZLogic}
\begin{Verbatim}[breaklines=true, breakanywhere=true]
There are 6 houses, numbered 1 to 6 from left to right, as seen from across the street. Each house is occupied by a different person. Each house has a unique attribute for each of the following characteristics:
 - Each person has a unique name: `Arnold`, `Peter`, `Eric`, `Alice`, `Bob`, `Carol`
 - People have unique favorite book genres: `biography`, `science fiction`, `fantasy`, `mystery`, `romance`, `historical fiction`
 - People have unique favorite sports: `baseball`, `basketball`, `swimming`, `volleyball`, `tennis`, `soccer`
 - People own unique car models: `honda civic`, `ford f150`, `tesla model 3`, `chevrolet silverado`, `bmw 3 series`, `toyota camry`

## Clues:
1. Eric is the person who loves mystery books.
2. The person who loves tennis is the person who loves fantasy books.
3. The person who loves soccer is directly left of the person who loves science fiction books.
4. There is one house between the person who owns a Honda Civic and the person who loves biography books.
5. Peter is somewhere to the right of Carol.
6. The person who loves tennis is in the first house.
7. The person who owns a Tesla Model 3 is somewhere to the right of the person who loves baseball.
8. Eric is somewhere to the left of the person who loves romance books.
9. The person who owns a Toyota Camry is somewhere to the right of the person who loves romance books.
10. The person who owns a BMW 3 Series is Peter.
11. The person who owns a BMW 3 Series is the person who loves basketball.
12. The person who owns a Tesla Model 3 is directly left of Arnold.
13. Alice and the person who loves volleyball are next to each other.
14. The person who loves historical fiction books is the person who loves soccer.
15. The person who owns a Chevrolet Silverado is not in the first house.
16. The person who loves science fiction books is directly left of the person who loves swimming.

What is Name of the person who lives in House 5?
A. Eric
B. Bob
C. Alice
D. Peter
E. Carol
F. Arnold
Please reason step by step, and put your final answer within \boxed{}.
\end{Verbatim}
\end{samplebox}

\subsection{Model Chat Templates}

\begin{tmplbox}{Qwen3-4B-Thinking-2507}
\begin{Verbatim}[breaklines=true, breakanywhere=true]
<|im_start|>user\n{prompt}<|im_end|>
<|im_start|>assistant\n<think>
\end{Verbatim}
\end{tmplbox}

\begin{tmplbox}{Qwen3-4B-Instruct-2507}
\begin{Verbatim}[breaklines=true, breakanywhere=true]
<|im_start|>user\n{prompt}<|im_end|>
<|im_start|>assistant\n
\end{Verbatim}
\end{tmplbox}

\begin{tmplbox}{OLMo-3-7B-Instruct}
\begin{Verbatim}[breaklines=true, breakanywhere=true]
<|im_start|>system\nYou are a helpful function-calling AI assistant. You do not currently have access to any functions. <functions></functions><|im_end|>
<|im_start|>user\n{prompt}<|im_end|>
<|im_start|>assistant\n
\end{Verbatim}
\end{tmplbox}

\begin{tmplbox}{OLMo-3-7B-Think}
\begin{Verbatim}[breaklines=true, breakanywhere=true]
<|im_start|>system\nYou are OLMo, a helpful function-calling AI assistant built by Ai2. Your date cutoff is November 2024, and your model weights are available at https://huggingface.co/allenai. You do not currently have access to any functions. <functions></functions><|im_end|>
<|im_start|>user\n{prompt}<|im_end|>
<|im_start|>assistant\n<think>
\end{Verbatim}
\end{tmplbox}

\subsection{Prompt Templates}
\label{appdx:sec:prompt}
\begin{promptbox}{Prompt for \dmath{} \& AIME \& Math500 \& \dchem{} \& GPQA}
\begin{Verbatim}[breaklines=true, breakanywhere=true]
{question}
Please reason step by step, and put your final answer within \boxed{}.
\end{Verbatim}
\end{promptbox}

\begin{promptbox}{Prompt for \dtool{} \& BFCLv4}
\begin{Verbatim}[breaklines=true, breakanywhere=true]
Your task is to answer the user's question using available tools. 
You have access to the following tools:
{tool}

Use the following format:
Thought: you should always think about what to do
Action: the action to take, should be one of the tool names.
Action Input: the input to the action, must be in JSON format. All of the action input must be realistic and from the user.

Begin!
Question: {question}
\end{Verbatim}
\end{promptbox}

\begin{promptbox}{SDPO Regeneration Prompt}
\begin{Verbatim}[breaklines=true, breakanywhere=true]
{question}
Please reason step by step, and put your final answer within \boxed{}.
Correct solution: {previously correct solution}
Correctly solve the original question.
\end{Verbatim}
\end{promptbox}

% \begin{promptbox}{Regeneration prompt with encouraging uncertainty}
% \begin{Verbatim}[breaklines=true, breakanywhere=true]
% {question}
% Please reason step by step, and put your final answer within \boxed{}.
% Correct solution: {previously correct solution}
% Solve the original question from scratch with a full, detailed reasoning trace, without skipping any steps, and include as much uncertainty or ambiguity in the explanation.
% \end{Verbatim}
% \end{promptbox}

\section{Supplementary Results}

\begin{table}[t]
\centering\small
\caption{SDPO performance (Acc@8) against teacher EMA $\alpha$. No $\alpha$ is globally best.}
\label{appdx:tab:sdpoema}
\begin{tabular}{cccccccccccccc}
\toprule
Trained on & $\alpha (\%)$ & AIME         & Math500        & GPQA           & MMLU-R         & BFCLv4         & LCBv6          & Avg        \\
\midrule
--            & --         & 32.71          & 73.85          & 62.94          & 86.02          & 76.25          & 46.73          & 63.08 \\
\midrule
\multirow{5}{*}{\dmath}
& $0$           & \bronze{56.42} & \bronze{84.29} & \silver{58.72} & \gold{85.47}   & \bronze{77.71} & 48.52          & \silver{68.61} \\
& $0.5$         & \gold{57.92}   & \gold{84.52}   & 57.01          & 84.65          & 76.46          & \silver{51.10} & \silver{68.61} \\
& $1$           & \silver{56.04} & \silver{84.35} & \bronze{58.21} & \silver{85.31} & \bronze{77.71} & \gold{51.79}   & \gold{69.06} \\
& $2$           & 52.71          & 84.02          & \gold{59.53}   & \bronze{85.00} & \gold{78.75}   & 50.35          & 68.39 \\
& $5$           & 34.38          & 78.97          & 55.87          & 83.82          & \silver{78.12} & \bronze{50.86} & 63.67 \\
\midrule
\multirow{5}{*}{\dtool{}}
& $0$           & \gold{47.71}   & \gold{78.88}   & \silver{60.73} & \gold{87.82}   & \silver{76.04} & \gold{58.02}   & \gold{68.20} \\
& $0.5$         & 43.13          & 77.08          & \gold{61.49}   & \silver{87.40} & 72.08          & 51.68          & 65.48 \\
& $1$           & 41.04          & 77.18          & 58.78          & \bronze{87.18} & \silver{76.04} & 48.62          & 64.81 \\
& $2$           & \bronze{45.00} & \bronze{77.70} & \bronze{58.90} & 86.75          & 74.79          & \bronze{53.51} & \bronze{66.11} \\
& $5$           & \gold{47.71}   & \silver{78.57} & 57.39          & 86.25          & \gold{77.71}   & \silver{55.55} & \silver{67.20} \\
\midrule
\multirow{5}{*}{\dchem{}}
& $0$           & \silver{55.63} & \silver{79.25} & \silver{61.62} & \silver{87.00} & 74.79          & 51.94          & \bronze{68.37} \\
& $0.5$         & 53.96          & 79.10          & \bronze{61.05} & \gold{87.42}   & \silver{75.21} & \bronze{52.93} & 68.28 \\
& $1$           & \gold{55.84}   & \bronze{79.20} & \gold{62.31}   & \bronze{86.92} & \silver{75.21} & 51.68          & \silver{68.53} \\
& $2$           & \bronze{55.00} & \gold{80.97}   & 59.47          & 86.40          & \gold{76.04}   & \silver{54.92} & \gold{68.80} \\
& $5$           & 53.34          & 78.42          & 57.39          & 84.10          & 72.29          & \gold{56.11}   & 66.94 \\
\bottomrule
\end{tabular}
\end{table}
\begin{table}[t]
\centering\small \setlength{\tabcolsep}{7.5pt}
\caption{SDPO with and without CoT Performance (Acc@8).
CoT enhances \dtool{} training while impairing \dmath{} and \dchem{}.
Larger teacher EMA $\alpha$ generally worsens CoT performance.}
\label{appdx:tab:sdpothink}
\begin{tabular}{cccccccccc}
\toprule
$\alpha (\%)$            & CoT            & Trained on & AIME & Math500 & GPQA  & MMLU-R & BFCLv4 & LCBv6 \\
\midrule
--                       & --             & --         & 32.71 & 73.85   & 62.94 & 86.02  & 76.25  & 46.73 \\
\midrule
$0$                      & \texttimes & \dmath{}& 56.42 & 84.29   & 58.72 & 85.47  & 77.71  & 48.52 \\
\rowcolor{silver!50} $0$ & \checkmark & \dmath{}& 38.13 & 75.58   & 58.59 & 86.22  & 66.46  & 56.82 \\
$1$                      & \texttimes & \dmath{}& 56.04 & 84.89   & 58.62 & 85.31  & 77.71  & 51.79 \\
\rowcolor{silver!50} $1$ & \checkmark & \dmath{}& 37.71 & 75.58   & 56.25 & 86.20  & 66.88  & 54.45 \\
$5$                      & \texttimes & \dmath{}& 34.38 & 78.97   & 55.87 & 83.82  & 78.12  & 50.86 \\
\rowcolor{silver!50} $5$ & \checkmark & \dmath{}& 8.13  & 69.50   & 40.53 & 80.95  & 34.38  & 41.76 \\
$0$                      & \texttimes & \dchem{}& 55.63 & 79.25   & 61.62 & 87.00  & 74.79  & 51.94 \\
\rowcolor{silver!50} $0$ & \checkmark & \dchem{}& 42.08 & 76.25   & 55.05 & 87.08  & 73.12  & 57.08 \\
$1$                      & \texttimes & \dchem{}& 55.84 & 79.20   & 62.31 & 86.92  & 75.21  & 51.68 \\
\rowcolor{silver!50} $1$ & \checkmark & \dchem{}& 37.29 & 75.60   & 55.11 & 86.65  & 71.67  & 60.46 \\
$5$                      & \texttimes & \dchem{}& 53.34 & 78.42   & 57.39 & 84.10  & 72.29  & 56.11 \\
\rowcolor{silver!50} $5$ & \checkmark & \dchem{}& 15.63 & 71.50   & 39.96 & 81.85  & 60.00  & 47.77 \\
$0$                      & \texttimes & \dtool{}& 47.71 & 78.88   & 60.73 & 87.82  & 76.04  & 58.48 \\
\rowcolor{silver!50} $0$ & \checkmark & \dtool{}& 53.34 & 82.28   & 56.69 & 84.38  & 80.62  & 56.66 \\
$1$                      & \texttimes & \dtool{}& 41.04 & 77.18   & 58.78 & 87.18  & 76.04  & 49.69 \\
\rowcolor{silver!50} $1$ & \checkmark & \dtool{}& 53.55 & 82.90   & 56.38 & 84.55  & 80.42  & 59.11 \\
$5$                      & \texttimes & \dtool{}& 47.71 & 78.57   & 57.39 & 86.25  & 77.71  & 55.90 \\
\rowcolor{silver!50} $5$ & \checkmark & \dtool{}& 53.34 & 83.00   & 50.06 & 83.30  & 77.08  & 52.50 \\
\bottomrule
\end{tabular}
\end{table}
\Cref{appdx:tab:sdpoema} presents the performance scores used to plot \Cref{fig:sdpoema:result},
and displays a configuration matrix of EMA teachers $\alpha\in\{0, 0.5\%, 1\%, 2\%, 5\%\}$ and
training datasets \dmath{}, \dtool{}, and \dchem{}.
\Cref{appdx:tab:sdpothink} extends \Cref{fig:sdpothink}
for SDPO versus $\sdpocot$ under more teacher EMA settings and evaluation benchmarks.
The large degradation at $\alpha=1\%, 5\%$ further verifies \emph{confirmation bias}. 

\begin{table}[ht]
\centering\small \setlength{\tabcolsep}{4pt}
\caption{Performance (Acc@8) trained on a single domain. 
Rows indicate training domain and method; columns indicate evaluation benchmarks. GRPO outperforms SDPO at generalization.}
\label{appdx:tab:singledomain}
% \documentclass[]{article}
% \input{section/preamble}

% \begin{document}

% \begin{table}[ht]
% \small\setlength{\tabcolsep}{3pt}

\begin{NiceTabular}{llcccccccc}
\CodeBefore
  % \cellcolor{gray!15}{2-2} % Colors cell in row 2, column 2
\Body
\toprule
% &&\multicolumn{2}{c}{\dmath}&\multicolumn{2}{c}{\dchem{}}&\multicolumn{2}{c}{\dtool{}}&\dataset{Logic}&\dataset{Know.}\\
% \cmidrule(lr){3-4}\cmidrule(lr){5-6}\cmidrule(lr){7-8}\cmidrule(lr){9-9}\cmidrule(lr){10-10}
&Method&{AIME}&{Math500}&{\dchem{}}&{ GPQA  }&{ \dtool{} }&{ BFCLv4}&{ ZLogic }&{ MMLU-R} \\
\midrule
\multicolumn{10}{c}{Olmo-3-7B-Instruct-DPO}\\
\midrule
&--&19.17&61.30&32.50&30.24&42.46&76.46&60.48&69.92\\
\midrule\multirow{2}{*}{\rotatebox[origin=c]{90}{{\scriptsize\dmath}}}
&GRPO&18.96&64.82&35.24&35.23&40.62&71.88&64.95&72.00\\
&\sdpoz&13.75&51.24&21.93&24.94&34.38&57.08&34.31&58.96\\
\midrule
\multicolumn{10}{c}{Olmo-3-7B-Think-DPO}\\
\midrule
&--&46.04&71.53&30.89&36.30&9.19&3.96&59.18&79.35\\
\midrule\multirow{4}{*}{\rotatebox[origin=c]{90}{{\dmath}}}
&GRPO&51.67&74.95&37.56&44.32&9.38&3.33&61.58&79.47\\
&\sdpof   &41.67&72.90&28.15&33.71&8.09&0.00&43.88&72.20\\
&\sdpoz   &51.25&75.95&32.98&38.45&8.46&0.62&54.40&79.00\\
&\sdpozcot&54.59&74.52&41.79&43.81&7.17&0.00&61.78&78.53\\
\bottomrule
\end{NiceTabular}

% \caption{Caption}
% \label{tab:placeholder}
% \end{table}

% \end{document}
\end{table}
\begin{table}[ht]
\centering\small
\caption{Representative gain, retention, and forgetting cases. 
Intermediate-distance targets exhibit the clearest forgetting, while source-like and very distant targets are often stable or improve.}
\label{tab:intermediate_cases}
\begin{tabular}{lllcccccc}
\toprule
& Train & Test
& MMD$\cdot10^{-2}$ & $\Delta_{\sdpof{}}$ & $\Delta_{\sdpoz{}}$ & Gain & Retention & Forgetting  \\
\midrule
\multirow{4}{*}{\rotatebox[origin=c]{90}{{Near}}}
& \dmath{}& AIME    & $0.64$ & \dfcell{1.67 }& \dfcell{23.71} & \checkmark  \\
& \dmath{}& Math500 & $1.38$ & \dfcell{5.12 }& \dfcell{10.44} & \checkmark  \\
& \dcode{}& LCBv6   & $3.74$ & \dfcell{11.74}& \dfcell{13.39} & \checkmark  \\
& \dtool{}& BFCLv4  & $4.75$ & \dfcell{1.46 }& \dfcell{-0.21 }&&\checkmark  \\
\midrule
\multirow{6}{*}{\rotatebox[origin=c]{90}{{Intermediate}}}
& \dmath{}& GPQA    & $6.74 $ & \dfcell{-7.07} & \dfcell{-4.22} &&&\checkmark  \\
& \dchem{}& GPQA    & $11.42$ & \dfcell{-5.55} & \dfcell{-1.32} &&&\checkmark  \\
& \dtool{}& GPQA    & $18.89$ & \dfcell{-5.55} & \dfcell{-2.21} &&&\checkmark  \\
& \dcode{}& GPQA    & $11.90$ & \dfcell{-4.61} & \dfcell{-2.65} &&&\checkmark  \\
& \dmath{}& ZLogic  & $21.98$ & \dfcell{-7.94} & \dfcell{-7.12} &&&\checkmark  \\
& \dchem{}& BFCLv4  & $25.16$ & \dfcell{-3.96} & \dfcell{-1.46} &&&\checkmark  \\
\midrule
\multirow{3}{*}{\rotatebox[origin=c]{90}{{Distant}}}
& \dchem{}& ZLogic & $32.66$ & \dfcell{9.06}  & \dfcell{8.18} & \checkmark  \\
& \dtool{}& ZLogic & $43.33$ & \dfcell{2.40}  & \dfcell{4.86} & \checkmark  \\
& \dcode{}& ZLogic & $31.66$ & \dfcell{-1.09} & \dfcell{0.96} &&\checkmark  \\
\bottomrule
\end{tabular}
\end{table}
\begin{table}[H]
\centering\small \setlength{\tabcolsep}{4pt}
\caption{\sdpof{} performance (Acc@8) continually trained with various data at 2nd \& 3rd stage. Altering order only partially mitigates collapse; SDPO still degrades significantly under data shifts.}
\label{appdx:tab:cl_collapse}
\begin{tabular}{clcccccccccccc}
\toprule
&&\multicolumn{2}{c}{\dmath{}}&\multicolumn{2}{c}{\dchem{}}&\multicolumn{2}{c}{\dtool{}}&\multicolumn{2}{c}{\dcode{}}&\dataset{Logic}&\dataset{Know.}\\
\cmidrule(lr){3-4}\cmidrule(lr){5-6}\cmidrule(lr){7-8}\cmidrule(lr){9-10}\cmidrule(lr){11-11}\cmidrule(lr){12-12}
&Trained on&{\scriptsize AIME}&{\scriptsize Math500}&{\scriptsize \dchem{}}&{\scriptsize  GPQA  }&{\scriptsize  \dtool{} }&{\scriptsize  BFCLv4}&{\scriptsize  \dcode{} }&{\scriptsize  LCBv6 }&{\scriptsize  ZLogic }&{\scriptsize  MMLU-R} \\
\midrule
&\multicolumn{1}{c}{--}&32.71&73.85&50.54&62.94&48.90&76.25&56.39&46.73&81.62&86.02 \\
\midrule
\multirow{8}{*}{\rotatebox[origin=c]{90}{\sdpof}}
&1. \dmath{}&34.38&78.97&45.00&55.87&53.31&78.12&52.37&50.86&73.68&83.82 \\
\cmidrule(lr){2-12}
&2. \dchem{}&21.25&73.95&76.37&47.98&30.15&51.25&59.33&48.82&61.88&82.12 \\
&3. \dtool{}& 0.00& 0.00& 0.18& 0.13& 0.00& 0.00& 0.00& 0.00& 0.10& 0.12 \\
\cmidrule(lr){2-12}
&2. \dchem{}&21.25&73.95&76.37&47.98&30.15&51.25&59.33&48.82&61.88&82.12 \\
&3. \dcode{}&17.09&69.53&74.52&42.93& 0.74& 0.00&54.79&54.01&58.00&80.50 \\
\cmidrule(lr){2-12}
&2. \dtool{}&22.29&73.82&38.63&42.36&12.68& 9.79&   --&   --&56.95&77.35 \\
&3. \dchem{}&10.42&66.50&72.56&36.55&13.60&20.62&   --&   --&49.20&77.75 \\
\bottomrule
\end{tabular}
\end{table}

For single-domain training, \Cref{appdx:tab:singledomain} examines SDPO and GRPO for Olmo-3 series.
For continual training, \Cref{appdx:tab:cl} presents detailed scores for \Cref{fig:cl,tab:cl} across each stage, with a data-ordering ablation on collapsed \sdpof{} in \Cref{appdx:tab:cl_collapse}.
We supplement diagnoistics (\Cref{fig:analysis}) with additional matrices (\Cref{appdx:fig:svd:mathcl}), more training (\Cref{appdx:fig:svd:more}), and more benchmarks (\Cref{appdx:fig:rollout}).

\begin{table}[p]
\centering\small
\setlength{\tabcolsep}{2.5pt}
\caption{SDPO versus GRPO performance (Acc@8) continually trained.
\sdpoz{} and \sdpof{} denote SDPO with teacher EMA $0$ and $5\%$, respectively.
\sdpozcot{} denotes \sdpoz{} with CoT distilled.
\sdpomask{25} denotes \sdpof{} randomly masked out 25\% tokens.
S-\sdpoz{} denotes SDPO with restart-and-freeze teacher strategy ($\gamma=1$).}
\label{appdx:tab:cl}
\begin{tabular}{clccccccccccc}
\toprule
&&\multicolumn{2}{c}{\dmath{}}&\multicolumn{2}{c}{\dchem{}}&\multicolumn{2}{c}{\dtool{}}&\multicolumn{2}{c}{\dcode{}}&\dataset{Logic}&\dataset{Know.}&\multirow{2}{*}{Avg}\\
\cmidrule(lr){3-4}\cmidrule(lr){5-6}\cmidrule(lr){7-8}\cmidrule(lr){9-10}\cmidrule(lr){11-11}\cmidrule(lr){12-12}
&Trained on&{\scriptsize AIME}&{\scriptsize Math500}&{\scriptsize \dchem{}}&{\scriptsize  GPQA  }&{\scriptsize  \dtool{} }&{\scriptsize  BFCLv4}&{\scriptsize  \dcode{} }&{\scriptsize  LCBv6 }&{\scriptsize  ZLogic }&{\scriptsize  MMLU-R} \\
\midrule
\multicolumn{13}{c}{Qwen3-4B-Thinking-2507}\\
\midrule
&\multicolumn{1}{c}{--}&32.71&73.85&50.54&62.94&48.90&76.25&56.39&46.73&81.62&86.02&61.60 \\
\midrule
\multirow{4}{*}{\rotatebox[origin=c]{90}{{GRPO}}}
&1. \dmath{}&44.67&76.84&51.33&62.83&50.48&78.33&60.73&52.93&80.82&85.74&64.47 \\
&2. \dchem{}&54.88&78.28&56.39&64.23&52.06&76.67&60.06&50.17&88.23&87.91&66.89 \\
&3. \dtool{}&56.21&79.49&60.64&63.99&58.71&80.83&60.49&52.23&90.10&88.54&69.12 \\
&4. \dcode{}&57.29&79.88&61.85&64.58&57.72&81.25&64.68&53.56&89.52&88.70&69.90 \\
\midrule
\multirow{4}{*}{\rotatebox[origin=c]{90}{\sdpoz}}
&1. \dmath{}&56.42&84.29&47.36&58.72&52.06&77.71&57.58&48.52&74.50&85.47&64.26 \\
&2. \dchem{}&49.29&81.75&65.70&59.03&48.49&76.67&58.35&56.88&87.17&86.38&66.97 \\
&3. \dtool{}&41.13&80.02&70.37&54.71&37.35&30.83&56.39&54.25&80.03&84.92&59.00 \\
&4. \dcode{}&34.38&77.40&70.71&51.83& 9.93&11.25&61.77&51.89&72.15&84.20&52.55 \\
\midrule
\multirow{4}{*}{\rotatebox[origin=c]{90}{\sdpozcot}}
&1. \dmath{}&38.13&75.58&49.88&58.59&48.35&66.46&52.70&56.82&77.00&86.22&60.97 \\
&2. \dchem{}&27.71&74.65&70.89&54.10&42.83&61.67&47.28&46.28&76.65&85.95&58.80 \\
&3. \dtool{}&21.04&71.88&70.71&49.37&38.60&36.04&41.94&44.88&66.22&84.38&52.51 \\
&4. \dcode{}&14.17&69.53&68.69&43.24&23.35&13.54&30.01&24.32&59.95&81.62&42.84 \\
% \midrule
% \multirow{3}{*}{\rotatebox[origin=c]{90}{\scriptsize S-\sdpoz{}}}
% &1. \dmath{}&57.50&85.05&49.11&60.86&51.47&76.67&--&--&74.68&86.60&--\\
% &2. \dchem{}&52.92&79.77&61.96&59.91&35.48&50.83&--&--&86.25&86.75&--\\
% &3. \dtool{}&\\
\midrule
\multirow{4}{*}{\rotatebox[origin=c]{90}{\sdpof}}
&1. \dmath{}&34.38&78.97&45.00&55.87&53.31&78.12&52.37&50.86&73.68&83.82&60.64 \\
&2. \dchem{}&21.25&73.95&76.37&47.98&30.15&51.25&59.33&48.82&61.88&82.12&55.31 \\
&3. \dtool{}& 0.00& 0.00& 0.18& 0.13& 0.00& 0.00& 0.00& 0.00& 0.10& 0.12& 0.05 \\
&4. \dcode{}& 0.00& 0.00& 0.00& 0.00& 0.00& 0.00& 0.00& 0.00& 0.02& 0.18& 0.02 \\
\midrule
\multirow{3}{*}{\rotatebox[origin=c]{90}{\scriptsize\sdpomask{25}}}
&1. \dmath{}&37.09&79.88&42.32&56.12&52.39&74.58&54.71&51.51&71.80&84.35&60.48 \\
&2. \dchem{}&18.54&72.38&72.32&45.08&23.16&23.96&57.73&48.75&54.12&81.90&49.79 \\
&3. \dtool{}& 0.00& 0.00& 5.42& 1.96& 0.00& 0.00& 0.00& 0.00& 1.50& 9.65& 1.85 \\
\midrule
\multirow{3}{*}{\rotatebox[origin=c]{90}{\scriptsize\sdpomask{50}}}
&1. \dmath{}&39.17&78.53&43.99&55.81&50.00&76.67&54.01&51.75&75.20&84.72&60.98 \\
&2. \dchem{}&25.84&76.10&77.98&51.52&35.48&44.58&48.18&49.56&72.05&82.98&56.43 \\
&3. \dtool{}& 0.00& 0.12&34.17&18.62& 0.00& 0.00& 0.09& 0.00&55.47&43.38&15.18 \\
\midrule
\multirow{3}{*}{\rotatebox[origin=c]{90}{\scriptsize\sdpomask{75}}}
&1. \dmath{}&37.50&79.05&43.15&56.12&53.68&80.00&53.53&51.93&75.22&84.85&61.50 \\
&2. \dchem{}&18.55&73.05&70.95&44.00&42.83&63.75&59.02&50.39&56.53&81.73&55.31 \\
&3. \dtool{}& 0.63&28.48&54.70&28.22& 0.00& 0.00& 8.18& 7.31&45.55&62.02&23.51 \\
\midrule
\multicolumn{13}{c}{Qwen3-4B-Instruct-2507}\\
\midrule
&--         &50.00&77.10&24.23&14.71&59.01&83.12&--&--&48.00&56.75&51.61\\
\midrule
\multirow{3}{*}{\rotatebox[origin=c]{90}{GRPO}}
&1. \dmath{}&56.00&79.94&25.94&17.23&58.97&83.33&--&--&49.40&57.60&53.55 \\
&2. \dchem{}&56.88&81.12&59.35&58.84&57.72&82.92&--&--&84.22&86.52&70.95 \\
&3. \dtool{}&58.34&81.88&60.54&59.91&65.99&75.42&--&--&84.80&87.85&71.84 \\
\midrule
\multirow{3}{*}{\rotatebox[origin=c]{90}{\sdpoz}}
&1. \dmath{}&35.84&71.15&12.20& 5.49&59.74&82.29&--&--& 9.15& 2.97&34.85 \\
&2. \dchem{}&27.08&64.53&11.79& 8.90&57.90&82.92&--&--&11.70& 9.48&34.29 \\
&3. \dtool{}&20.63&54.25& 7.14& 4.17& 0.00& 0.00&--&--& 1.48& 5.35&11.63 \\
\midrule
\multirow{3}{*}{\rotatebox[origin=c]{90}{SFT}}
&1. \dmath{}& 0.42&20.78&27.38&23.99&52.94&78.12&--&--& 2.58&66.55&34.09\\
&2. \dchem{}& 0.00& 0.00&72.74&30.74& 0.00& 7.92&--&--&30.22&69.30&26.36\\
&3. \dtool{}& 0.00& 0.00& 0.00& 0.00& 0.00& 0.00&--&--& 0.00& 0.00& 0.00\\
\bottomrule
\end{tabular}
\end{table}

% we dont really care about these 'additional figures'
\clearpage

\begin{figure}[p]
\centering
\begin{subfigure}[b]{0.33\textwidth}
\includegraphics[width=\textwidth]{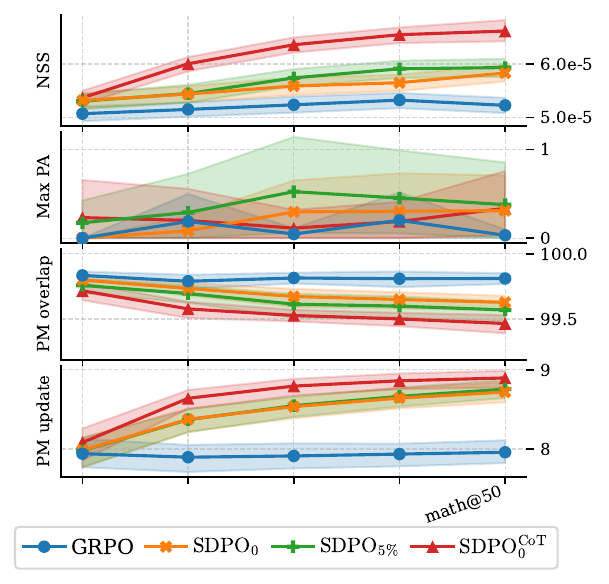}
\caption{MLP Down Projections}
\end{subfigure}\hfill
\begin{subfigure}[b]{0.33\textwidth}
\includegraphics[width=\textwidth]{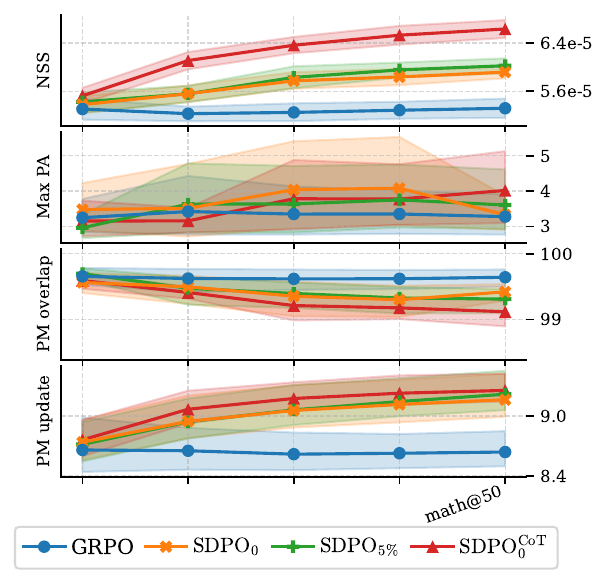}
\caption{MLP Up Projections}
\end{subfigure}\hfill
\begin{subfigure}[b]{0.33\textwidth}
\includegraphics[width=\textwidth]{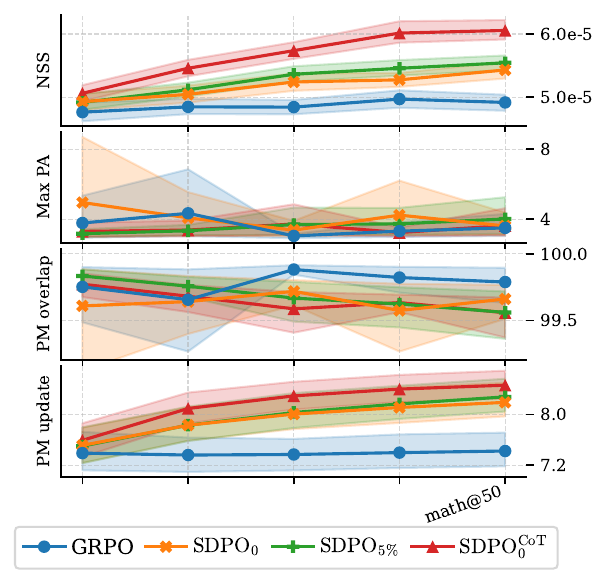}
\caption{MLP Gate Projections}
\end{subfigure}\\
\begin{subfigure}[b]{0.33\textwidth}
\includegraphics[width=\textwidth]{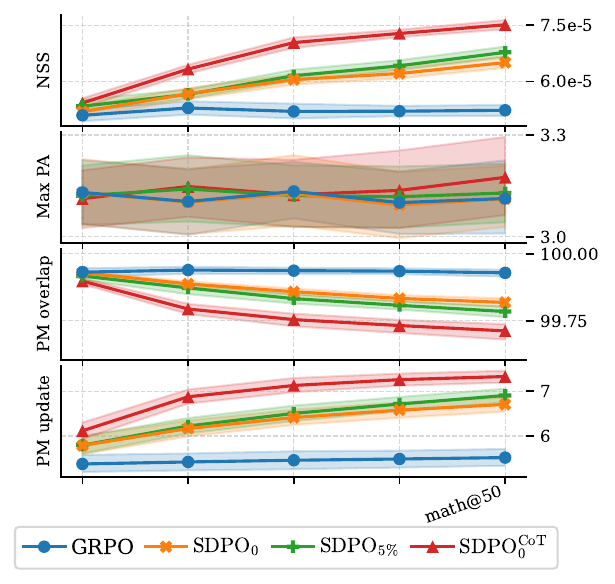}
\caption{Self-Attention $Q$s}
\end{subfigure}\hfill
\begin{subfigure}[b]{0.33\textwidth}
\includegraphics[width=\textwidth]{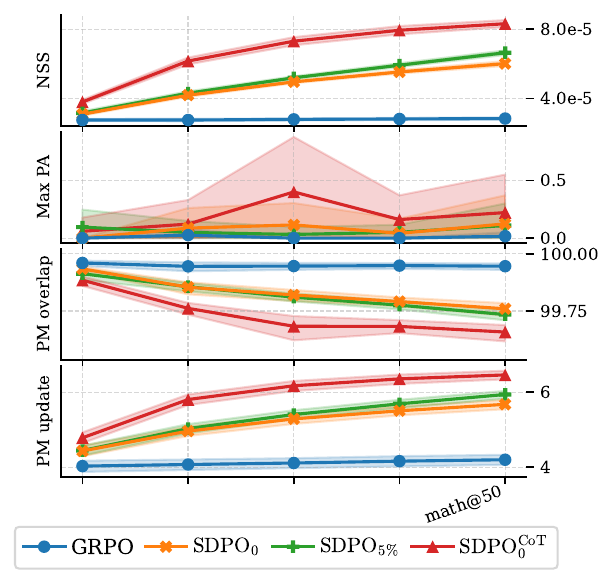}
\caption{Self-Attention $K$s}
\end{subfigure}\hfill
\begin{subfigure}[b]{0.33\textwidth}
\includegraphics[width=\textwidth]{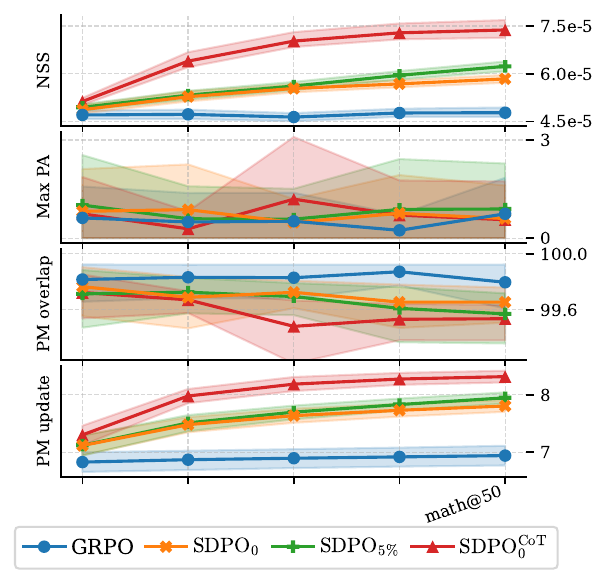}
\caption{Self-Attention $O$s}
\end{subfigure}\\
\begin{subfigure}[b]{0.33\textwidth}
\includegraphics[width=\textwidth]{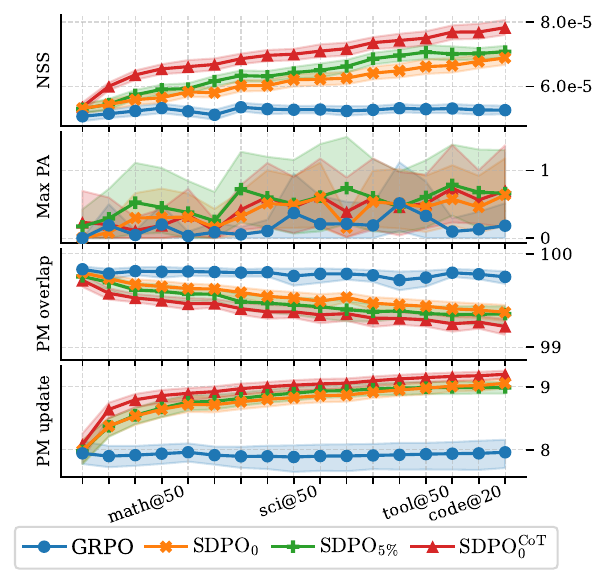}
\caption{MLP Down Projections}
\end{subfigure}\hfill
\begin{subfigure}[b]{0.33\textwidth}
\includegraphics[width=\textwidth]{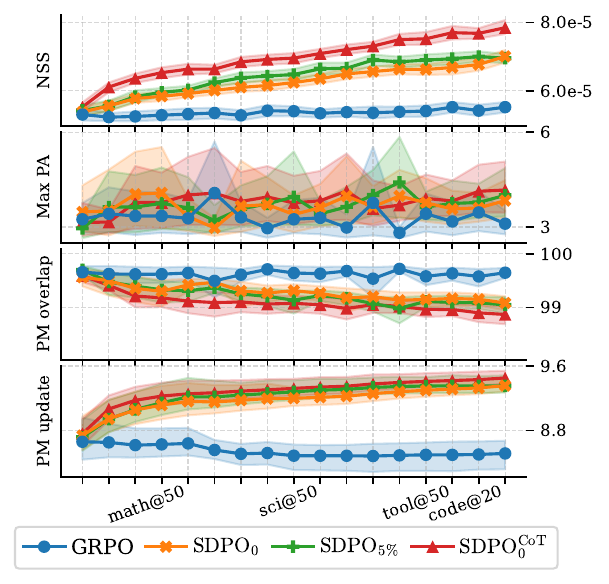}
\caption{MLP Up Projections}
\end{subfigure}\hfill
\begin{subfigure}[b]{0.33\textwidth}
\includegraphics[width=\textwidth]{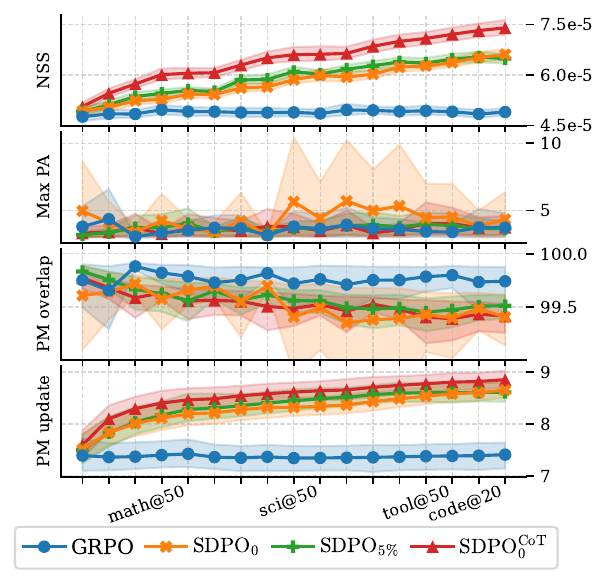}
\caption{MLP Gate Projections}
\end{subfigure}\\
\begin{subfigure}[b]{0.33\textwidth}
\includegraphics[width=\textwidth]{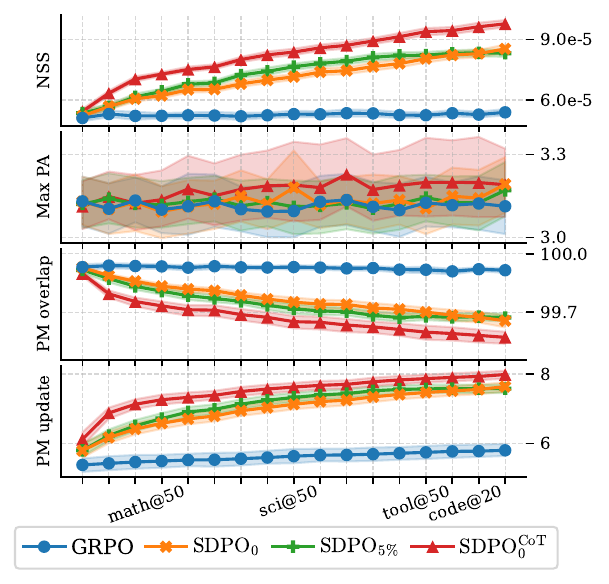}
\caption{Self-Attention $Q$s}
\end{subfigure}\hfill
\begin{subfigure}[b]{0.33\textwidth}
\includegraphics[width=\textwidth]{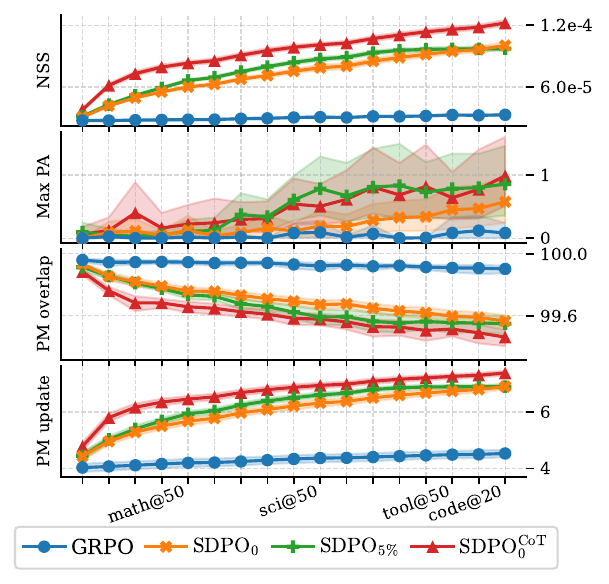}
\caption{Self-Attention $K$s}
\end{subfigure}\hfill
\begin{subfigure}[b]{0.33\textwidth}
\includegraphics[width=\textwidth]{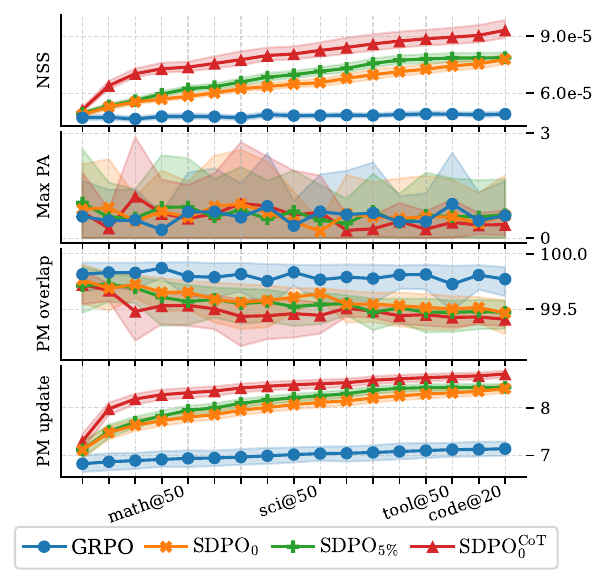}
\caption{Self-Attention $O$s}
\end{subfigure}\\
\caption{Parameter drift ($y$-axis) against training steps ($x$-axis), with metrics averaged across all layers with a 95\% confidence interval. \textbf{(a-f)} \dmath{} training. \textbf{(g-l)} continual training.}
\label{appdx:fig:svd:mathcl}
\end{figure}

\begin{figure}[p]
\centering
\begin{subfigure}[b]{0.25\textwidth}
\includegraphics[width=\textwidth]{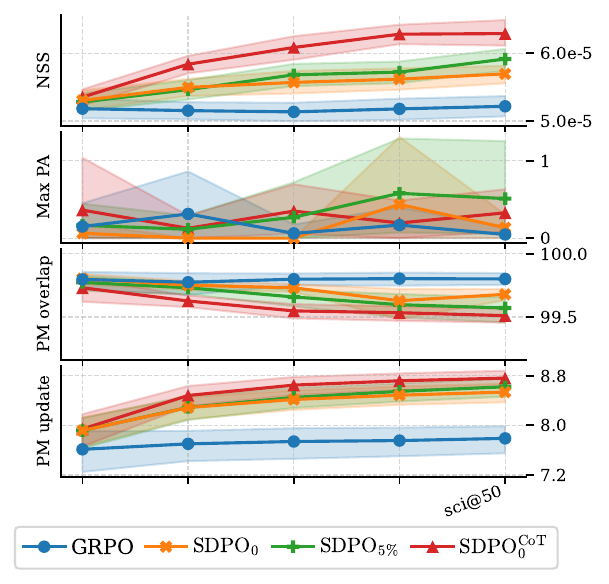}
\caption{\dchem{} MLP Down}
\end{subfigure}\hfill%
\begin{subfigure}[b]{0.25\textwidth}
\includegraphics[width=\textwidth]{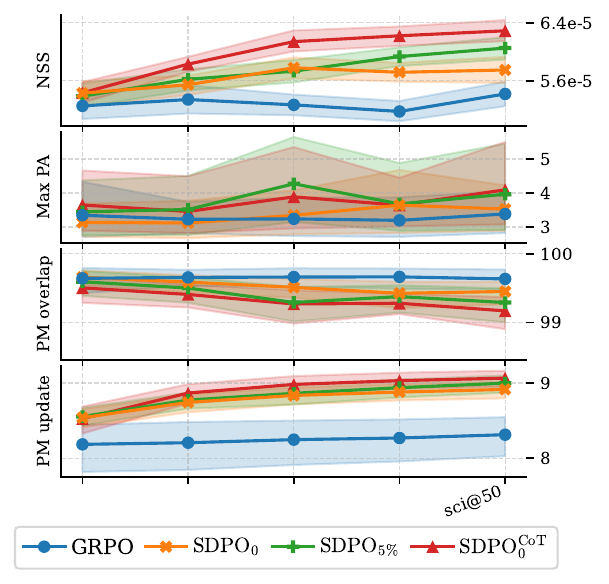}
\caption{\dchem{} MLP Up}
\end{subfigure}\hfill%
\begin{subfigure}[b]{0.25\textwidth}
\includegraphics[width=\textwidth]{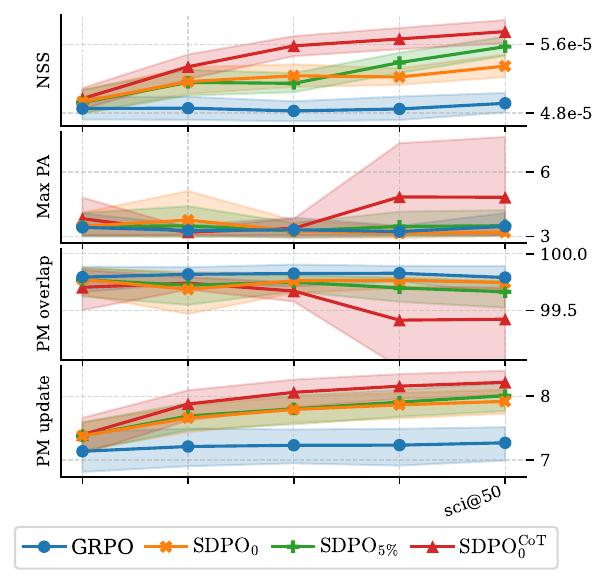}
\caption{\dchem{} MLP Gate}
\end{subfigure}\hfill%
\begin{subfigure}[b]{0.25\textwidth}
\includegraphics[width=\textwidth]{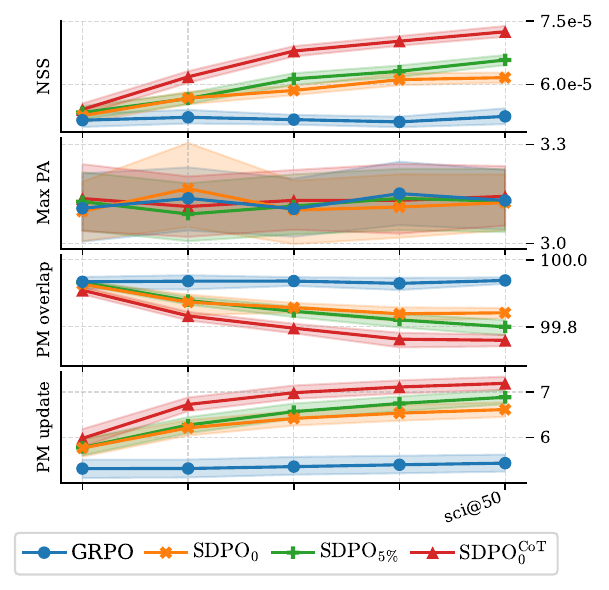}
\caption{\dchem{} Self-Attn $Q$s}
\end{subfigure}\hfill%
\begin{subfigure}[b]{0.25\textwidth}
\includegraphics[width=\textwidth]{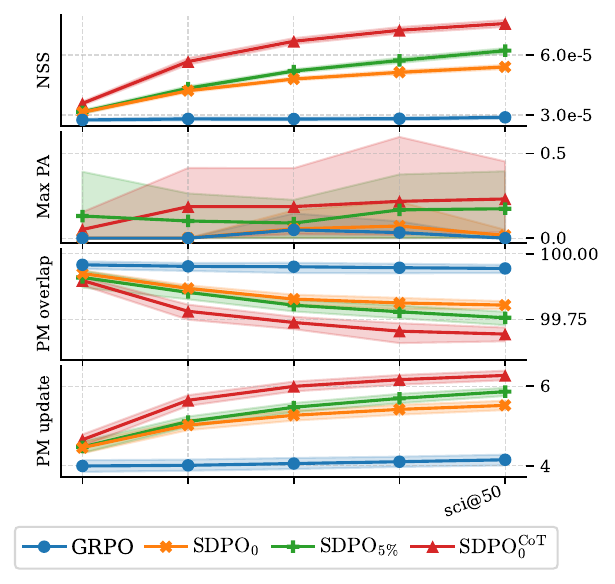}
\caption{\dchem{} Self-Attn $K$s}
\end{subfigure}\hfill%
\begin{subfigure}[b]{0.25\textwidth}
\includegraphics[width=\textwidth]{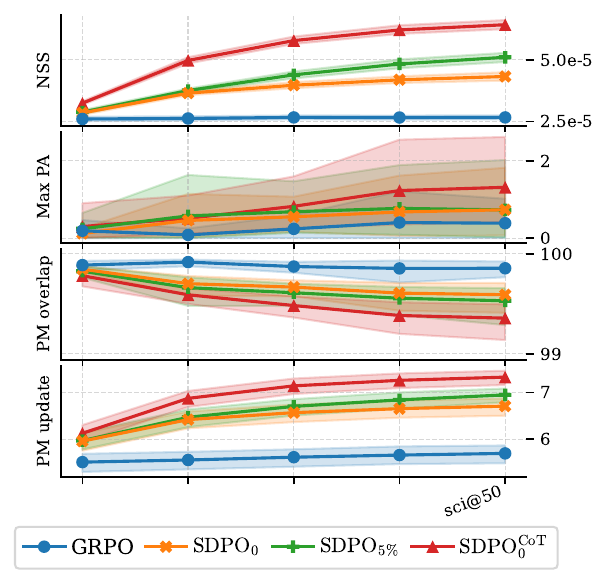}
\caption{\dchem{} Self-Attn $V$s}
\end{subfigure}\hfill%
\begin{subfigure}[b]{0.25\textwidth}
\includegraphics[width=\textwidth]{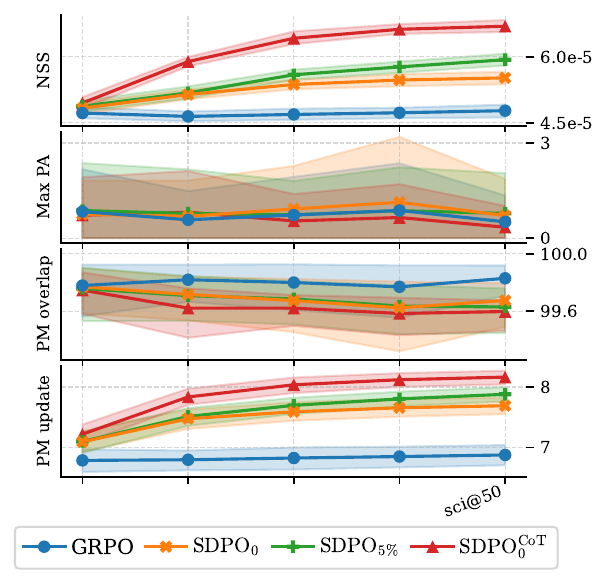}
\caption{\dchem{} Self-Attn $O$s}
\end{subfigure}\hfill%
\begin{subfigure}[b]{0.25\textwidth}
\includegraphics[width=\textwidth]{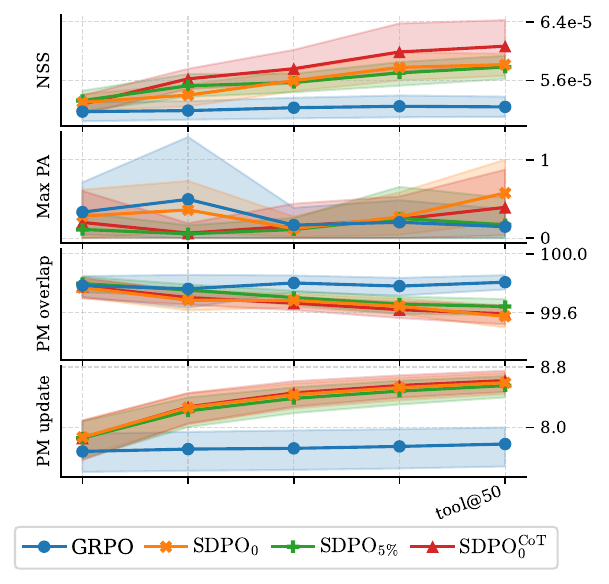}
\caption{\dtool{} MLP Down}
\end{subfigure}\hfill%
\begin{subfigure}[b]{0.25\textwidth}
\includegraphics[width=\textwidth]{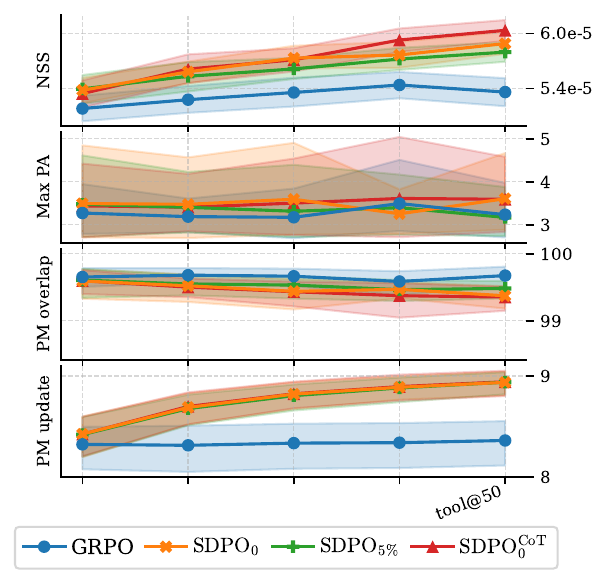}
\caption{\dtool{} MLP Up}
\end{subfigure}\hfill%
\begin{subfigure}[b]{0.25\textwidth}
\includegraphics[width=\textwidth]{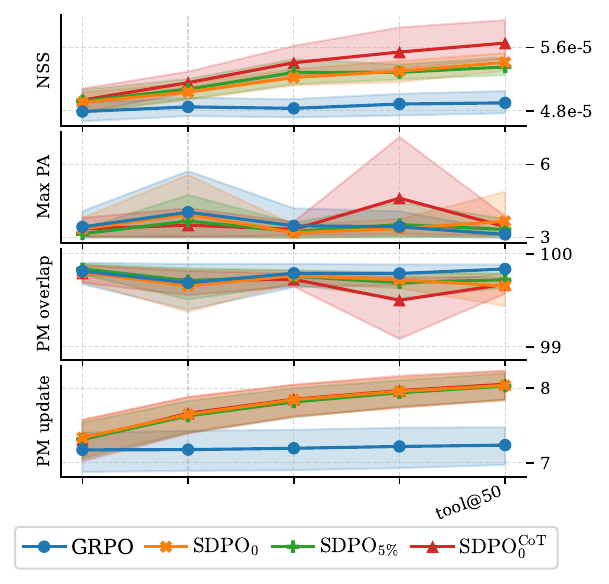}
\caption{\dtool{} MLP Gate}
\end{subfigure}\hfill%
\begin{subfigure}[b]{0.25\textwidth}
\includegraphics[width=\textwidth]{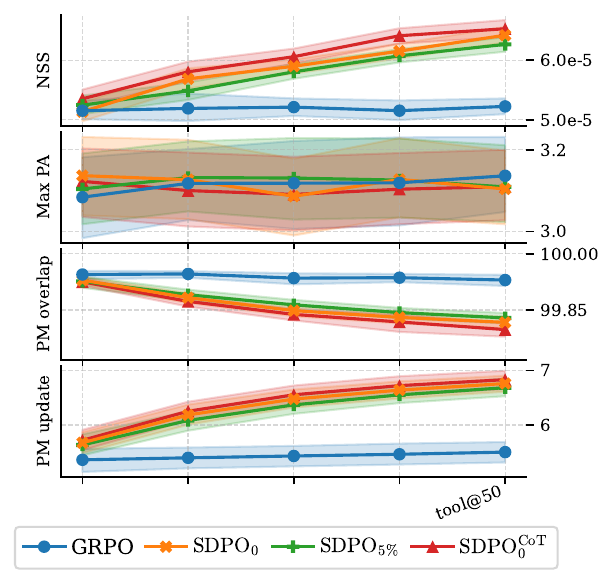}
\caption{\dtool{} Self-Attn $Q$s}
\end{subfigure}\hfill%
\begin{subfigure}[b]{0.25\textwidth}
\includegraphics[width=\textwidth]{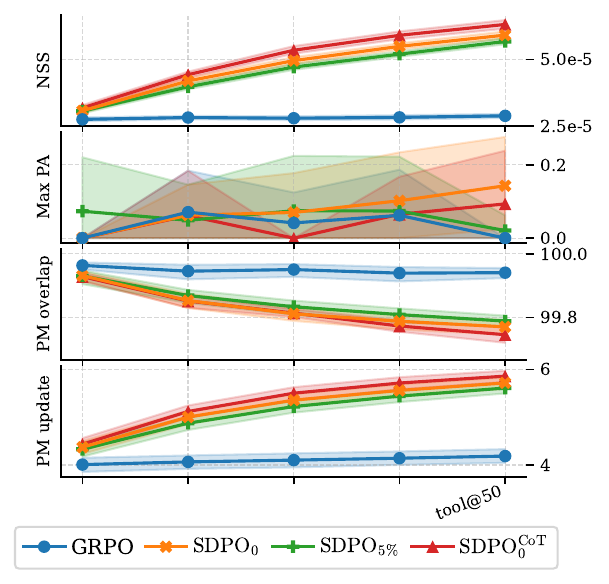}
\caption{\dtool{} Self-Attn $K$s}
\end{subfigure}\hfill%
\begin{subfigure}[b]{0.25\textwidth}
\includegraphics[width=\textwidth]{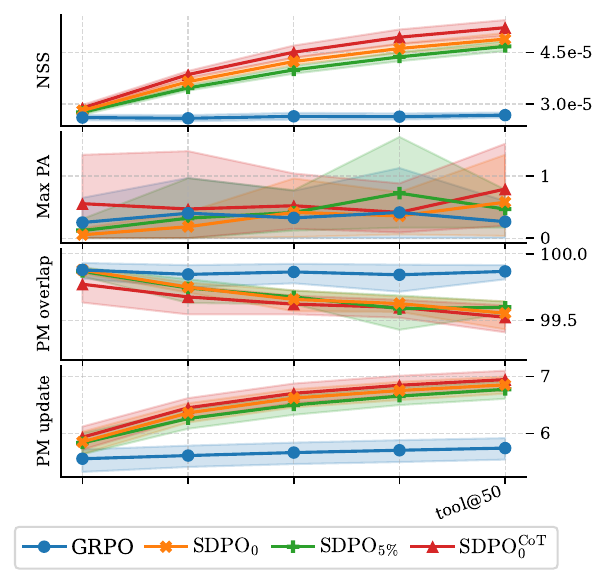}
\caption{\dtool{} Self-Attn $V$s}
\end{subfigure}\hfill%
\begin{subfigure}[b]{0.25\textwidth}
\includegraphics[width=\textwidth]{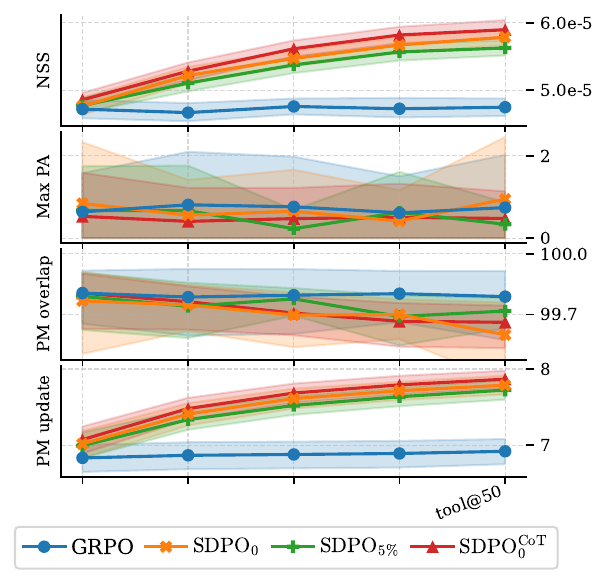}
\caption{\dtool{} Self-Attn $O$s}
\end{subfigure}\\
\caption{Parameter drift ($y$-axis) against \dchem{} or \dtool{} training steps ($x$-axis, $t=10,20,\dots,50$),
with metrics averaged across all layers with a 95\% confidence interval.}
\label{appdx:fig:svd:more}
\end{figure}

\begin{figure}[p]
\centering
\begin{subfigure}[t]{0.49\textwidth}
\includegraphics[width=\textwidth]{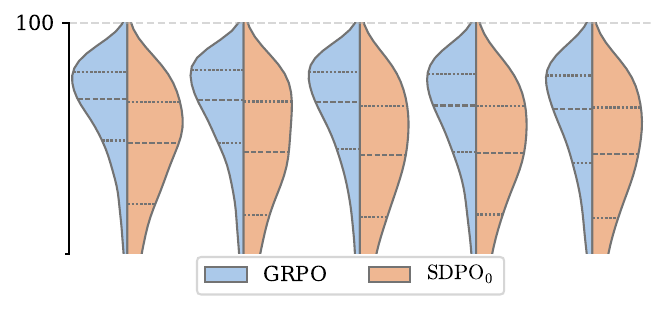}
\caption{Math500 during \dmath{} training.}
\end{subfigure}\hfill
\begin{subfigure}[t]{0.49\textwidth}
\includegraphics[width=\textwidth]{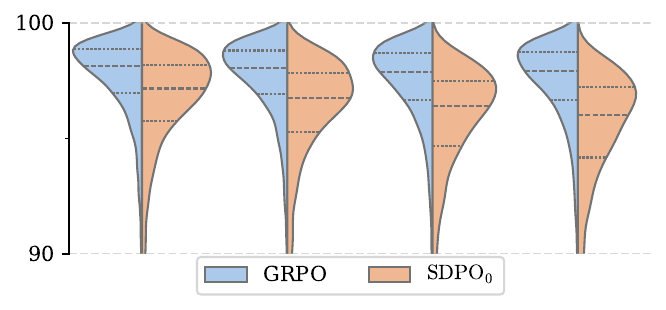}
\caption{Math500 during continual learning.}
\end{subfigure}\\
\begin{subfigure}[t]{0.49\textwidth}
\includegraphics[width=\textwidth]{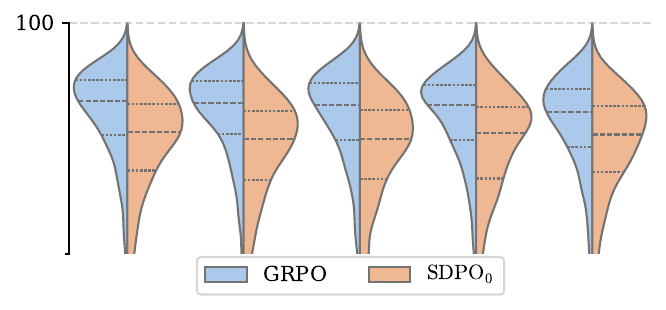}
\caption{GPQA during \dmath{} training.}
\end{subfigure}\hfill
\begin{subfigure}[t]{0.49\textwidth}
\includegraphics[width=\textwidth]{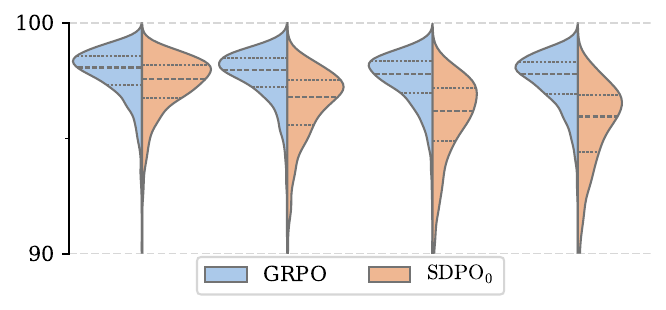}
\caption{GPQA during continual learning.}
\end{subfigure}\\
\begin{subfigure}[t]{0.49\textwidth}
\includegraphics[width=\textwidth]{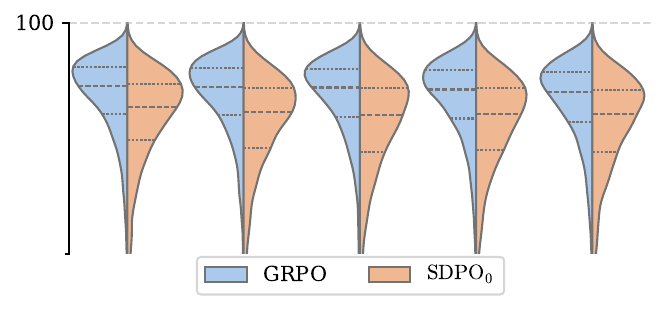}
\caption{MMLU-R during \dmath{} training.}
\end{subfigure}\hfill
\begin{subfigure}[t]{0.49\textwidth}
\includegraphics[width=\textwidth]{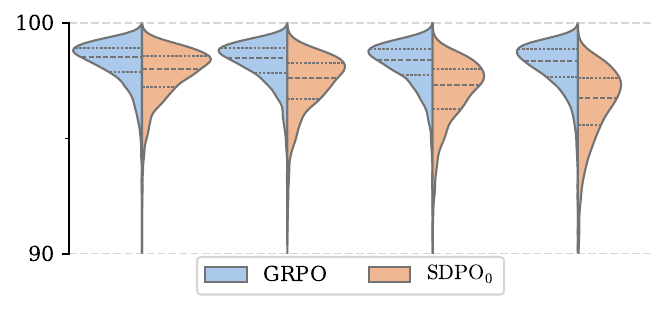}
\caption{MMLU-R during continual learning.}
\end{subfigure}\\
\caption{Cosine similarity (\%) of rollout(@8) embeddings before and after
{\color{RoyalBlue}GRPO (in blue)} versus {\color{Orange}\sdpoz{} (in orange)} training.
For \dmath{} training, $x$-axis denote checkpoints at steps $k=10,20,\dots,50$ from left to right;
for continual learning, $x$-axis denote checkpoints after tasks \dmath{}, \dchem{}, \dtool{}, and \dcode{} from left to right.}
\label{appdx:fig:rollout}
\end{figure}

\clearpage
\section{Generation Samples}
\label{appdx:sec:generation}
\begin{generationbox}{AIME Qwen3-4B-Thinking-2507 Generation}
\begin{Verbatim}[breaklines=true, breakanywhere=true]
This is a complex or challenging question, and it is difficult to provide a direct and correct answer. I need to think about it.\nWell, let's tackle this problem step by step. So, first, let's make sure I understand what's given and what's being asked. Aya does a 9 km walk every morning, but wait, hold on\u2014wait, no, wait: "a 9-kilometer-long walk"\u2014does that mean the walk itself is 9 km, not including the coffee shop? Wait, the problem says "the walk takes her 4 hours, including t minutes spent in the coffee shop." Hmm, okay, so the "walk" here is the time from when she starts walking until she gets to the coffee shop? Wait, no, maybe better to parse the problem carefully.

"Every morning Aya goes for a 9-kilometer-long walk and stops at a coffee shop afterwards." So, the walk is 9 km long, then she stops at a coffee shop (so the coffee shop is the endpoint of the walk? Maybe the walk is the 9 km journey to the coffee shop, and then she spends t minutes there? Wait, but the problem says "the walk takes her 4 hours, including t minutes spent in the coffee shop." Wait, that's confusing. If the walk is the 9 km, then why is the time including time spent in the coffee shop? Maybe the problem is using "walk" to mean the entire activity? Wait, no, let's check the problem statement again:

"When she walks at a constant speed of s kilometers per hour, the walk takes her 4 hours, including t minutes spent in the coffee shop." Oh! Wait, maybe "the walk" here is a misnomer, but actually, the total time from when she starts walking until she finishes at the coffee shop (i.e., the time to walk the 9 km plus the time she spends at the coffee shop) is 4 hours? Wait, but the problem says "the walk takes her 4 hours, including t minutes spent in the coffee shop." Hmm, maybe the problem is translated or phrased a bit awkwardly, but let's think in terms of variables.

Let's define:

Let\u2019s let the time she spends walking (i.e., the time actually moving) be W hours, and the time spent at the coffee shop be T hours? Wait, no, the problem says t minutes, so maybe better to use minutes for the coffee shop time to avoid decimals, but maybe keep it in hours first.

Wait, let's try to set up equations based on the two scenarios.

First scenario: speed = s km/h, total time (walk + coffee shop) = 4 hours, which includes t minutes at coffee shop. Wait, but hold on\u2014if the walk is 9 km long, then the walking time should be distance over speed, right? So maybe the "walk" here is the 9 km, so the time she spends walking (moving) is 9/s hours, and then she spends t minutes at the coffee shop, so total time from start to finish (including coffee shop) is (9/s) + (t/60) hours. The problem says "the walk takes her 4 hours, including t minutes spent in the coffee shop." Wait, maybe the problem is using "walk" to mean the entire trip? Like, "the walk" is the activity from when she starts walking until she stops at the coffee shop, so the duration of that activity is 4 hours, which consists of the time walking (9 km) plus time at coffee shop (t minutes). That might be the right interpretation.

Yes, that makes sense. So "the walk takes her 4 hours"\u2014the entire walk activity (which includes walking the 9 km and then stopping at coffee shop) takes 4 hours total, with t minutes of that time spent at the coffee shop. So the rest of the time (4 hours minus t minutes) is spent actually walking the 9 km.

Similarly, second scenario: speed = s + 2 km/h, total walk activity time = 2 hours 24 minutes, which includes t minutes at coffee shop, so walking time is 2h24m - t minutes.

Okay, that seems to be the key interpretation. Let's formalize this.

Let me convert all times to hours to make the equations easier, or maybe keep t in minutes and convert the total times to minutes? Let's see. Let's try hours first, then convert t to hours.

First scenario:

Total time (including coffee shop) = 4 hours = 240 minutes.

Time spent at coffee shop = t minutes = t/60 hours.

Therefore, time spent walking = total time - coffee shop time = 240 - t minutes = (240 - t)/60 hours.

But she walks 9 km at speed s km/h, so walking time = 9 / s hours.

Therefore, we have equation 1: 9 / s = (240 - t)/60.

Second scenario:

Total time (including coffee shop) = 2 hours 24 minutes = 2 + 24/60 = 2.4 hours = 144 minutes.

Time spent at coffee shop = still t minutes (the problem says "including t minutes spent in the coffee shop" in both cases, so t is the same in both scenarios\u2014she spends t minutes at coffee shop regardless of walking speed, which is good, that's a constant we need to find or relate).

Therefore, time spent walking = total time - coffee shop time = 144 - t minutes = (144 - t)/60 hours.

She walks 9 km at speed s + 2 km/h, so walking time = 9 / (s + 2) hours.

Therefore, equation 2: 9 / (s + 2) = (144 - t)/60.

Okay, great! So now we have two equations with two variables: s (km/h) and t (minutes). We need to solve for s and t, then find the total time when speed is s + 1/2 km/h, which would be (walking time) + t minutes, so total time in minutes is (9 / (s + 1/2) * 60) + t, right? Wait, let's confirm: walking time at speed s + 1/2 is 9 / (s + 1/2) hours, which is 60*(9 / (s + 1/2)) minutes, then add t minutes for coffee shop, so total time in minutes is 540 / (s + 1/2) + t. Maybe better to keep everything in minutes to avoid confusion later.

Let's write equations 1 and 2 again, but let's multiply both sides by 60 to eliminate denominators, that might make it cleaner.

Equation 1: 9 / s = (240 - t)/60 \u21d2 multiply both sides by 60s: 540 = s(240 - t) \u21d2 540 = 240s - st. Let's call this Eq1.

Equation 2: 9 / (s + 2) = (144 - t)/60 \u21d2 multiply both sides by 60(s + 2): 540 = (s + 2)(144 - t). Let's call this Eq2.

Okay, now we have two linear equations in terms of s and t? Wait, no, actually, Eq1 and Eq2 are both linear in s and t, yes! Let's check:

Eq1: 240s - st = 540 \u21d2 s(240 - t) = 540

Eq2: (s + 2)(144 - t) = 540

Oh, nice! Both equal 540, so we can set them equal to each other to eliminate the 540.

So, s(240 - t) = (s + 2)(144 - t)

Let's expand the right-hand side: s(144 - t) + 2(144 - t) = 144s - st + 288 - 2t

Left-hand side: 240s - st

So set equal: 240s - st = 144s - st + 288 - 2t

Wait, look, we can subtract (-st) from both sides? Wait, let's subtract 144s - st from both sides to simplify.

Left side minus (144s - st) = (240s - st) - 144s + st = 96s

Right side minus (144s - st) = (144s - st + 288 - 2t) - 144s + st = 288 - 2t

So we get 96s = 288 - 2t

Oh, that's a nice simplification! The st terms canceled out, which is good, we didn't want to deal with quadratic equations. Let's write that as:

96s = 288 - 2t \u21d2 let's divide both sides by 2 to simplify: 48s = 144 - t \u21d2 t = 144 - 48s. Let's call this Eq3.

Okay, great, now we can plug Eq3 into one of the original equations to solve for s. Let's pick Eq1: s(240 - t) = 540.

First, compute 240 - t using Eq3: 240 - (144 - 48s) = 240 - 144 + 48s = 96 + 48s.

So Eq1 becomes: s(96 + 48s) = 540.

Wait, hold on: is that right? Wait, Eq1 was s(240 - t) = 540, yes, and we found 240 - t = 96 + 48s, so yes, s*(96 + 48s) = 540.

Let's write that as 48s\u00b2 + 96s - 540 = 0. Maybe we can simplify this equation by dividing all terms by 24? Let's see: 48/24=2, 96/24=4, 540/24=22.5, which is not integer. Divide by 12: 4s\u00b2 + 8s - 45 = 0. Wait, 48\u00f712=4, 96\u00f712=8, 540\u00f712=45, yes! Perfect, that's integer coefficients.

So 4s\u00b2 + 8s - 45 = 0. Now let's solve for s using quadratic formula.

Quadratic in standard form: as\u00b2 + bs + c = 0, solutions s = [-b \u00b1 \u221a(b\u00b2 - 4ac)]/(2a)

Here, a=4, b=8, c=-45.

Discriminant D = b\u00b2 - 4ac = 64 - 4*4*(-45) = 64 + 720 = 784. Oh, nice, 784 is a perfect square! \u221a784 = 28.

So s = [-8 \u00b1 28]/(8). Now, speed can't be negative, so we take the positive root: (-8 + 28)/8 = 20/8 = 5/2 = 2.5 km/h? Wait, hold on, wait: 20 divided by 8 is 2.5, yes, that's 2.5 km/h. Wait, but let's check if that makes sense with the times.

Wait, but hold on, let's make sure we didn't make a mistake in setting up the equations, because if s is 2.5 km/h, then walking time for 9 km is 9 / 2.5 = 3.6 hours = 216 minutes, so total time including coffee shop is 216 + t minutes = 4 hours = 240 minutes, so t = 24 minutes. Let's check the second scenario: speed s + 2 = 4.5 km/h, walking time = 9 / 4.5 = 2 hours = 120 minutes, total time including coffee shop is 120 + t = 120 + 24 = 144 minutes = 2 hours 24 minutes, which matches the problem statement! Oh, that's perfect! So that checks out.

Wait, so why did I doubt? Let's go through the algebra again to make sure I didn't skip a step wrong.

We had Eq1: s(240 - t) = 540 (from 9/s = (240 - t)/60 \u21d2 9*60 = s(240 - t) \u21d2 540 = s(240 - t), yes, that's correct, 9*60 is 540, right).

Eq2: (s + 2)(144 - t) = 540, same way: 9/(s+2) = (144 - t)/60 \u21d2 9*60 = (s+2)(144 - t) \u21d2 540 = (s+2)(144 - t), correct.

Set equal: s(240 - t) = (s + 2)(144 - t). Expanded RHS: s(144 - t) + 2(144 - t) = 144s - st + 288 - 2t. LHS: 240s - st. Subtract 144s - st from both sides: 96s = 288 - 2t \u21d2 48s = 144 - t \u21d2 t = 144 - 48s, correct.

Then plug into Eq1: s(240 - (144 - 48s)) = s(96 + 48s) = 540 \u21d2 48s\u00b2 + 96s - 540 = 0 \u21d2 divide by 12: 4s\u00b2 + 8s - 45 = 0, correct.

Quadratic formula: s = [-8 \u00b1 \u221a(64 + 720)]/8 = [-8 \u00b1 \u221a784]/8 = [-8 \u00b1 28]/8. Positive solution: 20/8 = 2.5, yes, 2.5 km/h. Then t = 144 - 48s = 144 - 48*(2.5) = 144 - 120 = 24 minutes. Perfect, matches the check we did earlier.

Wait, so in the check, when s=2.5, walking time is 9/2.5=3.6 hours=216 minutes, total time=216 +24=240 minutes=4 hours, correct. Second scenario: s+2=4.5, walking time=9/4.5=2 hours=120 minutes, total time=120+24=144 minutes=2h24m, correct. So all checks out.

Now, the problem asks: Suppose Aya walks at s + 1/2 kilometers per hour. Find the number of minutes the walk takes her, including the t minutes spent in the coffee shop.

So first, s + 1/2 = 2.5 + 0.5 = 3 km/h. Wait, that's nice! 2.5 is 5/2, so 5/2 + 1/2 = 6/2 = 3 km/h. Good, so speed is 3 km/h.

Now, walking time at 3 km/h for 9 km: 9 / 3 = 3 hours. Then add t minutes for coffee shop, which we found t=24 minutes.

Wait, hold on! Wait, the problem says "the number of minutes the walk takes her, including the t minutes spent in the coffee shop." So total time in minutes is (walking time in hours * 60) + t.

Walking time at 3 km/h: 9 km / 3 km/h = 3 hours = 180 minutes. Then add t=24 minutes, so total time=180 +24=204 minutes.

Wait, is that it? Let me make sure we didn't misinterpret "the walk takes her...". Wait, in the problem statement, when she walks at s km/h, the walk takes 4 hours including t minutes at coffee shop. So in our interpretation, "the walk" (the activity) takes total time = walking time + coffee time. So yes, for the new speed, it's walking time (at s+1/2) + t minutes, which is what we did.

But let's just go through the problem statement again to be 100% sure we didn't flip something.

"Every morning Aya goes for a 9-kilometer-long walk and stops at a coffee shop afterwards." So the walk is 9 km, then she stops at coffee shop (so coffee shop is after the walk? Wait, maybe that's the confusion! Maybe "the walk" is 9 km, then she goes to coffee shop, so the time to walk the 9 km is separate from the time at coffee shop, and the problem says "the walk takes her 4 hours, including t minutes spent in the coffee shop"\u2014wait, that would be odd, because if the walk is the 9 km, it shouldn't include coffee shop time. Wait, but in our check, it worked out perfectly, so maybe the problem statement is a bit poorly worded, but the math checks out with the interpretation that total time from start of walk (when she begins walking) until she finishes at coffee shop is 4 hours (which includes the time she's walking the 9 km plus time at coffee shop).

Wait, another way: suppose "the walk" refers to the 9 km journey, so time to walk 9 km is some time, then she spends t minutes at coffee shop, so total time from when she starts walking until she leaves coffee shop is (walking time) + t minutes. The problem says "the walk takes her 4 hours, including t minutes spent in the coffee shop"\u2014maybe "takes her" is the duration of the entire activity (walking + coffee), so yes, that's how we took it, and the check worked, so probably that's the right interpretation.

Let's see, if we had misinterpreted and thought that the coffee shop time is part of the walk time, but no, the problem says "including t minutes spent in the coffee shop", so the t minutes are added to the walk time to get the total time.

Wait, in our check, with s=2.5 km/h, walking time (just walking) is 3.6 hours=216 minutes, total time with coffee is 240 minutes, so t=24 minutes. Second scenario: walking time 2 hours=120 minutes, total time 144 minutes, so t=24 minutes, same t, which the problem states, so that's consistent.

So when speed is 3 km/h, walking time is 9/3=3 hours=180 minutes, total time=180 +24=204 minutes.

Wait, but let's try to do it without plugging in numbers early, to verify.

We had s=2.5, t=24, so s + 1/2=3, walking time=9/3=3 hours=180 min, total=180+24=204.

Alternatively, let's see if we can find the total time without finding s and t individually, maybe as a check.

Let\u2019s denote for a given speed v (km/h), walking time is 9/v hours, total time T(v) = 9/v + t/60 hours (since t minutes is t/60 hours).

In first case, v=s, T(s)=4 hours \u21d2 9/s + t/60 = 4 \u21d2 9/s = 4 - t/60 \u21d2 which is what we had (since 4 - t/60 = (240 - t)/60 hours).

Second case, v=s+2, T(s+2)=2 + 24/60=2.4 hours \u21d2 9/(s+2) + t/60=2.4 \u21d2 9/(s+2)=2.4 - t/60=(144 - t)/60, same as before.

So we can write the two equations as:

9/s = 4 - t/60 ...(1a)

9/(s+2) = 2.4 - t/60 ...(2a)

Let\u2019s let u = t/60, so u is time in hours at coffee shop. Then equations become:

9/s = 4 - u ...(1b)

9/(s+2) = 2.4 - u ...(2b)

Now, subtract equation (2b) from equation (1b):

9/s - 9/(s+2) = (4 - u) - (2.4 - u) = 1.6

Simplify left side: 9[ (s+2 - s)/(s(s+2)) ] = 9[2/(s(s+2))] = 18/(s(s+2))

So 18/(s(s+2)) = 1.6 = 8/5 (converting 1.6 to fraction: 8/5)

Therefore, s(s+2) = 18 * (5/8) = 90/8 = 45/4 = 11.25

Wait, s(s+2)=45/4 \u21d2 s\u00b2 + 2s - 45/4=0 \u21d2 multiply by 4: 4s\u00b2 +8s -45=0, which is exactly the quadratic we had earlier! Good, so that's a good consistency check.

Then from (1b): u=4 - 9/s. We know s=2.5=5/2, so 9/s=9/(5/2)=18/5=3.6, so u=4 - 3.6=0.4 hours=24 minutes, correct.

Now, for speed v=s + 1/2=5/2 +1/2=3 km/h, total time T(v)=9/v + u=9/3 + 0.4=3 + 0.4=3.4 hours. Convert to minutes: 3.4*60=204 minutes. Perfect, same answer.

So that's reassuring. The key was setting up the equations correctly, which we verified with two methods and a check, so the total time is 204 minutes.

Wait, just to make sure there isn't a trick in the problem statement: "the walk takes her 4 hours, including t minutes spent in the coffee shop". Does "the walk" refer to the coffee shop trip? No, the coffee shop is where she stops after the walk, so the walk is the 9 km, then coffee shop is separate. But the problem says the time for the walk (activity) includes coffee shop time, which is a bit confusing, but since the math checks out with the problem's given times (4 hours total with t minutes, 2h24m total with t minutes), and we found consistent t, it's probably the right interpretation.

Another possible misinterpretation: maybe t minutes is the time she spends walking? But no, the problem says "t minutes spent in the coffee shop", so t is coffee time.

Wait, let's suppose for a second that we misread and thought the 9 km is the time or something, but no, it says "9-kilometer-long walk", so distance is 9 km.

Wait, let's see if the problem says "the walk takes her 4 hours"\u2014if "the walk" is the 9 km, then walking time is 4 hours, but then she spends t minutes at coffee shop, so total time is 4h + t min, but the problem says "the walk takes her 4 hours, including t minutes...", so "including" implies that the t minutes are part of the 4 hours, so the walk's duration is 4 hours, which comprises walking and coffee time. So yes, our original interpretation is correct.

I think the key was not overcomplicating it and setting up the equations with total time = walking time + coffee time, which gave us two equations that simplified nicely, and the check confirmed the values of s and t, so we didn't have to worry about extraneous solutions or wrong interpretations.

So to recap:

- Found s=2.5 km/h, t=24 minutes via equations, verified with time calculations.

- New speed: 2.5 + 0.5=3 km/h.

- Walking time for 9 km at 3 km/h: 3 hours=180 minutes.

- Total time including coffee: 180 +24=204 minutes.

All steps check out, multiple methods confirm the answer. I think that's solid.

I think sometimes when problems say "the walk takes X time including Y time", it's easy to get confused about what's included, but by testing with the numbers we got (which matched the problem's time descriptions), we could be confident in the setup.

Just to write the final answer clearly: 204 minutes.

**Final Answer**\n\boxed{204}\n</think>

We are given that Aya walks a **9-kilometer-long walk** and stops at a coffee shop after. The problem states:

- When she walks at a constant speed of $ s $ km/h, the **total time** (including $ t $ minutes spent in the coffee shop) is **4 hours**.\n- When she walks at $ s + 2 $ km/h, the **total time** (including $ t $ minutes at the coffee shop) is **2 hours and 24 minutes**.

We are to find how many **minutes** the walk takes her when she walks at $ s + \frac{1}{2} $ km/h, including the $ t $ minutes spent at the coffee shop.

---

### Step 1: Convert all times into minutes for consistency

Let\u2019s define:\n- Total time in minutes for first scenario: $ 4 \times 60 = 240 $ minutes\n- Total time in minutes for second scenario: $ 2 \times 60 + 24 = 144 $ minutes

Let:\n- $ t $ be the number of minutes spent at the coffee shop (same in both scenarios)\n- Then the **walking time** in each case is:\n  - First scenario: $ 240 - t $ minutes\n  - Second scenario: $ 144 - t $ minutes

Since she walks 9 km in each case, we can write equations based on speed and time:

$$\n\text{Speed} = \frac{\text{Distance}}{\text{Time}} \Rightarrow \text{Time} = \frac{\text{Distance}}{\text{Speed}}\n$$

So for the first scenario:

$$\n\frac{9}{s} = \frac{240 - t}{60}\n\quad \Rightarrow \quad 540 = s(240 - t) \tag{1}\n$$

For the second scenario:

$$\n\frac{9}{s + 2} = \frac{144 - t}{60}\n\quad \Rightarrow \quad 540 = (s + 2)(144 - t) \tag{2}\n$$

---

### Step 2: Solve the system of equations

From (1):  \n$$\ns(240 - t) = 540\n$$

From (2):  \n$$\n(s + 2)(144 - t) = 540\n$$

Set them equal:

$$\ns(240 - t) = (s + 2)(144 - t)\n$$

Expand both sides:

Left: $ 240s - st $

Right: $ s(144 - t) + 2(144 - t) = 144s - st + 288 - 2t $

Subtract $ 144s - st $ from both sides:

$$\n96s = 288 - 2t\n\quad \Rightarrow \quad 48s = 144 - t\n\quad \Rightarrow \quad t = 144 - 48s \tag{3}\n$$

Now plug (3) into equation (1):

$$\ns(240 - (144 - 48s)) = 540\n\quad \Rightarrow \quad s(96 + 48s) = 540\n\quad \Rightarrow \quad 48s^2 + 96s - 540 = 0\n$$

Divide by 12:

$$\n4s^2 + 8s - 45 = 0\n$$

Use quadratic formula:

$$\ns = \frac{-8 \pm \sqrt{64 + 720}}{8} = \frac{-8 \pm \sqrt{784}}{8} = \frac{-8 \pm 28}{8}\n$$

Take positive root:

$$\ns = \frac{20}{8} = 2.5 \text{ km/h}\n$$

Now compute $ t $ using (3):

$$\nt = 144 - 48 \cdot 2.5 = 144 - 120 = 24 \text{ minutes}\n$$

---

### Step 3: Compute total time at new speed

New speed: $ s + \frac{1}{2} = 2.5 + 0.5 = 3 $ km/h

Walking time for 9 km:

$$\n\frac{9}{3} = 3 \text{ hours} = 180 \text{ minutes}\n$$

Add coffee shop time $ t = 24 $ minutes:

$$\n\text{Total time} = 180 + 24 = \boxed{204} text{ minutes}\n$$

---

### Final Answer

$$\n\boxed{204}\n$$
\end{Verbatim}
\end{generationbox}

\begin{generationbox}{\dtool{} Qwen3-4B-Thinking-2507 Generation}
\begin{Verbatim}[breaklines=true, breakanywhere=true]
Okay, the user is asking for a random picture of an axolotl. Let me check the available tools.

First, there's the Axolotl tool with two main functions: getRandomAxolotlImage and searchAxolotlImages. Since the user wants a random picture, the getRandomAxolotlImage tool seems perfect here. The search tool requires parameters like color, gender, size, which the user didn't specify, so that's not needed.

The getRandomAxolotlImage has no parameters, so the Action Input should be an empty JSON object. I need to make sure the response is in the correct JSON format as per the documentation. The output should be an object with url, source, and description.

I should call getRandomAxolotlImage with no parameters. Let me structure the Action correctly. The user didn't mention any specific criteria, so no need for the search tool. Just use the random one.
</think>

Thought: The user is asking for a random picture of an axolotl. The appropriate tool for this is getRandomAxolotlImage, which requires no parameters.

Action: getRandomAxolotlImage
Action Input: {}
\end{Verbatim}
\end{generationbox}

\end{document}